\documentclass[10pt]{article} 
\usepackage[preprint]{tmlr}


\usepackage{amsmath,amsfonts,bm}









\def\eqref#1{equation~\ref{#1}}









\def\1{\bm{1}}










\DeclareMathAlphabet{\mathsfit}{\encodingdefault}{\sfdefault}{m}{sl}
\SetMathAlphabet{\mathsfit}{bold}{\encodingdefault}{\sfdefault}{bx}{n}











\newcommand{\E}{\mathbb{E}}

\newcommand{\R}{\mathbb{R}}

\newcommand{\Var}{\mathrm{Var}}



\usepackage{tablefootnote}
\usepackage{amsmath}
\usepackage{amssymb}
\usepackage{mathtools}
\usepackage{amsthm}
\allowdisplaybreaks
\usepackage{url}
\usepackage{hyperref}
\usepackage{microtype}
\usepackage{graphicx}
\usepackage{booktabs} 
\usepackage{multirow}
\usepackage{pbox}
\usepackage{algorithm}
\let\classAND\AND
\let\AND\relax
\usepackage{algorithmic}

\let\AND\classAND
\AtBeginEnvironment{algorithmic}{\let\AND\algoAND}
\usepackage{caption,subcaption}
\usepackage{fancyvrb}
\usepackage{wrapfig}
\usepackage{nicefrac}
\usepackage{enumitem}
\usepackage{xcolor}

\title{$k$-Mixup Regularization for Deep Learning via Optimal \\Transport}


\author{\name Kristjan Greenewald \email kristjan.h.greenewald@ibm.com \\
      \addr MIT-IBM Watson AI Lab, IBM Research
      \AND
      \name Anming Gu \email agu2002@bu.edu \\
      \addr Boston University
      \AND
      \name Mikhail Yurochkin \email mikhail.yurochkin@ibm.com\\
      \addr MIT-IBM Watson AI Lab, IBM Research
      \AND
      \name Justin Solomon \email jsolomon@mit.edu\\
      \addr Massachusetts Institute of Technology
      \AND
      \name Edward Chien \email edchien@bu.edu\\
      \addr Boston University
      }


\newcommand{\kmixup}[1]{\mathcal{E}^{mix}_{#1}}
\newcommand{\mansupp}{\mathcal{S}}

\DeclarePairedDelimiter\abs{\lvert}{\rvert}


\theoremstyle{plain}
\newtheorem{theorem}{Theorem}[section]
\newtheorem{proposition}[theorem]{Proposition}
\newtheorem{lemma}[theorem]{Lemma}

\theoremstyle{definition}
\newtheorem{definition}[theorem]{Definition}

\theoremstyle{remark}

\usepackage{flushend}

\begin{document}

\maketitle

\begin{abstract}
Mixup is a popular regularization technique for training deep neural networks that improves generalization and increases robustness to certain distribution shifts. It perturbs input training data in the direction of other randomly-chosen instances in the training set. To better leverage the structure of the data, we extend mixup in a simple, broadly applicable way to \emph{$k$-mixup}, which perturbs $k$-batches of training points in the direction of other $k$-batches. The perturbation is done with displacement interpolation, i.e.\ interpolation under the Wasserstein metric. We demonstrate theoretically and in simulations that $k$-mixup preserves cluster and manifold structures, and we extend theory studying the efficacy of standard mixup to the $k$-mixup case. Our empirical results show that training with $k$-mixup further improves generalization and robustness across several network architectures and benchmark datasets of differing modalities. 
For the wide variety of real datasets considered, the performance gains of $k$-mixup over standard mixup are similar to or larger than the gains of mixup itself over standard ERM after hyperparameter optimization. In several instances, in fact, $k$-mixup achieves gains in settings where standard mixup has negligible to zero improvement over ERM.

\end{abstract}

\section{Introduction}
\label{sec:intro}

Standard mixup \citep{zhang_mixup_2018} is a data augmentation approach that trains models on weighted averages of random pairs of training points. Averaging weights are typically drawn from a beta distribution $\beta (\alpha, \alpha)$, with parameter $\alpha$ such that the generated training set is \emph{vicinal}, i.e., it does not stray too far from the original dataset. Perturbations generated by mixup may be in the direction of \emph{any} other data point instead of being informed by local distributional structure. 
As shown 
in Figure \ref{fig:toyEgs}, this property is a key weakness of mixup that can lead to poor regularization when distributions are clustered or supported on a manifold. With larger $\alpha$, the procedure can result in averaged training points with incorrect labels in other clusters or in locations that stray far from the data manifold.

\begin{figure}[h] 
\begin{center}
$\underbrace{
\fbox{\includegraphics[scale=0.085]{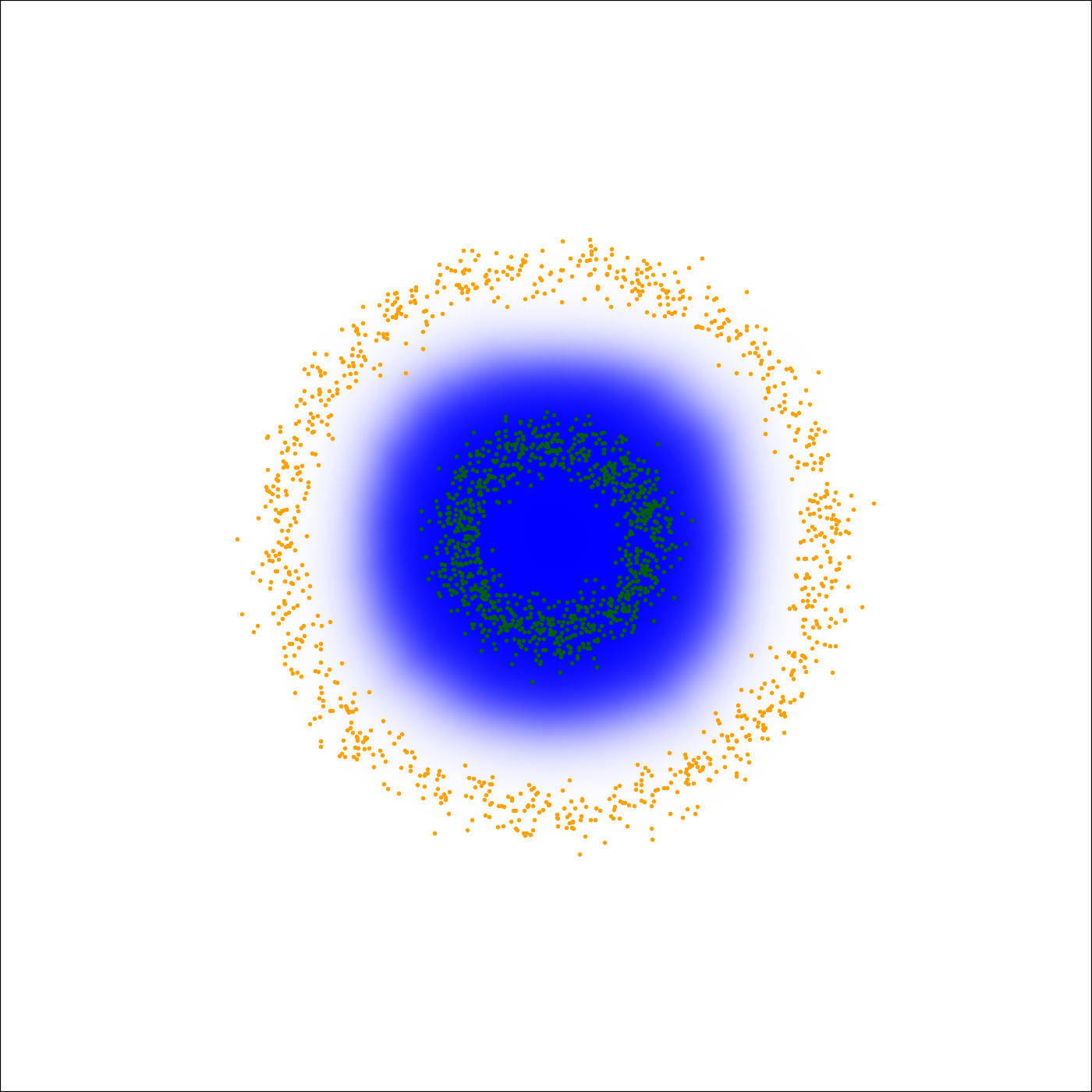}} 
\fbox{\includegraphics[scale=0.085]{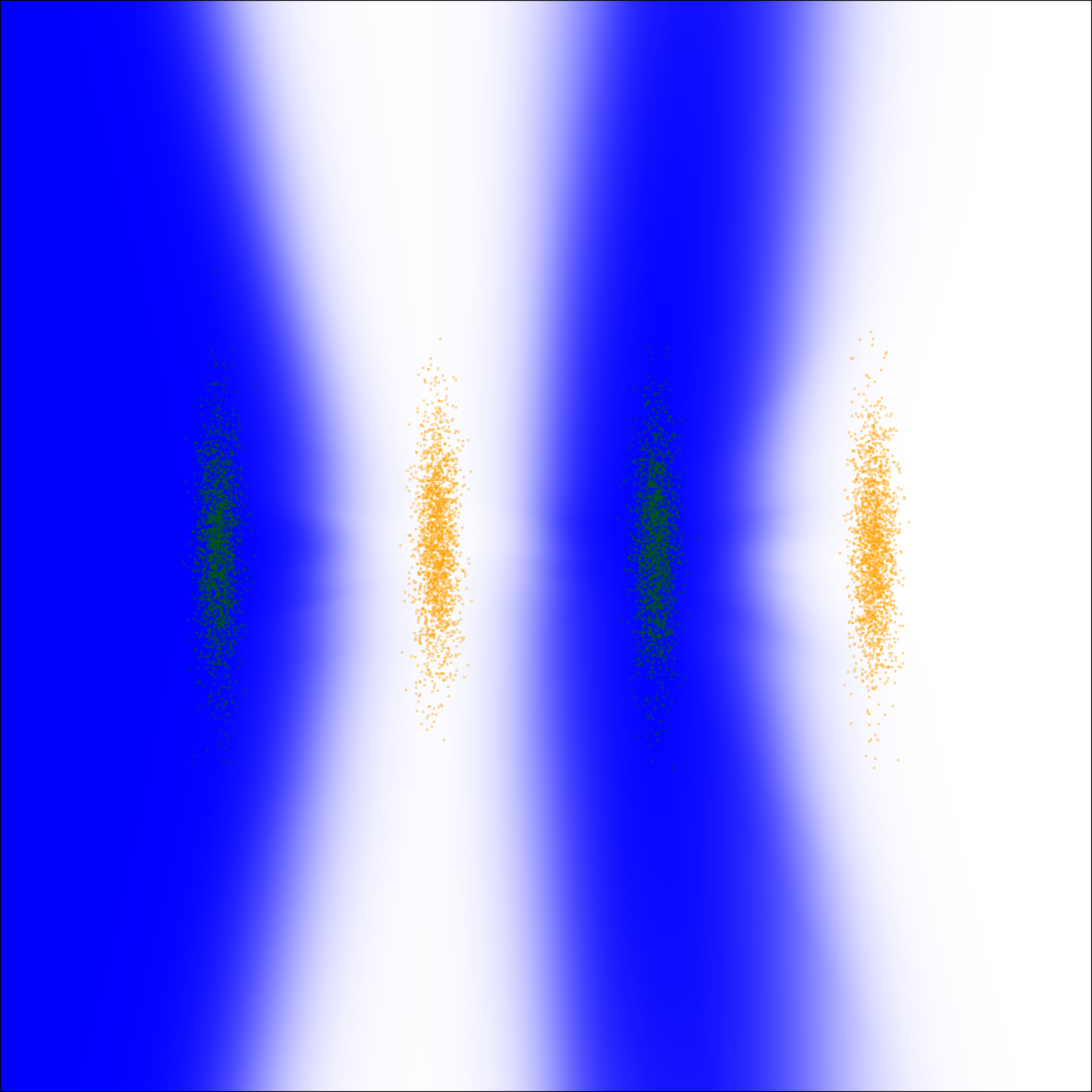}}
\fbox{\includegraphics[scale=0.085]{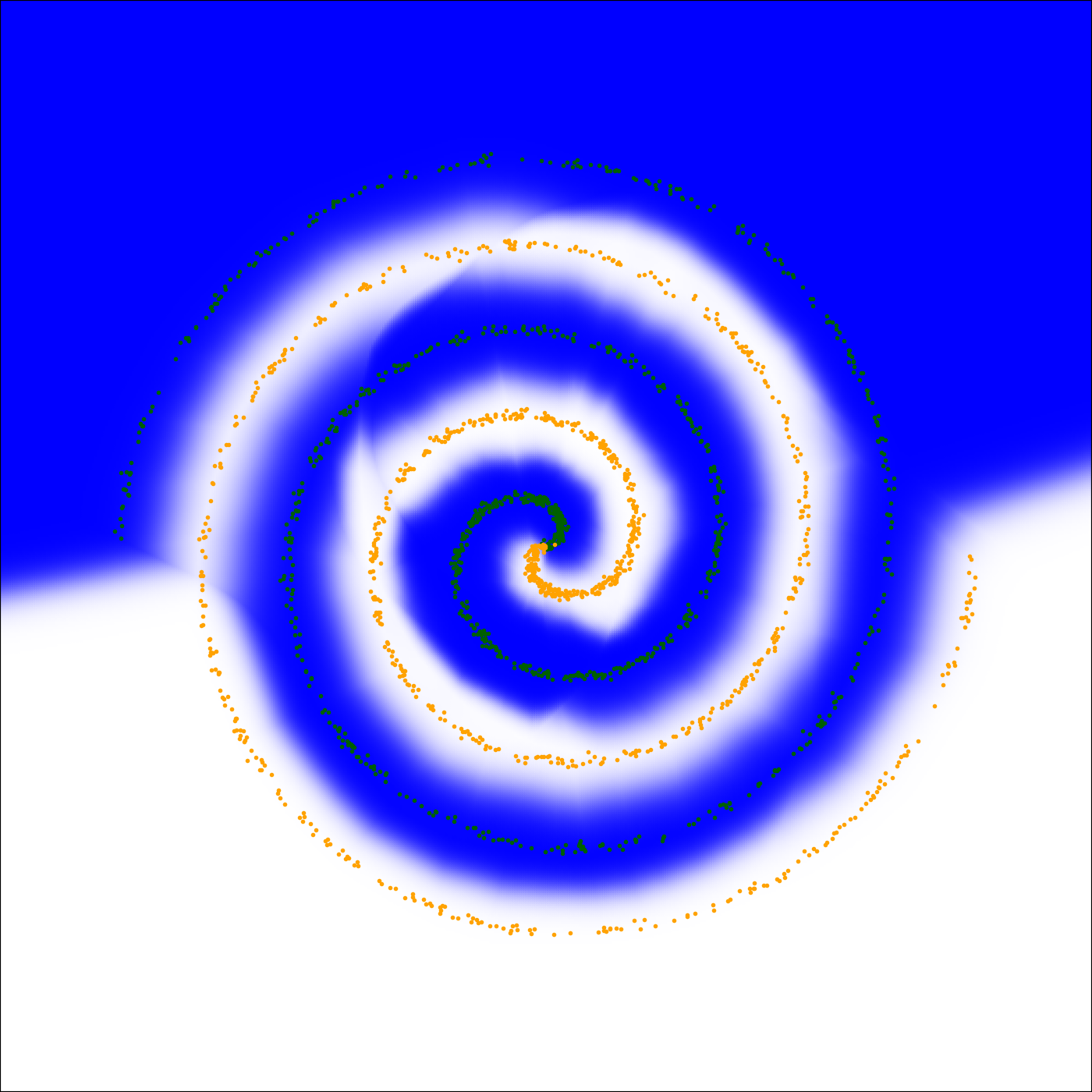}}
}_{\textrm{No mixup}}$
$\underbrace{
\fbox{\includegraphics[scale=0.085]{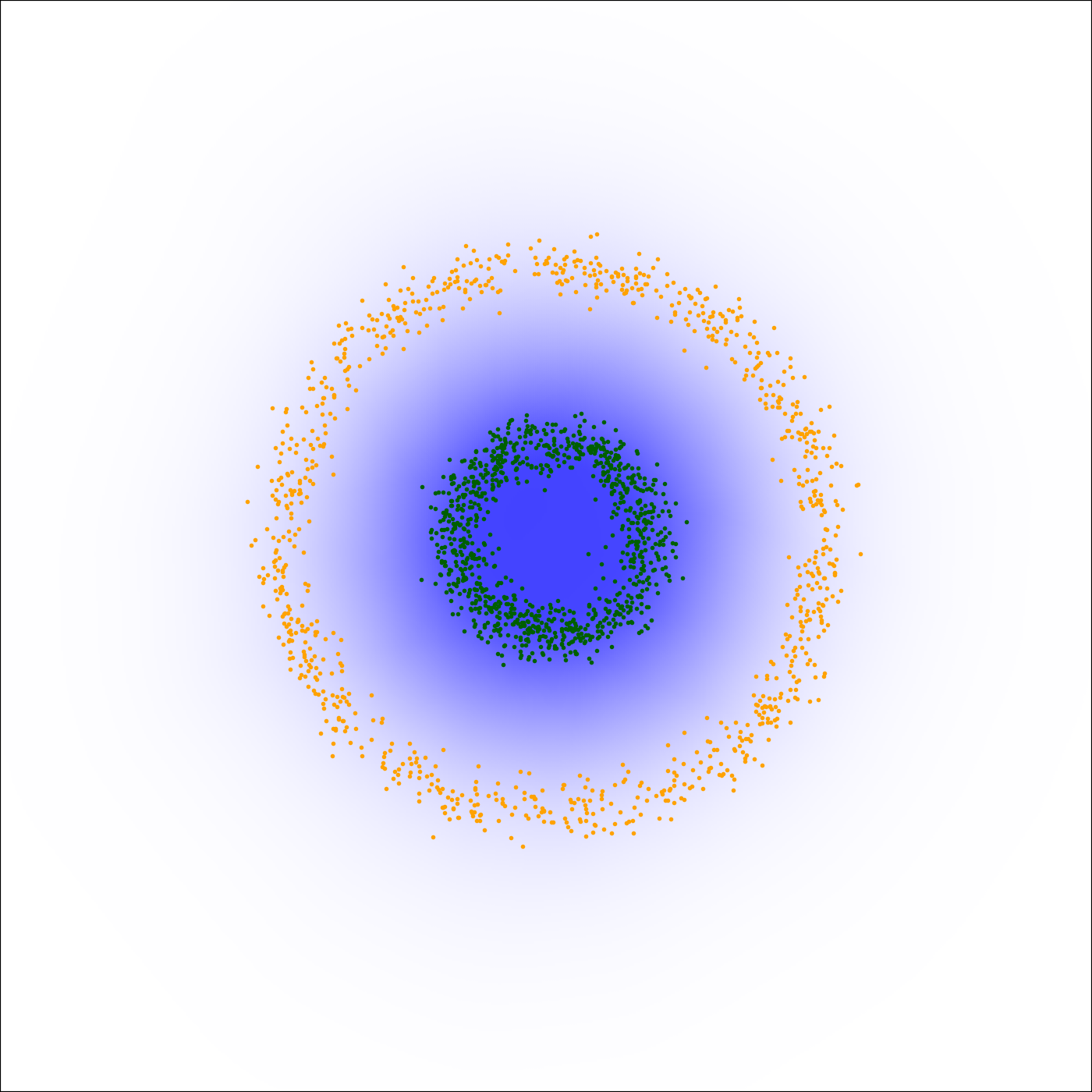}} 
\fbox{\includegraphics[scale=0.085]{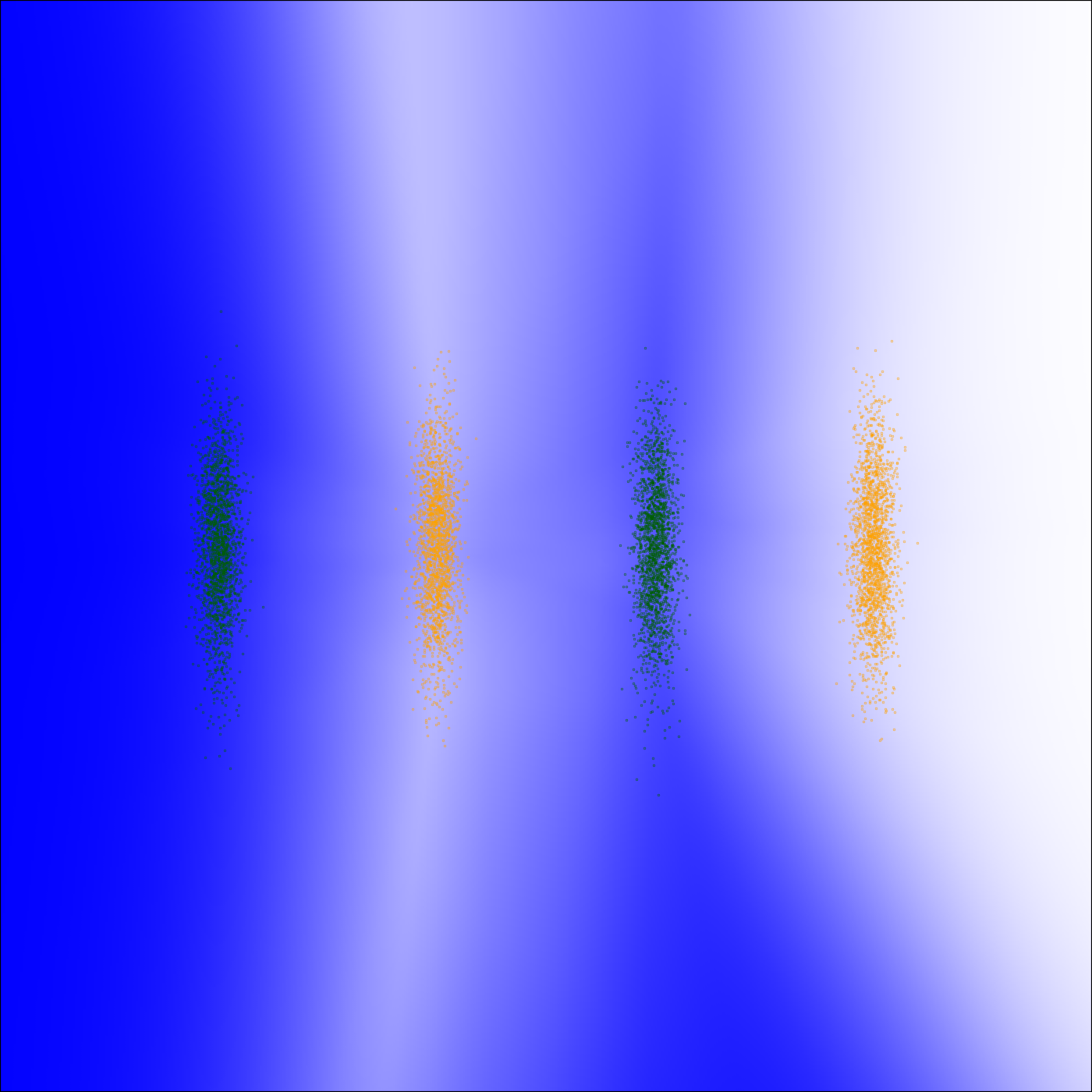}}
\fbox{\includegraphics[scale=0.085]{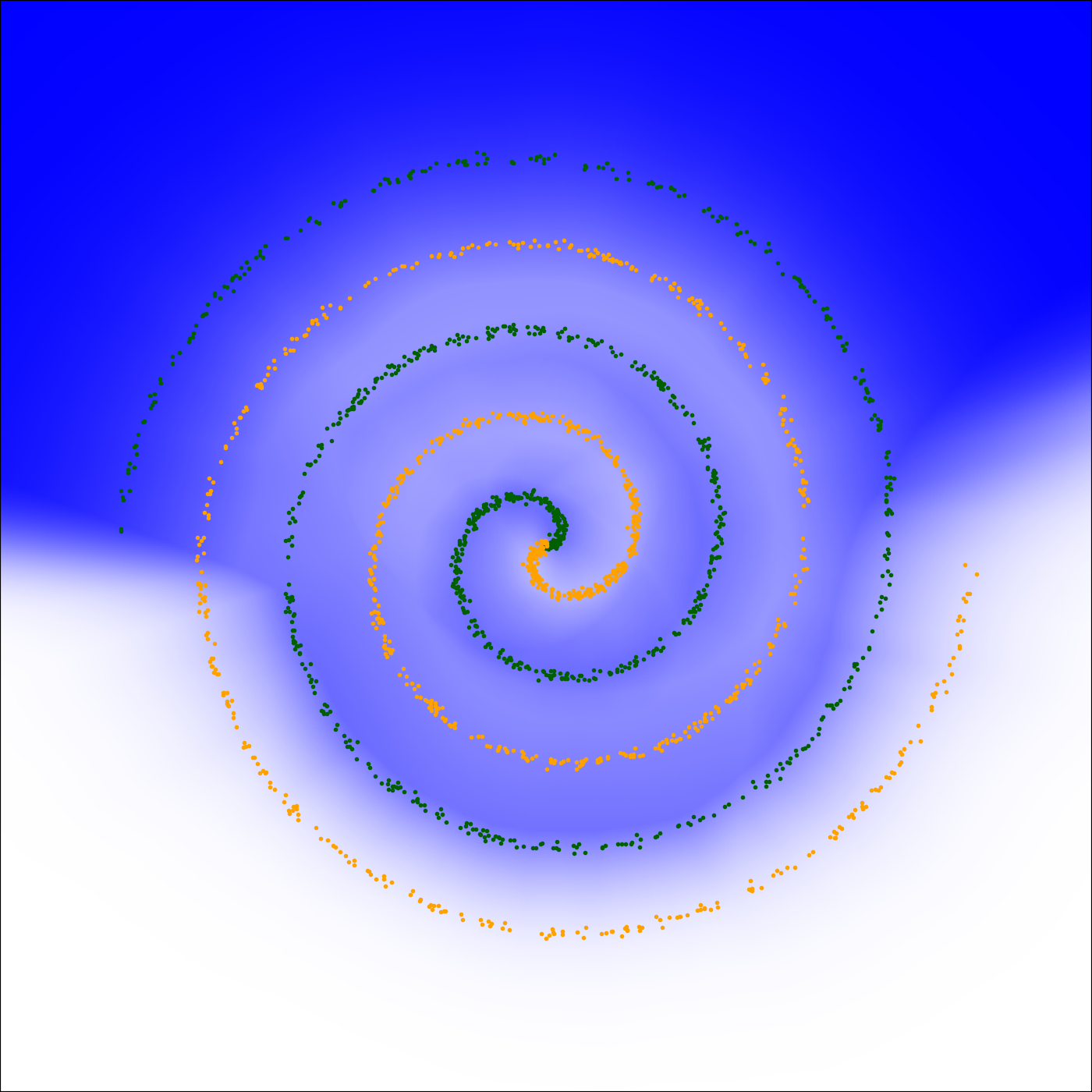}}
}_{\textrm{1-mixup}}$
$\underbrace{
\fbox{\includegraphics[scale=0.085]{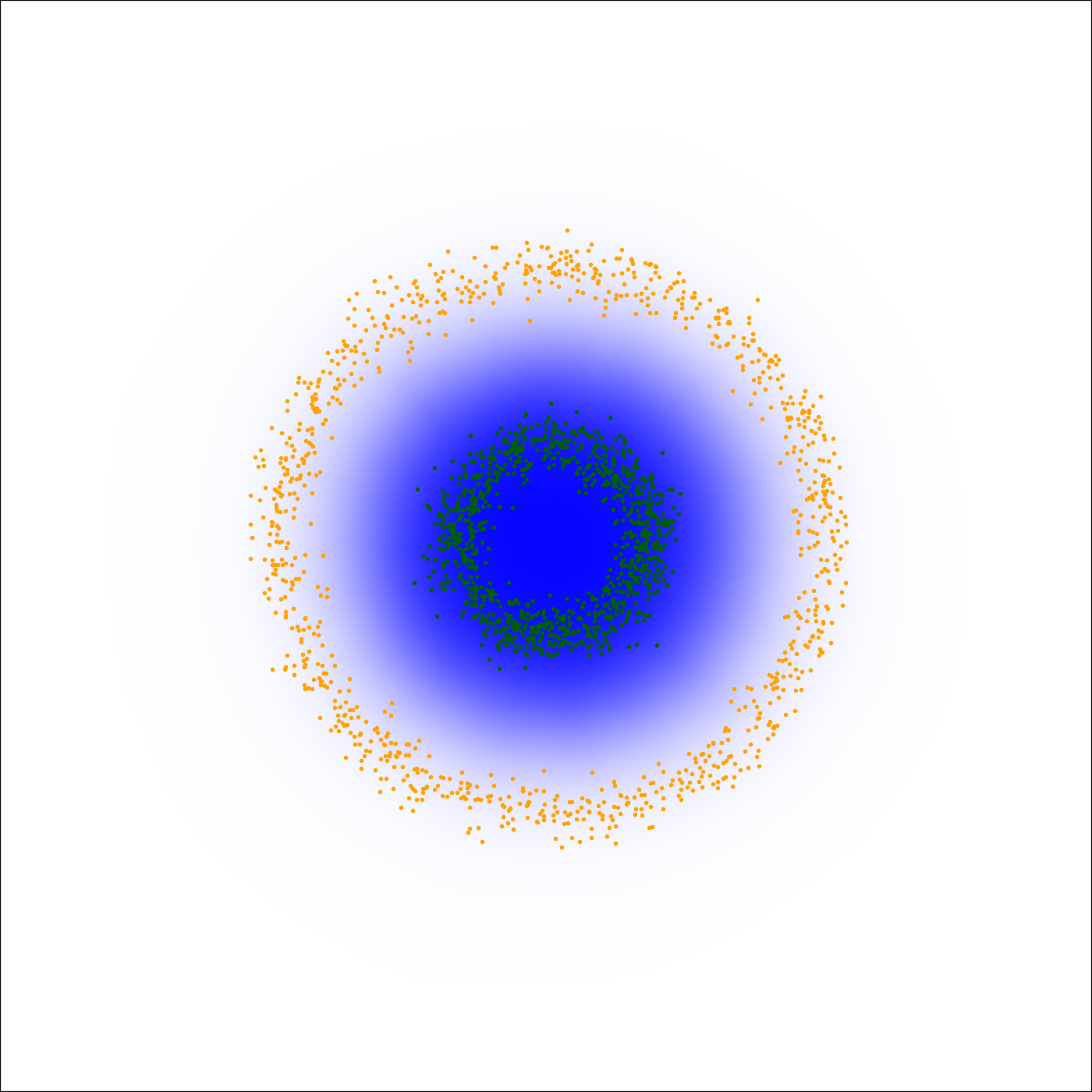}} 
\fbox{\includegraphics[scale=0.085]{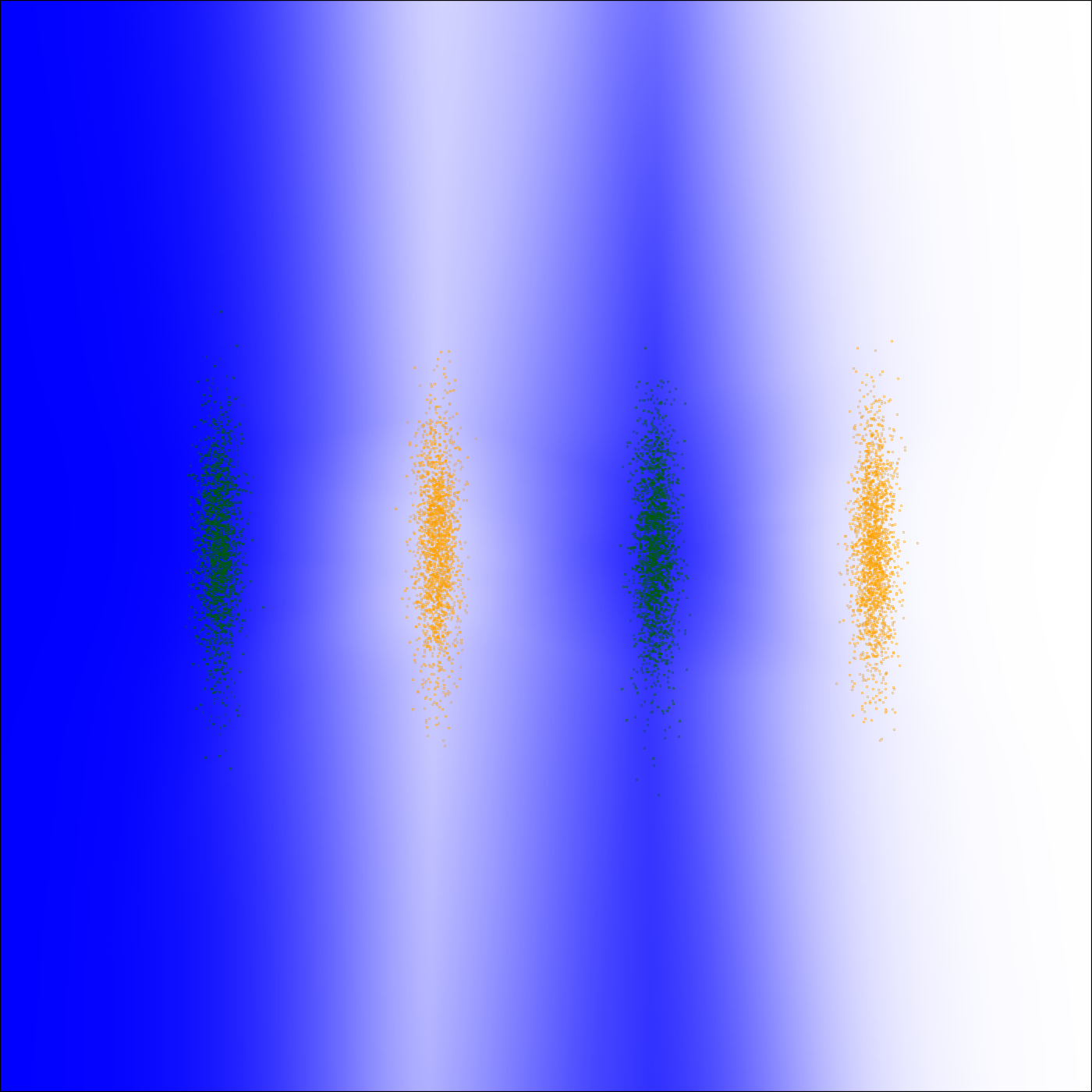}}
\fbox{\includegraphics[scale=0.085]{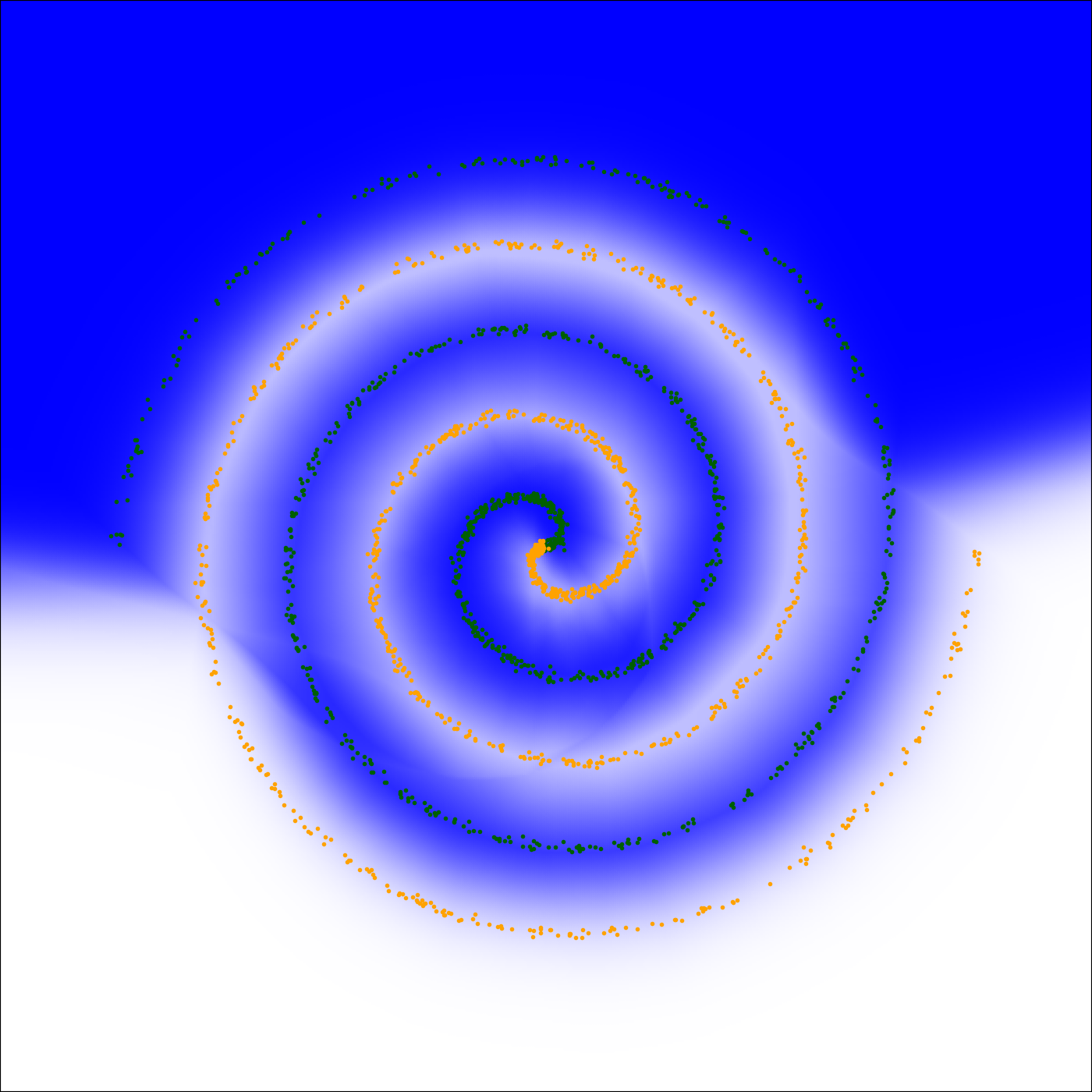}}
}_{\textrm{32-mixup}}$
\end{center}\vspace{-.1in}
\caption{Outputs of a fully-connected network trained on three synthetic datasets for binary classification, with no mixup (ERM), 1-mixup, and 32-mixup regularization ($\alpha = 1$). 
Note that ERM under-regularizes (visible through the jagged boundaries), 1-mixup over-regularizes (visible through over-smoothing), and that $32$-mixup better captures local structure (visible through less blur, increased contrast) while retaining reasonable smoothing between the classes.} \label{fig:toyEgs}
\end{figure}

To address these issues, we present \emph{$k$-mixup}, which averages random pairs of \emph{sets} of $k$ samples from the training dataset. This averaging is done using optimal transport, with \emph{displacement interpolation}. The sets of $k$ samples are viewed as discrete distributions and are averaged as distributions in a geometric sense. If $k = 1$, we recover standard mixup regularization. Figures \ref{fig:toyEgs} and \ref{fig:coupling} illustrate how $k$-mixup produces perturbed training datasets that better match the global cluster or manifold support structure of the original training dataset. The constraints of optimal transport are crucial, as for instance a nearest-neighbors approach would avoid the cross-cluster matches necessary for smoothing.\footnote{To see this, note that because nearest-neighbors can be a many-to-one matching, nearly all matches would be intra-cluster between points of the same class and thus provide few/no interpolated labels, particularly missing any interpolations in the voids between classes where interpolation is most important. As additional support, we ran 20 Monte Carlo trials of the CIFAR-10 experiment below with a $k=16$-nearest-neighbors strategy. It failed to outperform even 1-mixup (0.09\% worse). Further discussion is presented in supplement Section \ref{sec:knnImage}, with an analogue of Figure \ref{fig:local_distribution} for a $k$-nearest-neighbors strategy.} Figure \ref{fig:local_distribution} illustrates the distribution of possible matchings for a sample point and shows non-zero likelihood for these cross-cluster matches. In Section \ref{sec:results}, we provide empirical results that justify the above intuition. The resulting method is easy to implement, computationally cheap, and versatile.
Our contributions are as follows. 
\begin{itemize}
    \item Empirical results:
\begin{itemize}
    \item We show improved generalization results on standard benchmark datasets showing $k$-mixup with $k > 1$ consistently improves on standard mixup, where $\alpha$ is optimized for both methods. The improvements are consistently similar in magnitude or larger than those of 1-mixup over basic ERM.
    \item On image datasets, a heuristic of $k=16$ outperforms 1-mixup in nearly all cases.
    \item We show that $k$-mixup significantly improves robustness to certain distribution shifts (additive noise and adversarial samples) over 1-mixup and ERM. 
\end{itemize}
\item Theoretical results:
\begin{itemize}
    \item We argue that as $k$ increases, the interpolated samples are more and more likely to remain within the data manifold (Section \ref{sec:mfldTheory}).
    \item In the clustered setting, we provide an argument that shows inter-cluster regularization interpolates nearest points and better smooths interpolation of labels (Section \ref{sec:clusterTheory}).
    \item We extend the theoretical analysis of \cite{zhang_how_2020} and \cite{carratino_mixup_2020} to our $k$-mixup setting, showing that it leverages local data distribution structure ($\mathcal{D}_i$ of Eq. \ref{eq:local_matching_distrib}) to make more informed regularizations (Section \ref{sec:regTheory}).
\end{itemize}
\end{itemize}
\textbf{Related works.} 
We tackle issues noted in the papers on adaptive mixup \citep{guo_mixup_2019} and manifold mixup \citep{verma_manifold_2018}. The first refers to the problem as ``manifold intrusion'' and seeks to address it by training data point-specific weights $\alpha$ and considering convex combinations of more than 2 points. 
Manifold mixup deals with the problem by relying on the network to parameterize the data manifold, interpolating in the hidden layers of the network. We show in Section \ref{sec:results} that $k$-mixup can be performed in hidden layers to boost performance of manifold mixup. A related approach is that of GAN-mixup \citep{sohn_GAN_2020}, which trains a conditional GAN and uses it to generate data points between different data manifolds. The approaches above require training additional networks and are far more complex than our $k$-mixup method.  

The recent local mixup method \citep{baena2022preventing} uses a distance-based approach for modifying mixup. This method retains the random matching strategy of mixup, but scales the contribution of a vicinal point to the objective function according to its distance from the original training point. As noted previously, such methods that employ random matching will fail to provide substantive data smoothing for the output function between clusters or high-density regions in the training data. 

PuzzleMix \citep{kim_puzzle_2020} also combines optimal transport ideas with mixup, extending CutMix \citep{yun_cutmix_2019} to combine pairs of images. 
PuzzleMix uses transport to shift saliency regions of images, producing meaningful combinations of input training data. Their use of optimal transport is fundamentally different from ours and does not generalize to non-image data. There are several other image-domain specific works that are in this vein, including CoMix \citep{kim2021comixup}, Automix \citep{liu2021unveiling}, and Stackmix \citep{chen2021stackmix}.

\begin{figure}[t] 
\begin{center}
\begin{tabular}{@{}c@{}c@{}c|c@{}c@{}c@{}}
\includegraphics[width=.15\linewidth]{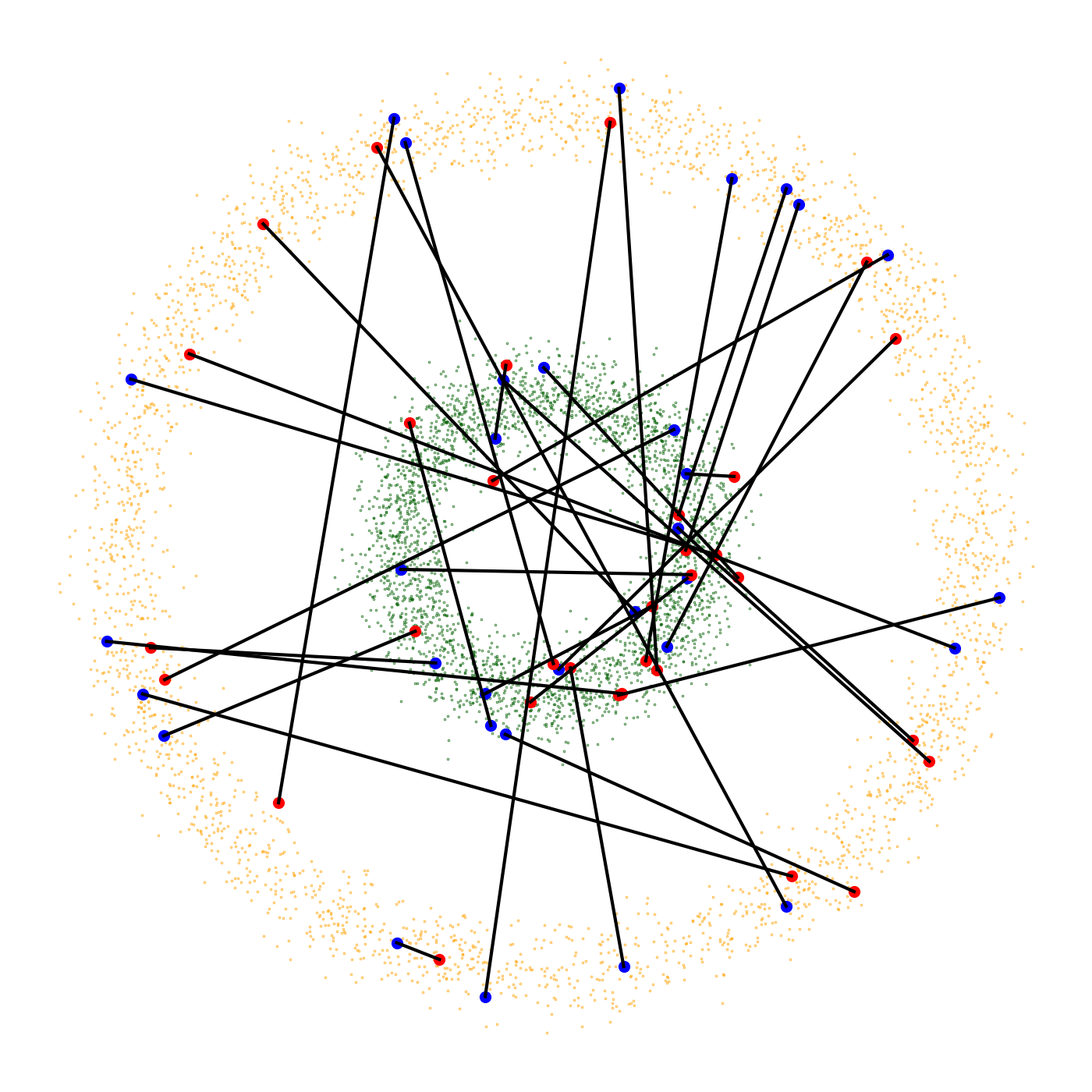} &
\includegraphics[width=.15\linewidth]{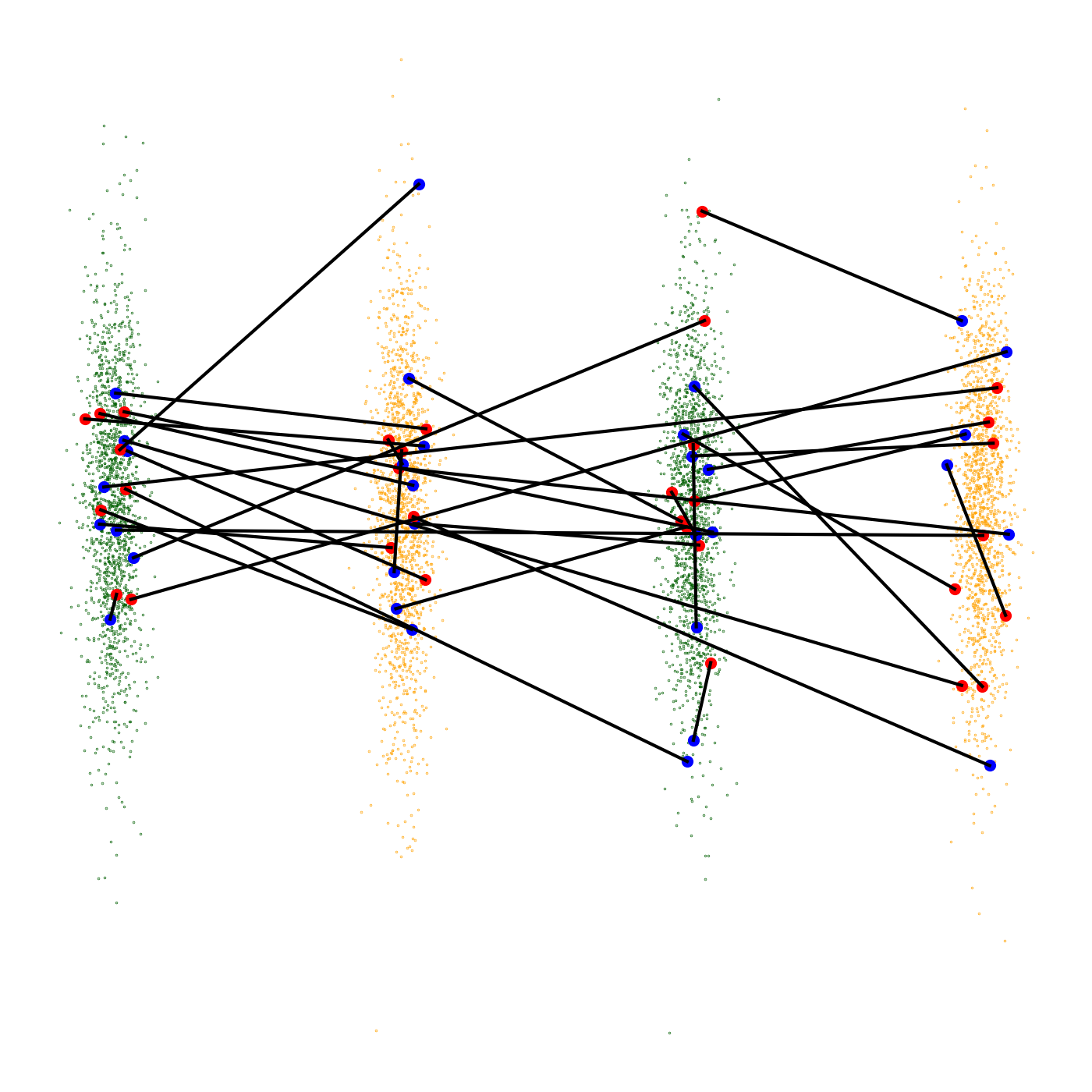} &
\includegraphics[width=.15\linewidth]{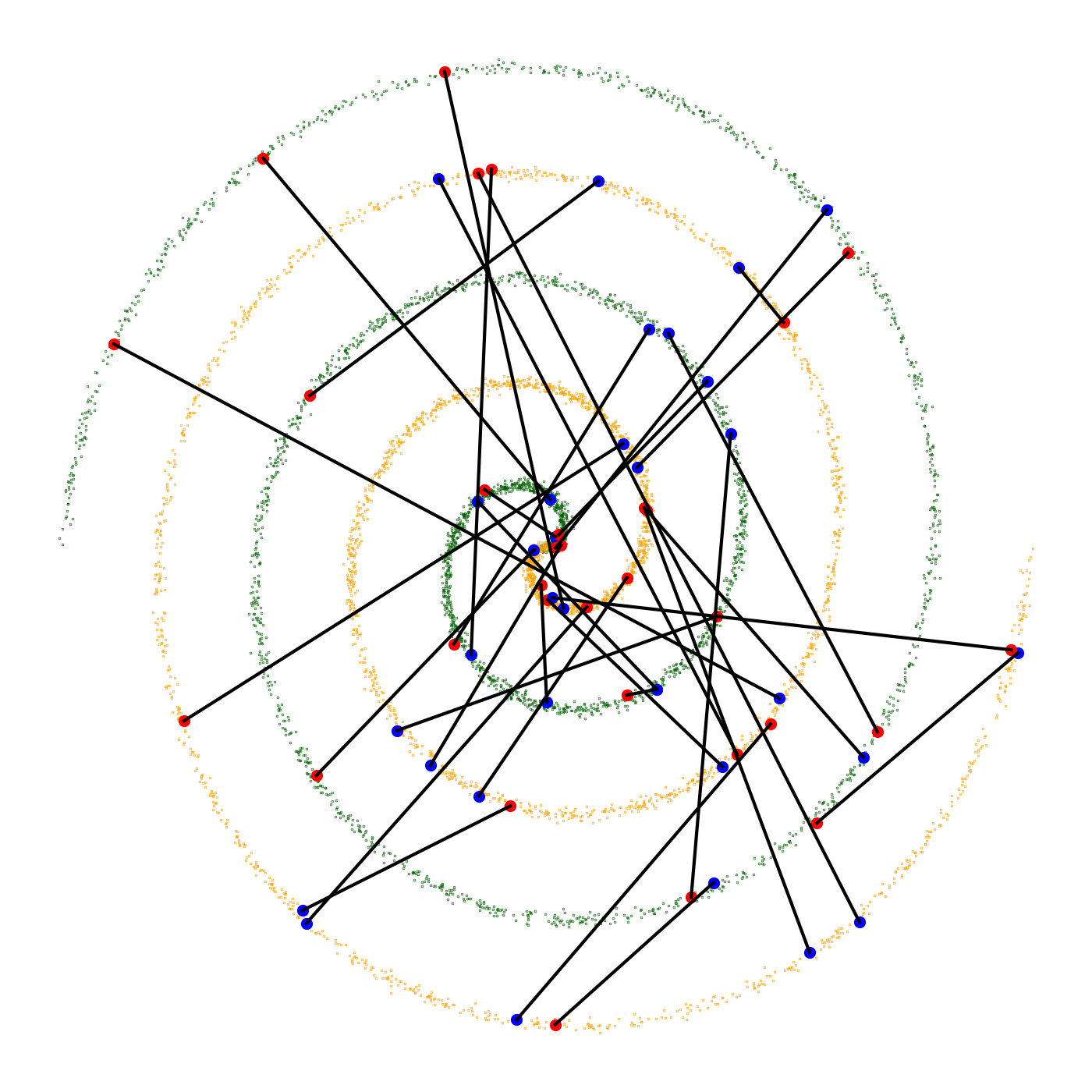} &
\includegraphics[width=.15\linewidth]{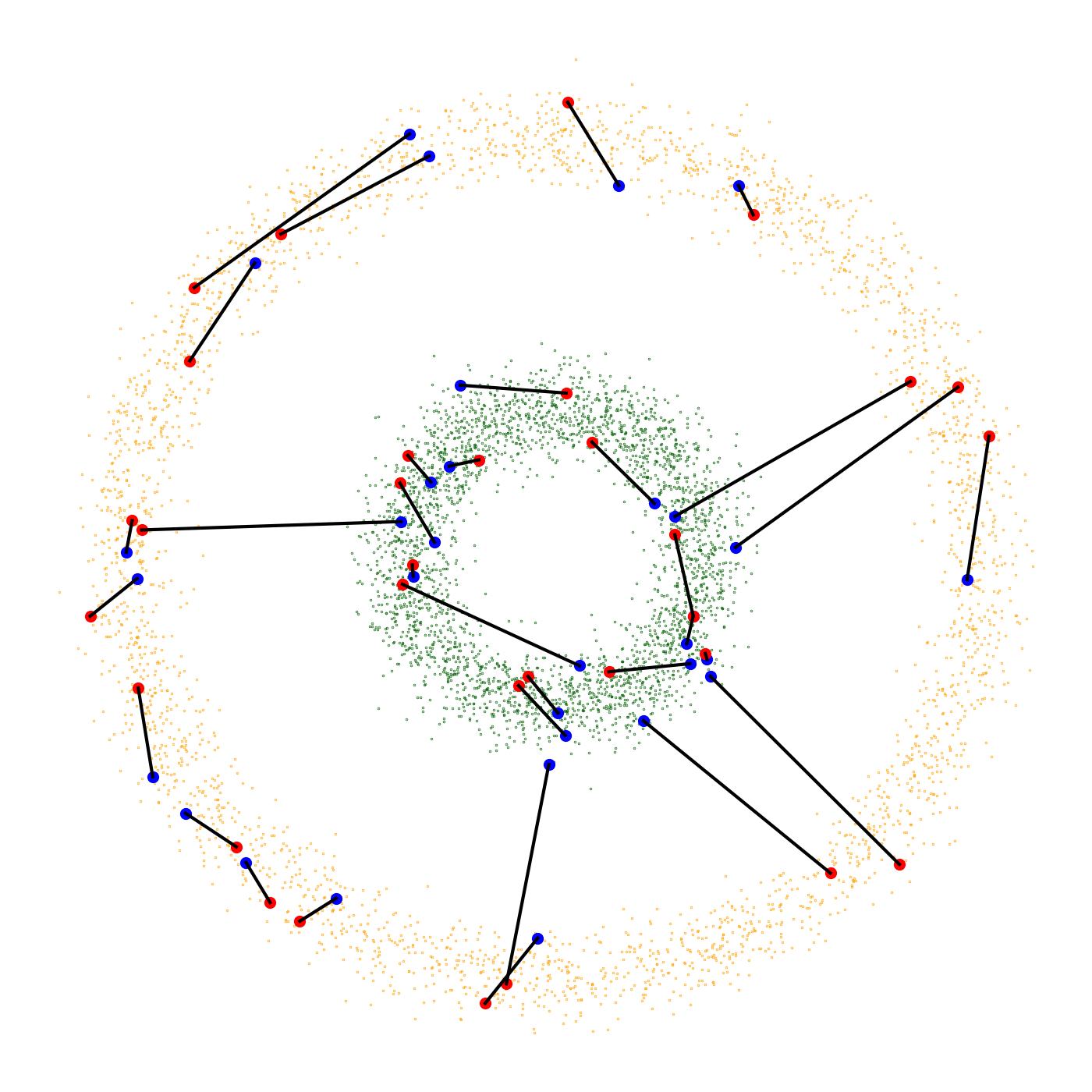} &
\includegraphics[width=.15\linewidth]{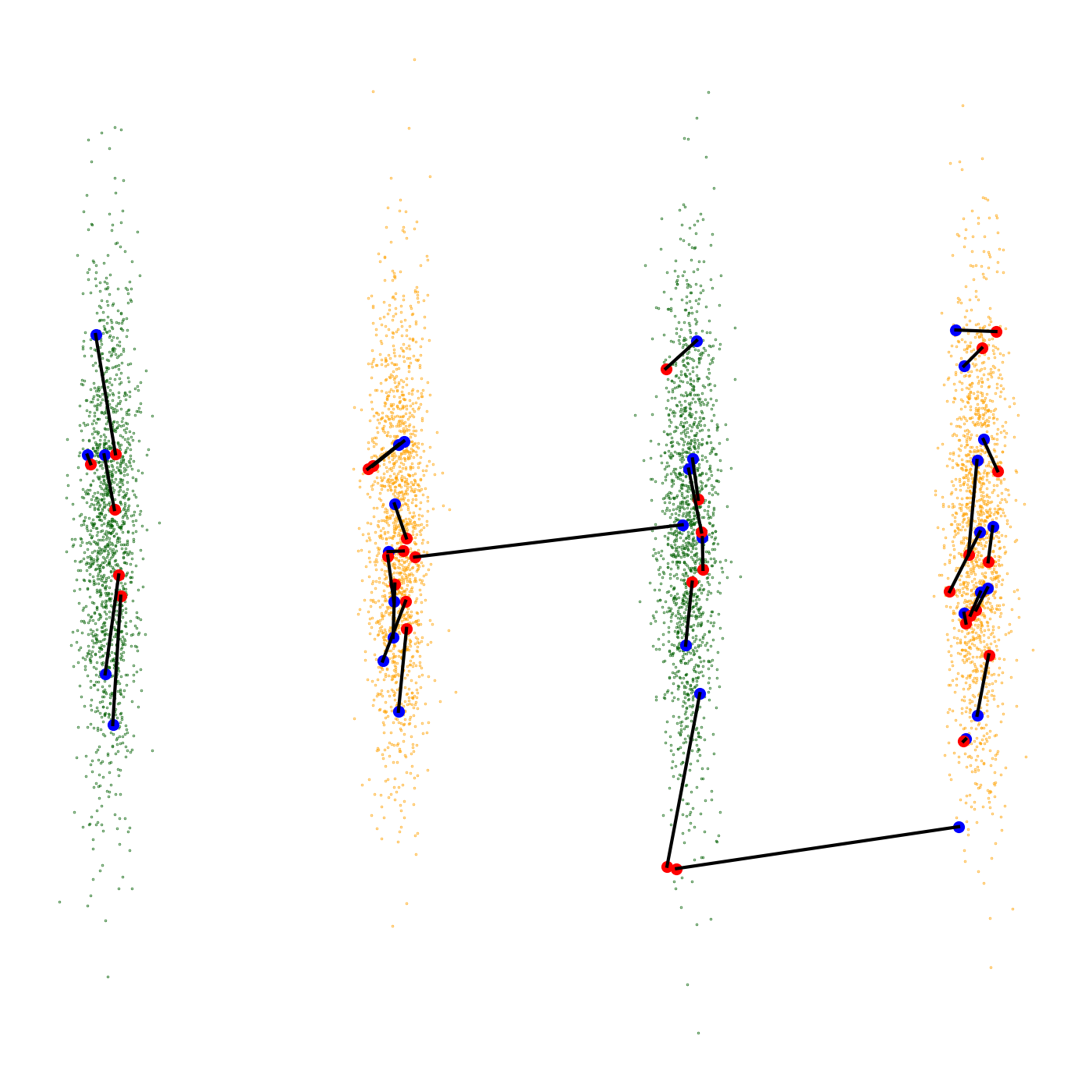} &
\includegraphics[width=.15\linewidth]{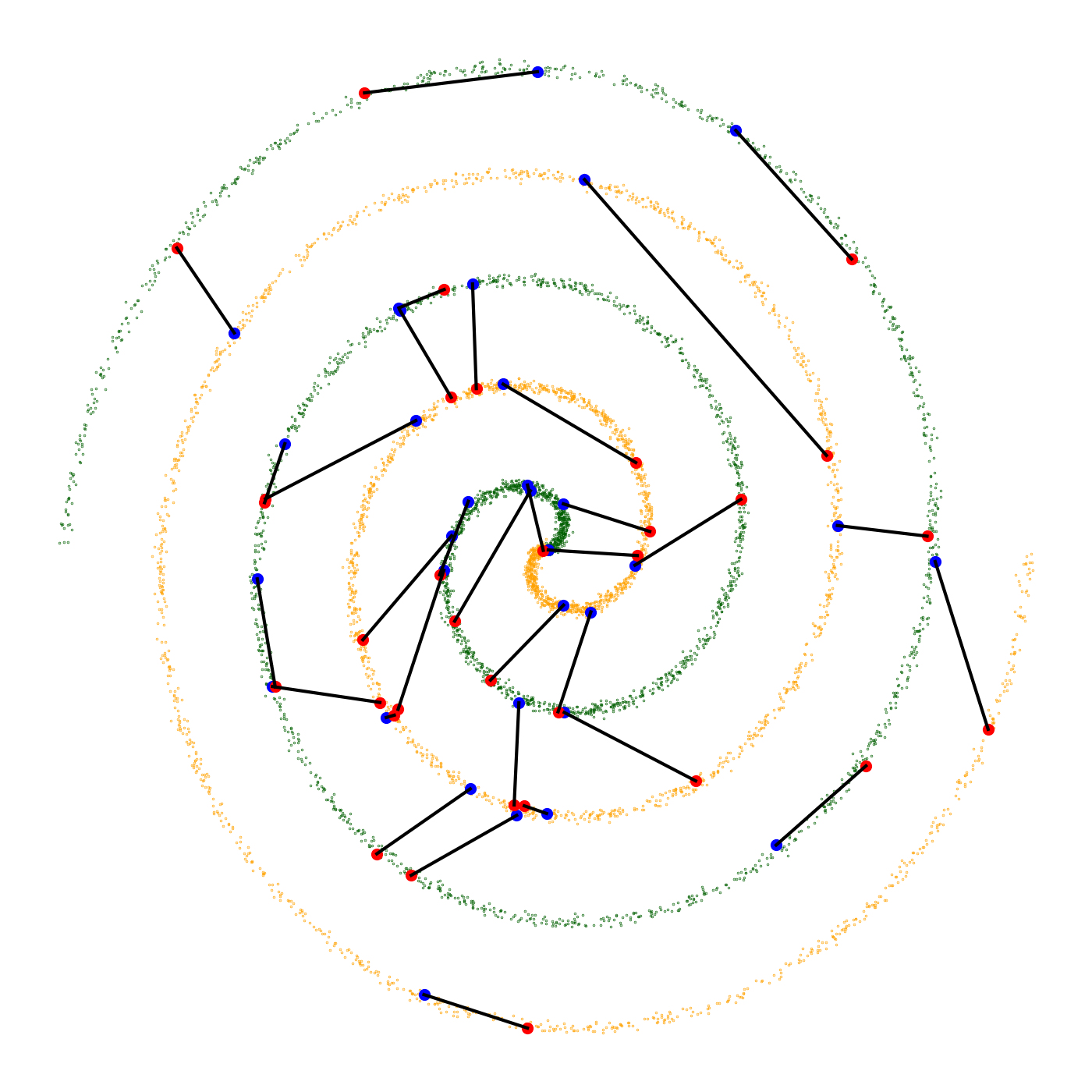} \\
\includegraphics[width=.15\linewidth]{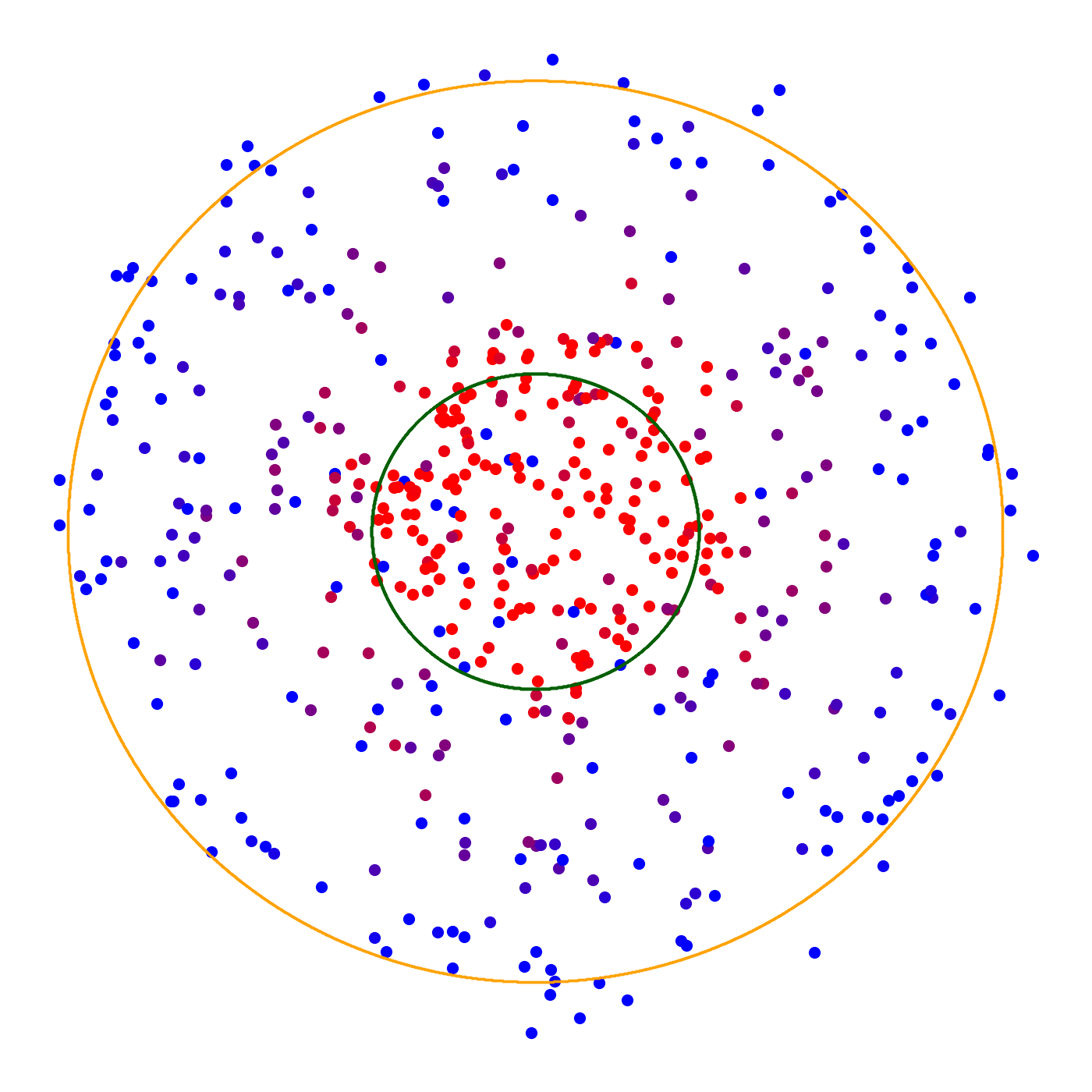} &
\includegraphics[width=.15\linewidth]{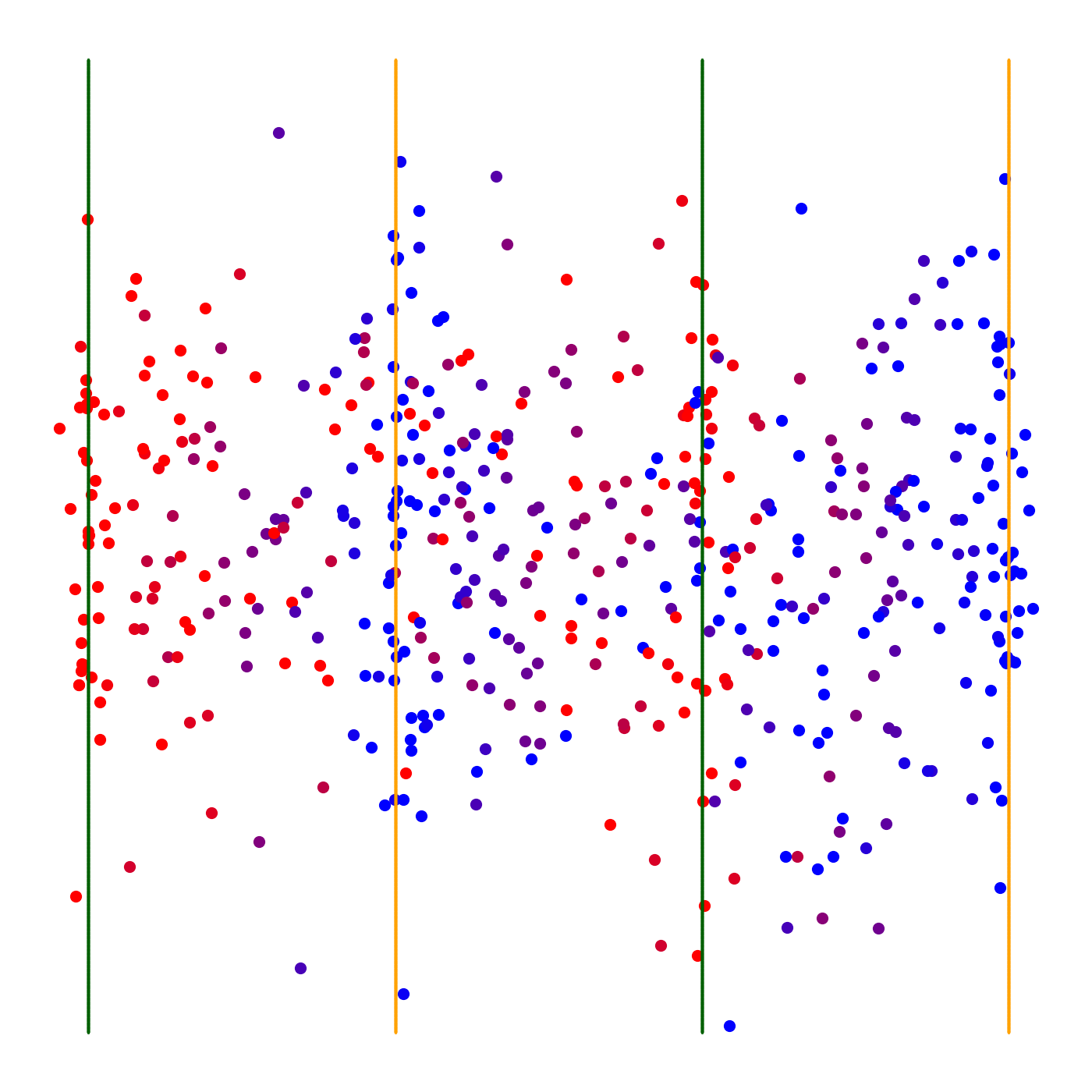} &
\includegraphics[width=.15\linewidth]{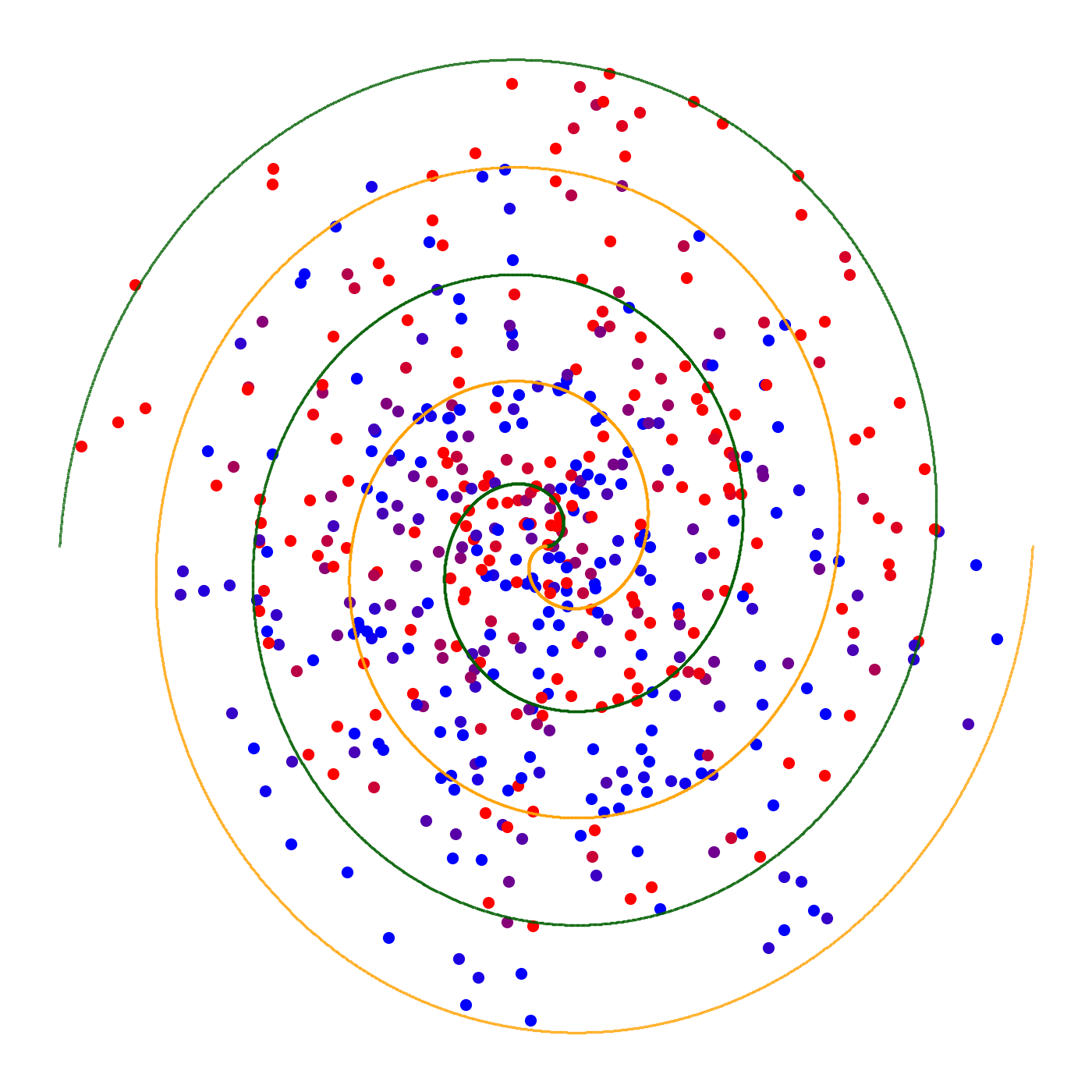} &
\includegraphics[width=.15\linewidth]{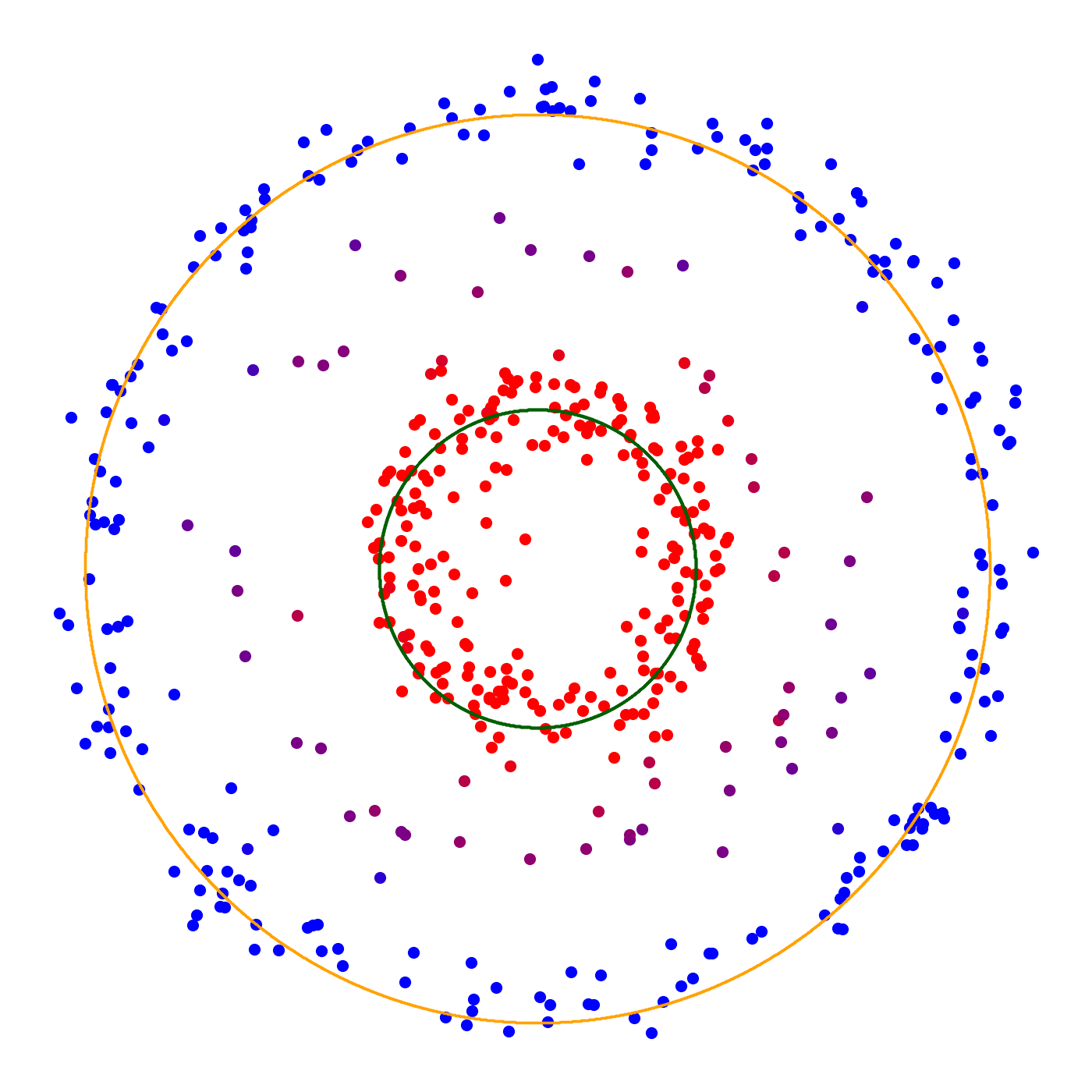} &
\includegraphics[width=.15\linewidth]{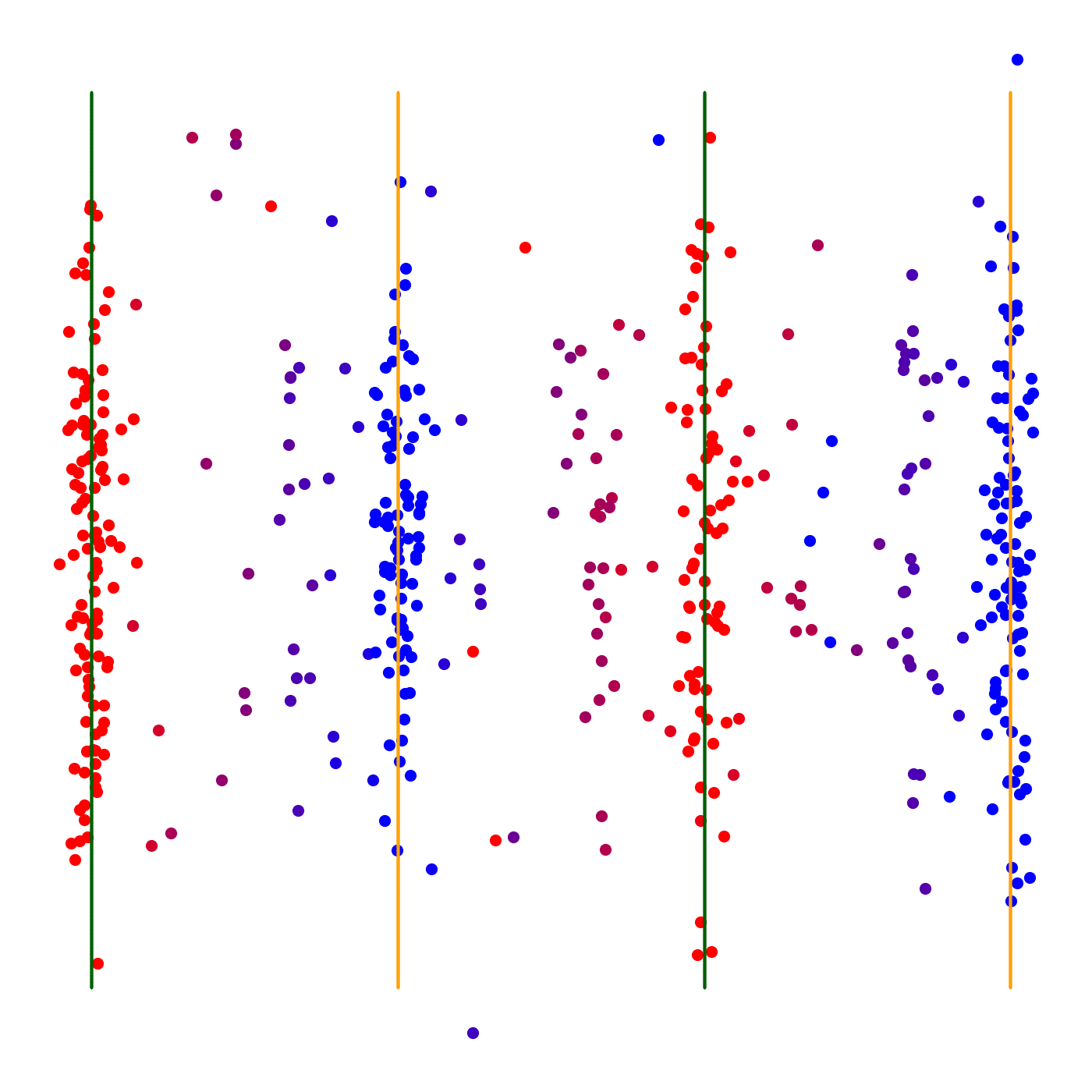} &
\includegraphics[width=.15\linewidth]{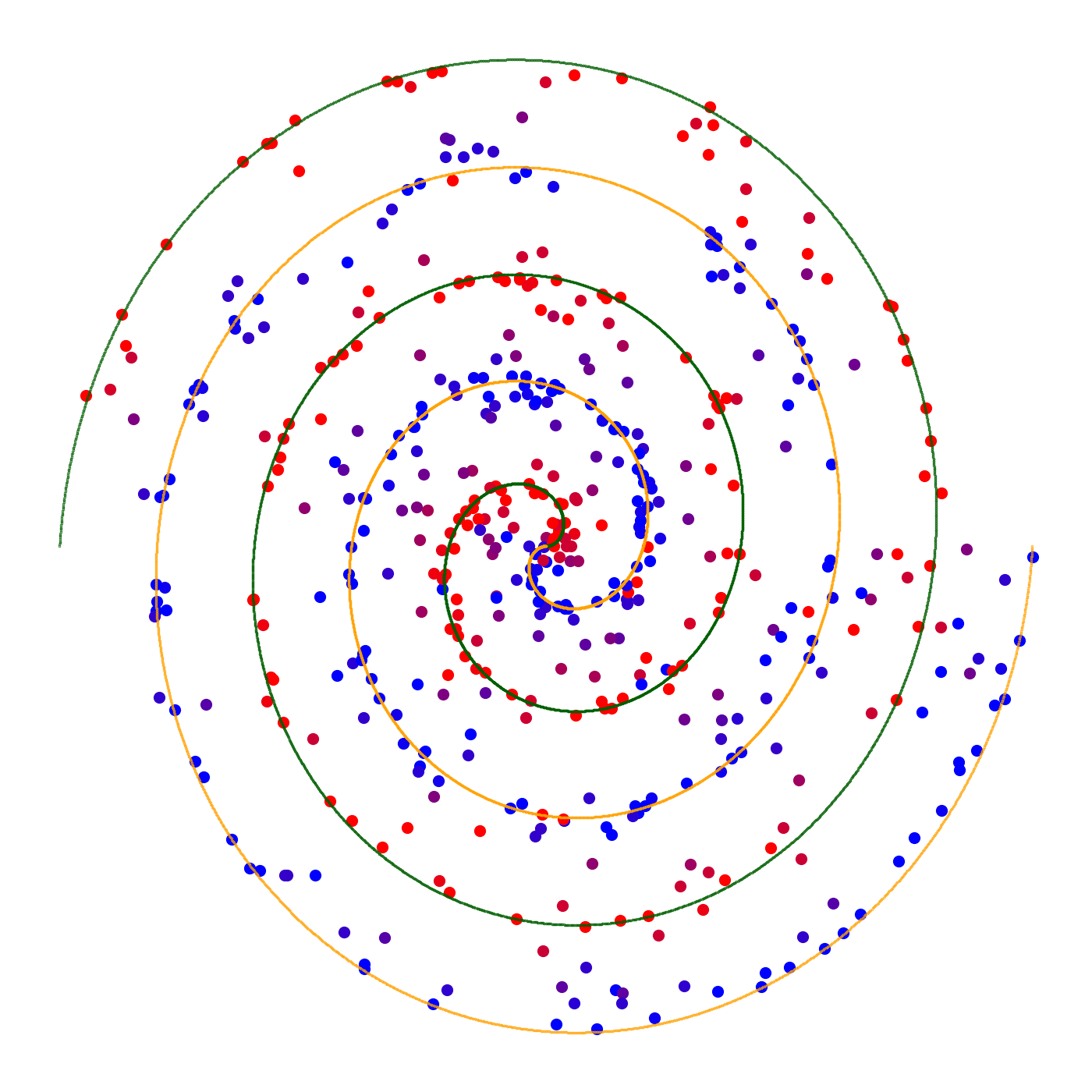} \\
\multicolumn{3}{c|}{$k=1$: 1-mixup} &
\multicolumn{3}{c}{$k=32$: $k$-mixup}
\end{tabular}
\end{center}\vspace{-.1in}
\caption{Optimal transport couplings and vicinal datasets for $k=1$ (left) and $k=32$ (right) in 3 simple datasets. In the bottom row, $\alpha = 1$ was used to generate vicinal datasets of size 512. \vspace{-.15in}} \label{fig:coupling}
\end{figure}

Performing optimal transport between empirical samples of distributions has been considered in studies of the \emph{sample complexity} of Wasserstein distances (e.g. \cite{weed2019sharp}). Unlike most settings, in our application the underlying source and target distributions are the same; the theoretical investigation of a generalization of variance called $k$-variance by \cite{solomon2020k} considers a similar setting. In other works, transport between empirical samples has been dubbed \emph{minibatch optimal transport} and has been used in generative models \citep{genevay_learning_2018, fatras_learning_2020} and domain adaptation \citep{damodaran_deepjdot_2018, fatras_unbalanced_2021}.

\section{Generalizing Mixup} \label{sec:mixup_defined}

\textbf{Standard mixup.} 
Mixup uses a training dataset of feature-target pairs $\{(x_i,y_i)\}^N_{i=1}$, where the target $y_i$ is a one-hot vector for classification. Weighted averages of training points construct a vicinal dataset:
\[ (\tilde{x}^\lambda_{ij}, \tilde{y}^\lambda_{ij}) := (\lambda x_i + (1-\lambda) x_j, \lambda y_i + (1-\lambda) y_j). \]
$\lambda$ is sampled from a beta distribution, $\beta(\alpha,\alpha)$, with parameter $\alpha > 0$ usually small so that the averages are near an endpoint. Using this vicinal dataset, empirical risk minimization (ERM) becomes:
\begin{equation*}
    \kmixup{1} (f) := \E_{i,j,\lambda} \left[ \ell\left(f\left(\tilde{x}^\lambda_{ij}\right), \tilde{y}^\lambda_{ij} \right)\right],
\end{equation*}
where $i,j \sim \mathcal{U}\{1, \ldots, N\}, \lambda \sim \beta(\alpha, \alpha)$, $f$ is a proposed feature-target map, and $\ell$ is a loss function. Effectively, one trains on datasets formed by averaging random pairs of training points. As the training points are randomly selected, this construction makes it likely that the vicinal data points may not reflect the local structure of the dataset, as in the clustered or manifold-support setting.

\textbf{$k$-mixup.} 
To generalize mixup, we sample two random subsets of $k$ training points $\{(x_{i}^\gamma, y_{i}^\gamma)\}_{i=1}^k$ and $\{(x_{i}^\zeta, y_{i}^\zeta)\}_{i=1}^k$. For compactness, let $x^\gamma := \{ x_i^\gamma \}_{i=1}^k$ and $y^\gamma := \{ y_i^\gamma \}_{i=1}^k$ denote the feature and target sets (and likewise for $\zeta$). A weighted average of these subsets is formed with \emph{displacement interpolation} and used as a vicinal training set. This concept is from optimal transport (see, e.g., \cite{santambrogio_optimal_2015}) and considers $(x^\gamma, y^\gamma)$ and $(x^\zeta, y^\zeta)$ as uniform discrete distributions $\hat{\mu}^\gamma, \hat{\mu}^\zeta$ over their supports. In this setting, the optimal transport problem becomes a linear assignment problem \citep{peyre_computational_2019}. The optimal map is described by a permutation $\sigma \in S_k$ minimizing the cost: 
\[ W^2_2(\hat{\mu}^\gamma, \hat{\mu}^\zeta) = \frac{1}{k}\sum_{i=1}^k \| x_{i}^\gamma - x_{\sigma(i)}^\zeta \|_2^2. \]
Here, $\sigma$ can be found efficiently using the Hungarian algorithm \citep{bertsimas1997}. Figure \ref{fig:coupling} gives intuition for this identification. 
When compared to the random matching used by standard mixup, our pairing is more likely to match nearby points and to make matchings that better respect local structure---especially by having nontrivial probability for cross-cluster matches between nearby points on the two clusters (see Figure \ref{fig:local_distribution}).

\begin{figure}[t]
    \begin{center}
    \begin{tabular}{c|c|c}
    \includegraphics[width=.25\linewidth]{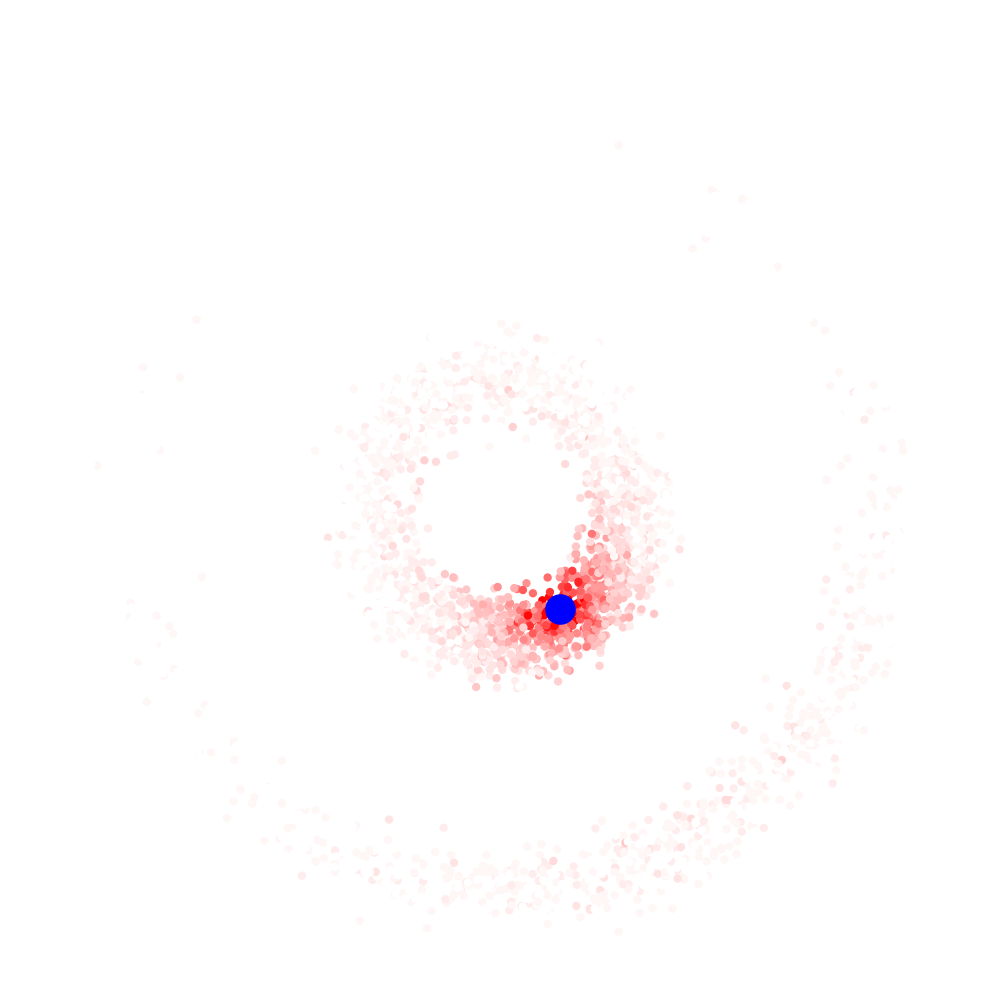} &
    \includegraphics[width=.25\linewidth]{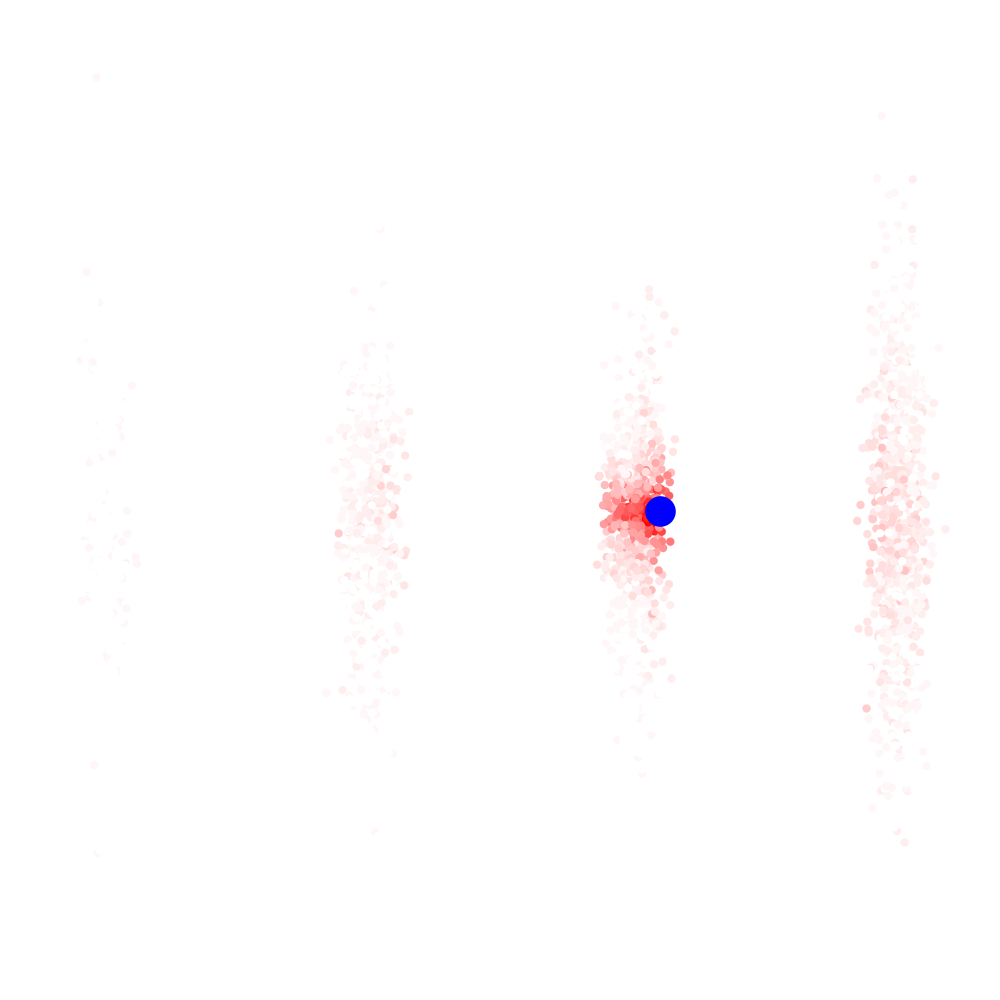} &
    \includegraphics[width=.25\linewidth]{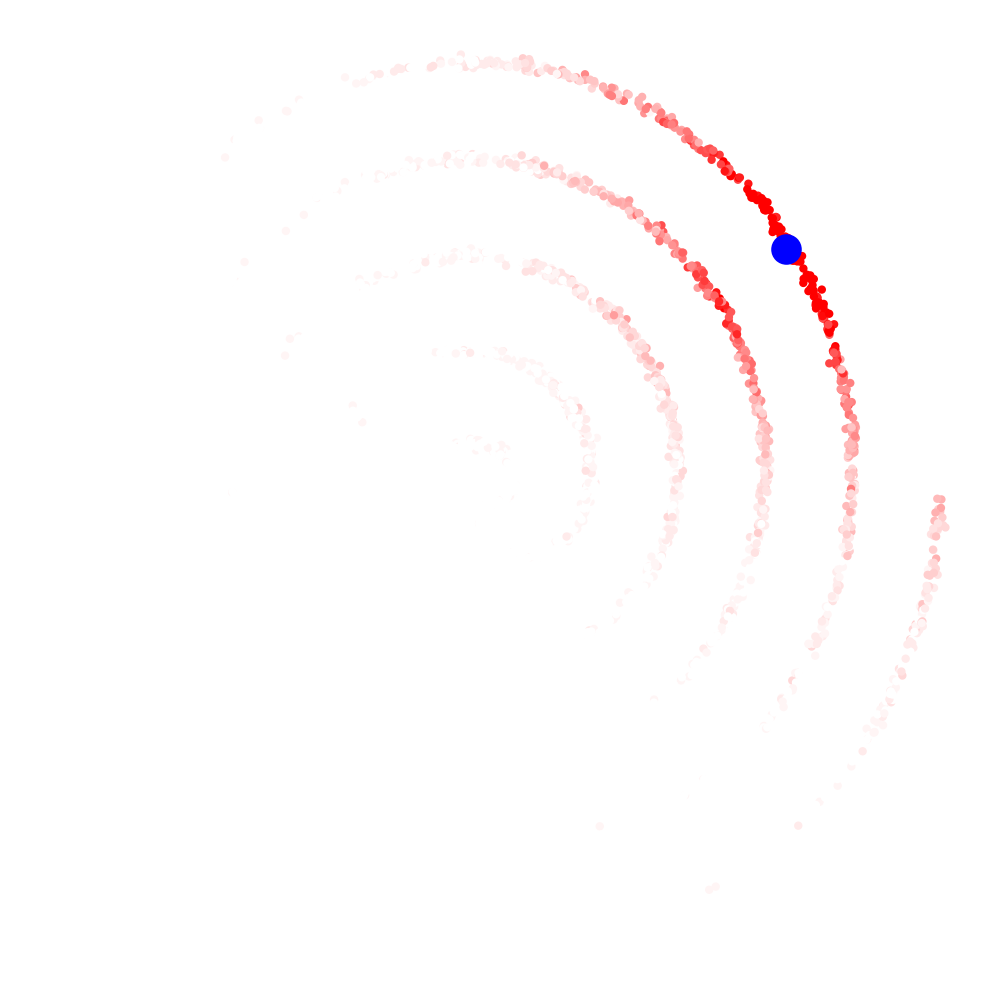}
    \end{tabular}
    \end{center}\vspace{-.1in}
    \caption{Locally-informed matching distributions $\mathcal{D}_i$ for $k=32$, for a randomly selected point (see Equation \ref{eq:local_matching_distrib} for explicit definition). These distributions reflect local manifold and cluster structure. Note that a nearest-neighbors approach would not provide cross-cluster matchings (see Figure \ref{fig:kNN_distribution} in Appendix \ref{sec:knnImage}).} \label{fig:local_distribution}
\end{figure}

Given a permutation $\sigma$ and weight $\lambda$, the displacement interpolation between $(\!x^\gamma, y^\gamma\!)$ and $(\!x^\zeta, y^\zeta\!)$ is:
\[
DI_\lambda((x^\gamma, y^\gamma), (x^\zeta, y^\zeta)) :=  \left\{ \lambda (x_i^\gamma, y_i^\gamma) + (1-\lambda) (x_{\sigma(i)}^\zeta, y_{\sigma(i)}^\zeta) \right\}^k_{i=1}. 
\]
As in standard mixup, we draw $\lambda \sim \beta(\alpha,\alpha)$. 
For the loss function, we consider sampling $k$-subsets of the $N$ samples at random, which we can mathematically describe as choosing $\gamma,\zeta \sim \mathcal{U}\{1, \ldots, \textstyle \binom{N}{k}\}$ for which $\{\{(x_{i}^\gamma, y_{i}^\gamma)\}_{i=1}^k\}_{\gamma = 1}^{\binom{N}{k}}$ are the possible subsets.\footnote{These possible subsets need not be enumerated since we optimize the loss with stochastic gradient descent.} This yields an expected loss 
\[
\kmixup{k} (f) = \E_{\gamma, \zeta, \lambda} \left[ \ell(f(DI_\lambda (x^\gamma, x^{\zeta})), DI_\lambda (y^\gamma, y^{\zeta})) \right].
\]
The localized nature of the matchings makes it more likely that the averaged labels will smoothly interpolate over the decision boundaries. A consequence is that $k$-mixup is robust to higher values of $\alpha$, since it is no longer necessary to keep $\lambda$ close to 0 or 1 to avoid erroneous labels. This can be seen in our empirical results in Section \ref{sec:results} and theoretical analysis in Section \ref{sec:regTheory}. 

\textbf{$k$, $\alpha$, and Root Mean Squared Perturbation Distance.} When $\alpha$ is kept fixed and $k$ is increased, the perturbations become more local, and the distance of matchings (and thus perturbations) decreases. We found that more sensible comparisons across values of $k$ can be obtained by \emph{increasing} $\alpha$ in concert with $k$,\footnote{Note that this adjustment is of course not necessary for parameter tuning in practice.} so that the average perturbation's squared distance remains constant---bearing in mind that large squared distance values may not be achievable for high values of $k$. In effect, this is viewing the root mean squared perturbation distance as the parameter, instead of $\alpha$. Computation details are found in Section \ref{supp:pseudocode}. 
Throughout training, this tuned $\alpha$ is kept fixed. We typically pick an $\alpha$ for $k=1$, get the associated root mean squared distance ($\xi$), and increase $\alpha$ as $k$ increases to maintain the fixed value of $\xi$.

\textbf{Pseudocode and Computational Complexity.} We emphasize the speed and simplicity of our method. We have included a brief PyTorch pseudocode in Section \ref{supp:pseudocode} below and note that with CIFAR-10 and $k=32$, the use of $k$-mixup added about one second per epoch on GPU. Section \ref{supp:pseudocode} also contains a more extended analysis of computational cost, showing that there is little computational overhead to incorporating $k$-mixup in training compared to ERM alone.


\section{Manifold \& Cluster Structure Preservation} \label{sec:manCluster}

Using an optimal coupling for producing vicinal data points makes it likely that vicinal data points reflect local dataset structure. Below we argue that as $k$ increases, the vicinal couplings will preserve manifold support, preserve cluster structure, and interpolate labels between clusters.

\subsection{Manifold support} \label{sec:mfldTheory}

Suppose our training data is drawn from a distribution $\mu$ on a $d$-dimensional embedded submanifold $\mansupp$ in $\mathcal{X} \times \mathcal{Y} \subset \R^M$, where $\mathcal{X}$ and $\mathcal{Y}$ denote feature and target spaces. We define an injectivity radius:
\begin{figure}[t]
    \begin{center}
    \includegraphics[width=.5\linewidth]{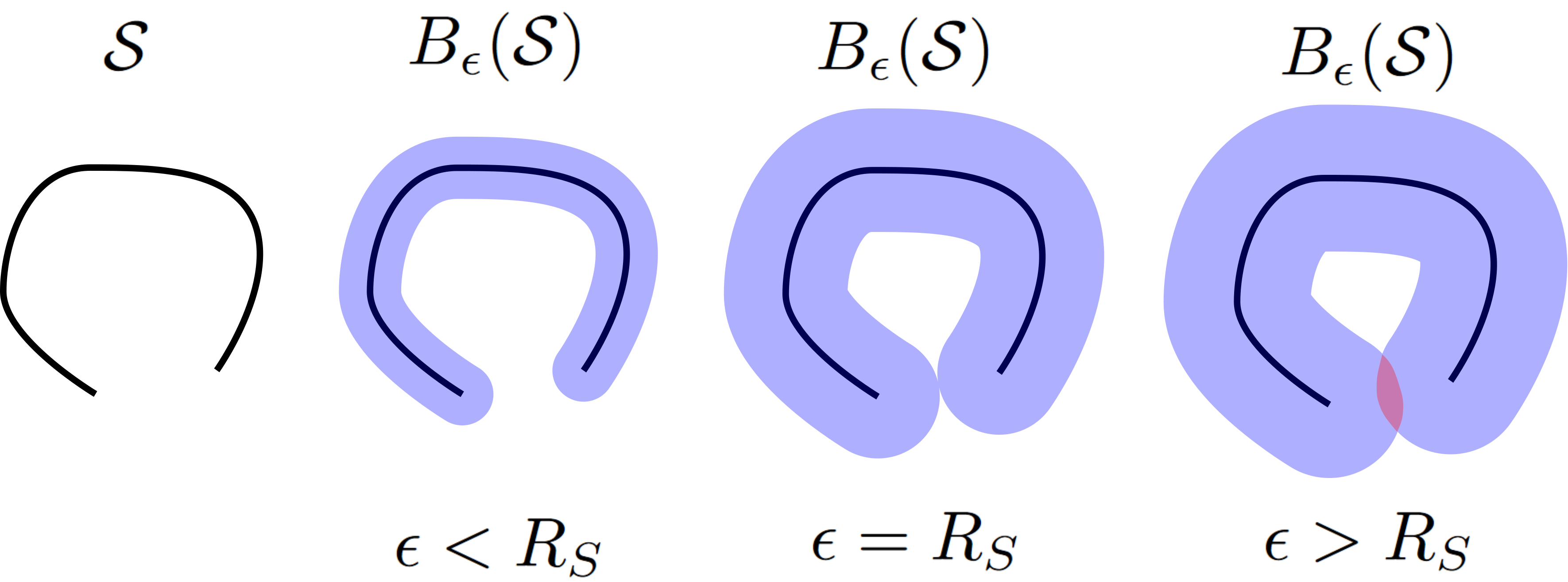}
    \end{center}\vspace{-.1in}
    \caption{Injectivity radius example. Various $\epsilon$-neighborhoods, $B_\epsilon (\mathcal{S})$, have been illustrated for a curve $\mathcal{S} \subset \mathbb{R}^2$. Intuitively, the topology of $B_\epsilon (\mathcal{S})$ no longer reflects that of $\mathcal{S}$ when $\epsilon > R_S$, the injectivity radius. To be precise, $B_\epsilon (\mathcal{S})$ is no longer homotopy equivalent to $\mathcal{S}$ (see \cite{hatcher2000algebraic} for definition).} \label{fig:inj_radius}
\end{figure}
\begin{definition}[Injectivity radius]
\label{def:inj}
Let $B_\epsilon(\mansupp) = \{ p \in \mathcal{X} \times \mathcal{Y} \, | \, \mathsf{d}(p,\mansupp) < \epsilon \}$ denote the $\epsilon$-neighborhood of $\mansupp$ where $\mathsf{d}(p,\mansupp)$ is the Euclidean distance from $p$ to $\mansupp$. Define the injectivity radius $R_\mansupp$ of $\mansupp$ to be the infimum of the $\epsilon$'s for which $B_\epsilon(\mansupp)$ is not homotopy equivalent to $\mansupp$. 
\end{definition}
As $\mathcal{S}$ is embedded, $B_\epsilon(\mansupp)$ is homotopy equivalent to $\mansupp$ for small enough $\epsilon$, so $R_\mansupp > 0$. Essentially, Definition \ref{def:inj} grows an $\epsilon$-neighborhood until the boundary intersects itself. See Figure \ref{fig:inj_radius} for a schematic example that illustrates this. We have the following:
\begin{proposition} \label{prop:manifold}
For a data distribution $\mu$ supported on an embedded submanifold $\mansupp$ with injectivity radius $R_\mansupp$, with high probability any constant fraction $1-\delta$ (for any fixed $\delta \in (0,1]$) of the interpolated samples induced by $k$-mixup will remain within $B_\epsilon(\mansupp)$, for $k$ large enough.
\end{proposition}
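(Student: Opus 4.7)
The plan is to combine a Wasserstein concentration argument with a geometric chord-bound for submanifolds of positive injectivity radius. First, since $\mu$ is supported on the (assumed bounded) embedded submanifold $\mansupp$, the empirical measures $\hat{\mu}^\gamma, \hat{\mu}^\xi$ converge to $\mu$ in $W_2$ as $k \to \infty$ with quantitative rates (e.g., via Fournier--Guillin-type bounds, in which the relevant dimension is the intrinsic dimension $d$ of $\mansupp$). By the triangle inequality, $W_2(\hat{\mu}^\gamma, \hat{\mu}^\xi) \to 0$ in probability. Because the optimal permutation $\sigma$ realizes
\[
W_2^2(\hat{\mu}^\gamma, \hat{\mu}^\xi) \;=\; \frac{1}{k}\sum_{i=1}^k \bigl\|x_i^\gamma - x_{\sigma(i)}^\xi\bigr\|_2^2,
\]
the average squared matching cost vanishes with high probability as $k$ grows.

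Second, I would convert this average bound into a bound on the fraction of long matches via a Markov-type argument. For any threshold $\tau > 0$, the number of indices $i$ with $\|x_i^\gamma - x_{\sigma(i)}^\xi\|_2^2 > \tau$ is at most $k \cdot W_2^2(\hat{\mu}^\gamma,\hat{\mu}^\xi)/\tau$. Hence, for any fixed $\delta \in (0,1]$ and any $\tau > 0$, choosing $k$ large enough so that $W_2^2(\hat{\mu}^\gamma,\hat{\mu}^\xi) \le \delta\tau$ (which happens with arbitrarily high probability by Step 1) forces at most a $\delta$-fraction of matched pairs to have separation greater than $\sqrt{\tau}$ in $\R^M$.

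Third, I would invoke a purely geometric lemma: for a submanifold with injectivity radius $R_\mansupp$, any two points $p,q \in \mansupp$ with $\|p-q\| \le r$ (for $r$ sufficiently small relative to $R_\mansupp$) have their chord $[p,q]$ contained in $B_{\epsilon}(\mansupp)$ with $\epsilon = O(r^2/R_\mansupp)$. This is the standard ``chord deviation'' estimate for sets of positive reach: inside the tubular neighborhood of radius less than $R_\mansupp$, one has well-defined nearest-point projection, and a second-order Taylor expansion shows the midpoint of a short chord deviates from $\mansupp$ only quadratically in $r$. Inverting this, given the target $\epsilon$ one picks $\tau = \tau(\epsilon, R_\mansupp)$ so that every match of squared length at most $\tau$ produces a displacement-interpolation segment lying entirely in $B_\epsilon(\mansupp)$.

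Chaining the three steps: fix $\tau = \tau(\epsilon,R_\mansupp)$ from Step 3 and the desired $\delta$, then choose $k$ large enough that Step 1 gives $W_2^2(\hat{\mu}^\gamma,\hat{\mu}^\xi) \le \delta\tau$ with the prescribed probability; Step 2 then guarantees that at least a $(1-\delta)$-fraction of the $k$ couplings yield segments inside $B_\epsilon(\mansupp)$. I expect the main obstacle to be making Step 3 quantitative against the paper's homotopy-based definition of $R_\mansupp$: one must show that this notion coincides with (or bounds below) Federer's reach so that the second-order chord estimate applies. A secondary obstacle is controlling the $k$-dependence cleanly, since empirical $W_2$ convergence on a $d$-dimensional submanifold scales like $k^{-1/d}$ for $d>4$, meaning the ``$k$ large enough'' threshold may be large in high intrinsic dimension; however, only the qualitative convergence is needed for the stated proposition.
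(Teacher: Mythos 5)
Your proposal is correct and follows the same skeleton as the paper's proof: empirical $W_2$ convergence at rate $O(k^{-2/d})$ (the paper cites Weed--Bach-type bounds; Fournier--Guillin works equally well), the triangle inequality to bound $W_2^2(\hat{\mu}^\gamma,\hat{\mu}^\xi)$, and then exactly the Markov-type counting argument to show that at most a $\delta$-fraction of matched pairs can exceed a length threshold. The only place you diverge is the final geometric step, and there you do strictly more work than is needed. The paper does not invoke any reach-based, second-order chord-deviation estimate: it simply observes that once $\|x_i^\gamma - x_{\sigma(i)}^\xi\|_2 \le \epsilon$ with both endpoints on $\mansupp$, every point of the interpolating segment is within distance $\epsilon$ of an endpoint and hence trivially lies in $B_\epsilon(\mansupp)$. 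Since the matching lengths for the retained $(1-\delta)k$ fraction tend to zero, they eventually fall below any fixed $\epsilon$ (and below $R_\mansupp$, which is only needed so that $B_\epsilon(\mansupp)$ is a topologically faithful neighborhood). This sidesteps entirely the obstacle you correctly identify---reconciling the homotopy-based injectivity radius with Federer's reach---at the cost of a linear rather than quadratic dependence of $\epsilon$ on the chord length, which is irrelevant for the qualitative statement being proved. Your sharper Step 3 would buy a quantitatively better $\epsilon(k)$ if one wanted an explicit rate, but as written it needs the reach (not $R_\mansupp$) in the denominator of the $O(r^2/\cdot)$ estimate, together with a $C^2$ (or positive-reach) hypothesis on $\mansupp$ that the paper never states.
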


The proof of this proposition is in supplement Section \ref{supp:prop:manifold}. Hence, for large enough $k$, the interpolation induced by optimal transport will approximately preserve the manifold support structure. While the theory requires high $k$ to achieve a tight bound, our empirical evaluations show good performance in the small $k$ regime. Some schematic examples are shown in Figures \ref{fig:toyEgs}, \ref{fig:coupling}, and \ref{fig:local_distribution}. 

\subsection{Clustered distributions} \label{sec:clusterTheory}
With a clustered data distribution, we preserve global structure by including many within-cluster and inter-cluster matches, 
where inter-cluster matches correspond (approximately) to the nearest points across clusters. This contrasts with 1-mixup, which, when the number of clusters is large, is biased towards primarily inter-cluster matches and does not seek to provide any structure to these random matches. 
These approximate characterizations of $k$-mixup are achieved exactly as $k \to \infty$, as argued below.

To make precise the notion of a clustered distribution, we adopt the $(m, \Delta)$-clusterable definition used by \cite{weed2019sharp, solomon2020k}. In particular, a distribution $\mu$ is $(m, \Delta)$-clusterable if $\mathrm{supp}(\mu)$ lies in the union of $m$ balls of radius at most $\Delta$. Now, if our training samples $(x_i,y_i)$ are sampled from such a distribution, where the clusters are sufficiently separated, then optimal transport will prioritize intra-cluster matchings over cross-cluster matchings. 

\begin{lemma}\label{lem:balls}
Draw two batches of samples $\{p_i\}_{i=1}^N$ and $\{q_i\}_{i=1}^N$ from a $(m,\Delta)$-clusterable distribution, where the distance between any pair of covering balls is at least $2\Delta$. If $r_i$ and $s_i$ denote the number of samples in cluster $i$ in batch 1 and 2, respectively, then the optimal transport matching will have $\frac{1}{2}\sum_i \abs{r_i-s_i}$ cross-cluster matchings.
\end{lemma}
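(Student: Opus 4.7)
The plan is to split the argument into two parts: a combinatorial lower bound on the number of cross-cluster matches valid for \emph{any} permutation, then an exchange argument showing that the OT-optimal permutation attains this bound.

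For the lower bound, I let $a_i$ denote the number of matched pairs whose both endpoints lie in cluster $C_i$. Trivially $a_i \le \min(r_i,s_i)$, and using $\min(x,y) = \tfrac{x+y-|x-y|}{2}$ together with $\sum_i r_i = \sum_i s_i = N$ gives
\[
c \;=\; N - \sum_i a_i \;\ge\; N - \sum_i \min(r_i,s_i) \;=\; \tfrac12 \sum_i |r_i - s_i|.
\]

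For attainment, I would invoke the two distance inequalities forced by the hypothesis: intra-cluster squared distance is at most $(2\Delta)^2$ (diameter of a ball of radius $\Delta$), while inter-cluster squared distance is at least $(2\Delta)^2$ (the separation assumption). Suppose toward contradiction that the OT matching has $c > \tfrac12\sum_i |r_i-s_i|$ cross-cluster pairs. Then $a_i < \min(r_i,s_i)$ for some $i$, so this cluster has both outgoing and incoming cross-cluster edges in the directed multigraph whose edges record the current cross-cluster matches. Comparing this cross-cluster flow $\{F_{ij}\}_{i\ne j}$ on the cluster graph against a minimizer $F^\star$ with the same batch marginals, the signed difference has zero row and column sums and hence decomposes into directed cycles; at least one cycle is supported on positive entries of $F$, yielding $i_1 \to i_2 \to \cdots \to i_\ell \to i_1$ realized by actual matches $(p_j \in C_{i_j},\, q_j \in C_{i_{j+1}})$. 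Re-pairing $p_{j+1}$ with $q_j$ (indices mod $\ell$) turns all $\ell$ matches intra-cluster: the pre-swap sum of squared distances is $\ge \ell(2\Delta)^2$ while the post-swap sum is $\le \ell(2\Delta)^2$, so the swap weakly improves the OT cost while strictly reducing $c$ by $\ell$. Iterating saturates the bound.

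The main obstacle I anticipate is this cycle-extraction step -- certifying that whenever $c$ exceeds the minimum, the current matches themselves contain a closed rotation to swap, rather than just an abstract signed circulation. I plan to handle it by choosing $F^\star$ greedily (intra-cluster first, cross-cluster second), so that the positive off-diagonal part of $F-F^\star$ is supported exactly on edges realized by the present matching; then the standard flow-decomposition lemma furnishes a simple directed cycle within that support. One subtlety worth flagging: with only $\ge 2\Delta$ separation, the swap weakly improves cost, so the rigorous conclusion is that \emph{some} optimal matching attains the claimed cross-cluster count; promoting the statement to every optimal matching would require strict separation $> 2\Delta$ or a genericity assumption on the samples.
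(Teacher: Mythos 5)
Your combinatorial lower bound $c \ge \tfrac12\sum_i\abs{r_i-s_i}$ is correct for every matching (the paper merely asserts it), and your closing caveat is also right: with separation exactly $2\Delta$ the exchange only shows that \emph{some} optimal matching attains a given count. The genuine gap is in the attainment step, specifically the claim that whenever $c$ exceeds the lower bound, the current matching's cross-cluster edges support a directed cycle $i_1\to i_2\to\cdots\to i_\ell\to i_1$ in the cluster graph. That is false. The difference $F-F^\star$ of two matrices with equal marginals decomposes into \emph{alternating} cycles in the bipartite row--column incidence structure, not into directed cycles lying in the positive off-diagonal support of $F$; when the cross-edges of $F$ form a directed path such as $C_1\to C_2\to C_3$, the only available rotation replaces the two short cross edges $1\to 2$ and $2\to 3$ by the intra edge $2\to 2$ and the long cross edge $1\to 3$, and nothing in the hypotheses bounds the length of that new edge.

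Worse, the gap cannot be closed, because the statement itself fails for $m\ge 3$. Place three balls of radius $\Delta$ centered at $0$, $4\Delta$, $8\Delta$ on a line, so consecutive balls are at distance exactly $2\Delta$ while the outer pair is at distance $6\Delta$; take $N=2$ with batch 1 given by $p_1=\Delta\in C_1$, $p_2=5\Delta\in C_2$ and batch 2 by $q_1=3\Delta\in C_2$, $q_2=7\Delta\in C_3$. Then $\tfrac12\sum_i\abs{r_i-s_i}=1$, but the matching $p_1\mapsto q_1$, $p_2\mapsto q_2$ costs $8\Delta^2$ with two cross-cluster edges, whereas the unique matching with one cross-cluster edge ($p_2\mapsto q_1$, $p_1\mapsto q_2$) costs $40\Delta^2$; the optimizer therefore uses two cross-cluster matches. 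The paper's own proof has the identical flaw: for general $m$ it asserts via the pigeonhole principle that excess cross-cluster matches form a cycle over the clusters, which this example refutes. Both your argument and the paper's are sound for $m=2$, where excess cross matches force one edge in each direction and the resulting $2$-cycle rotation turns both into intra-cluster edges. To salvage the general case you must either weaken the conclusion to ``at least $\tfrac12\sum_i\abs{r_i-s_i}$'' (which your first half already establishes) or add a hypothesis controlling the pairwise inter-cluster distances.
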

The proof of the above (supplement Section \ref{supp:lem:balls}) involves the pigeonhole principle and basic geometry. 
We also argue that the fraction of cross-cluster identifications approaches zero as $k \to \infty$ and characterize the rate of decrease. The proof (supplement Section \ref{supp:thm:clusters}) follows via Jensen's inequality:



\begin{theorem}
\label{thm:clusters}
Given  the setting of Lemma \ref{lem:balls}, with probability masses $p_1, \ldots, p_m$, and two batches of size $k$ matched with optimal transport, the expected fraction of cross-cluster identifications is $O\left((2k)^{-\nicefrac{1}{2}}\sum_{i=1}^m \sqrt{p_i(1-p_i)}\right)$.
\end{theorem}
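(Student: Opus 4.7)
The plan is to convert the question about optimal-transport matchings into an elementary calculation about binomial counts, using Lemma~\ref{lem:balls} as the bridge and Jensen's inequality as the key analytic step.

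First, I would fix the clustered-distribution hypothesis (inheriting the sufficient-separation assumption of Lemma~\ref{lem:balls}, which is the regime in which the theorem is meaningful). Let $r_i$ and $s_i$ be the number of samples landing in cluster $i$ in batch~1 and batch~2, respectively. Since the samples are i.i.d.\ with cluster-assignment probabilities $p_1,\dots,p_m$, each $r_i$ and $s_i$ is marginally $\mathrm{Binomial}(k,p_i)$, and $r_i$ is independent of $s_i$. By Lemma~\ref{lem:balls}, the number of cross-cluster identifications produced by the optimal assignment equals $\tfrac12\sum_i \lvert r_i - s_i\rvert$, so the fraction of cross-cluster identifications is $\tfrac{1}{2k}\sum_i \lvert r_i - s_i\rvert$ and we only need to bound its expectation.

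Second, I would apply Jensen's inequality cluster-by-cluster: since $x\mapsto x^2$ is convex,
\[
\mathbb{E}\lvert r_i - s_i\rvert \;\le\; \sqrt{\mathbb{E}(r_i - s_i)^2} \;=\; \sqrt{\Var(r_i) + \Var(s_i)} \;=\; \sqrt{2k\,p_i(1-p_i)},
\]
using independence of the two batches and the binomial variance formula (and the fact that $\mathbb{E}[r_i-s_i]=0$, so the second moment equals the variance). Summing over $i$ and dividing by $2k$ yields
\[
\mathbb{E}\!\left[\frac{1}{2k}\sum_{i=1}^m \lvert r_i - s_i\rvert\right] \;\le\; \frac{1}{2k}\sum_{i=1}^m \sqrt{2k\,p_i(1-p_i)} \;=\; \frac{\sum_{i=1}^m \sqrt{p_i(1-p_i)}}{\sqrt{2k}},
\]
which is precisely the claimed $O(\cdot)$ bound.

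There is no real obstacle here beyond being careful about what the ``fraction'' is normalized by (total matchings is $k$, not $2k$, so the factor of $\tfrac{1}{2k}$ from the lemma already gives a fraction) and ensuring the Lemma~\ref{lem:balls} hypotheses are in force so that cross-cluster identifications can be read off directly from the count discrepancies. The only modest point worth noting is that Jensen gives the correct order of magnitude: by the central limit theorem $\mathbb{E}\lvert r_i-s_i\rvert \sim \sqrt{4kp_i(1-p_i)/\pi}$ for large $k$, so the $k^{-1/2}$ rate in the theorem is sharp, not an artifact of the bound. The argument goes through with no restrictions on $m$ beyond what Lemma~\ref{lem:balls} already imposes.
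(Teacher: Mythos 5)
Your proof is correct and follows essentially the same route as the paper's: reduce to the count discrepancies $\tfrac12\sum_i\lvert r_i-s_i\rvert$ via the clustered-matching lemma, then bound $\mathbb{E}\lvert r_i-s_i\rvert\le\sqrt{2kp_i(1-p_i)}$ by Jensen's inequality using the binomial variances and independence of the batches. Your version is if anything slightly cleaner in making the normalization by $k$ and the appeal to Lemma~\ref{lem:balls} explicit.
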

Note that the absolute number of cross-cluster matches still increases as $k$ increases, providing more information in the voids between clusters. We  emphasize that the $(m,\Delta)$ assumption corresponds to a setting where clusters are well-separated such that cross-cluster identifications are as unlikely as possible. Hence, in real datasets, the fraction of cross-cluster identifications will be larger than indicated in Theorem \ref{thm:clusters}. Finally, we show that these cross-cluster matches are of length close to the distance between the clusters with high probability (proved in supplement Section \ref{supp:thm:closest}), i.e., the endpoints of the match lie in the parts of each cluster closest to the other cluster. This rests upon the fact that we are considering the $W_2$ cost, for which small improvements to long distance matchings yield large cost reductions in squared Euclidean distance. 

\begin{theorem}
Suppose density $p$ has support on disjoint compact sets (clusters) $A,B$ whose boundaries are smooth, where $p>0$ throughout $A$ and $B$. Let $D$ be the Euclidean distance between $A$ and $B$, and let $R_A, R_B$ be the radii of $A,B$ respectively. Define $A_\epsilon$ to be the subset of set $A$ that is less than $(1 + \epsilon)D$ distance from $B$, and define $B_\epsilon$ similarly. Consider two batches of size $k$ drawn from $p$ and matched with optimal transport. Then, for $k$ large enough, with high probability all cross-cluster matches will have an endpoint each in $A_\epsilon$ and $B_\epsilon$, where $\epsilon = \frac{\max(R_A,R_B)^2}{D^2}$.
\label{thm:closest}
\end{theorem}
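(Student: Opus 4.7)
The plan is to combine cyclical monotonicity of the $W_2$-optimal matching with sample concentration, deriving a contradiction to OT optimality via a local swap.

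First I would use Lemma \ref{lem:balls} (specialized to the two clusters $A$ and $B$) to conclude that the OT plan has exactly $|r - s|$ cross-cluster matches, where $r$ and $s$ are the numbers of $A$-samples in the two batches. Hoeffding's inequality then gives $|r - s| = O(\sqrt{k})$ with high probability. Second, because $p > 0$ on compact $A, B$ with smooth boundaries, standard empirical-sample concentration guarantees $\Theta(k)$ samples in each cluster, at least $\Omega(k)$ in every subregion of positive measure, and hence all but $O(\sqrt{k})$ of the cluster-$A$ samples are intra-matched with high probability.

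The core of the proof is a swap contradiction. Suppose the OT plan has a cross match $(a,b)$ with $a \notin A_\epsilon$, i.e.\ $\|a - b\| > D(1+\epsilon)$. Let $b_A$ be the closest point of $A$ to $b$ and set $d_b := \|b - b_A\| \in [D, D + 2R_B]$. By the density noted above, for any $\eta > 0$ and $k$ large one can find an intra-matched batch-$1$ sample $a' \in A$ with $\|a' - b_A\| < \eta$, whose OT partner $a'' \in A$ satisfies $\|a' - a''\| = o(1)$. Replacing $(a,b),(a',a'')$ by $(a',b),(a,a'')$ changes the total cost by
\[
\Delta = \|a' - b\|^2 + \|a - a''\|^2 - \|a - b\|^2 - \|a' - a''\|^2 = d_b^2 + \|a - b_A\|^2 - \|a - b\|^2 + o(1).
\]
Smoothness of $\partial A$ at $b_A$ yields the projection property $\langle a - b_A,\, b - b_A\rangle \leq 0$, and hence the Pythagorean bound $\|a - b\|^2 \geq \|a - b_A\|^2 + d_b^2$ with a strict excess of $-2\langle a - b_A,\, b - b_A\rangle$. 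The calibration $\epsilon = \max(R_A, R_B)^2/D^2$ is chosen precisely so that the surplus $\|a-b\|^2 - D^2(1+\epsilon)^2 > 0$ forced by $a \notin A_\epsilon$ dominates the $o(1)$ sampling error, making $\Delta < 0$ for $k$ large and contradicting optimality. The symmetric argument with $A$ and $B$ swapped (intra-$B$ swaps) forces $b \in B_\epsilon$.

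The main technical obstacle is making the projection inequality valid globally. Smoothness of $\partial A$ at $b_A$ gives $\langle a - b_A,\, b - b_A\rangle \leq 0$ cleanly only locally near $b_A$ (or uniformly if $A$ is convex), yet the offending cross-matched point $a$ can lie deep in $A \setminus A_\epsilon$ far from $b_A$. One resolution is to replace the single swap by a chain of small swaps along a path inside $A$ from $a$ toward $b_A$, each strictly decreasing cost; another is to strengthen the geometric hypothesis to convex clusters or clusters of bounded reach. Careful accounting of the $o(1)$ sampling slack against the explicit $\epsilon$-margin is also what pins down the quantitative threshold on $k$.
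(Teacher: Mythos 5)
Your high-level strategy---bound the number of cross-cluster matches, guarantee many intra-matched samples arbitrarily close to the facing boundary, and derive a contradiction to optimality via a local swap---is the same as the paper's. The gap is in the swap computation itself. Your $\Delta$ satisfies the exact identity $d_b^2 + \|a-b_A\|^2 - \|a-b\|^2 = 2\langle a-b_A,\,b-b_A\rangle$, so even granting the projection property (which, as you acknowledge, needs convexity or bounded reach, not mere smoothness) you only obtain $\Delta \le o(1)$: there is no strict negative margin, and the hypothesis $a \notin A_\epsilon$ is never actually converted into one. The claim that the surplus $\|a-b\|^2 - D^2(1+\epsilon)^2 \ge 0$ ``dominates the $o(1)$ error'' does not survive the algebra: up to $o(1)$, $-\Delta = \|a-b\|^2 - \|a-b_A\|^2 - d_b^2$, and the term $d_b^2$, which can be as large as $(D+2R_B)^2$, swallows the additive gain $2\max(R_A,R_B)^2$ that the lower bound $\|a-b\|^2 \ge D^2(1+\epsilon)^2$ provides. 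The stated value of $\epsilon$ therefore plays no operative role in your argument, which is a symptom that the mechanism is not the right one.

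The paper closes the argument differently and avoids the projection property entirely. Its swap relocates the cross match so that \emph{both} endpoints land in the near regions $A_\delta$ and $B_\delta$ (points within $D(1+\delta)$ of the opposite cluster, which contain more samples than there are cross matches for large $k$), so the new cross-match cost is at most $D^2(1+2\delta)^2 \approx D^2$ rather than $d_b^2$. The gain on the cross match is then at least $D^2(1+\epsilon)^2 - D^2(1+2\delta)^2 > 2D^2(\epsilon-2\delta)$, and $\epsilon$ is calibrated precisely so that this exceeds the worst-case cost increase suffered by the displaced within-cluster pair, which is bounded in terms of the squared radius of $A$ alone. This gain-versus-worst-case-loss comparison uses only distances, so no convexity assumption, no tangent-plane inequality, and no chain-of-swaps repair is needed. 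If you restructure your swap to also move the $B$-endpoint and replace the Pythagorean bookkeeping with that comparison, your outline goes through; as written, the core step does not.
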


Theorem \ref{thm:closest} implies that for large $k$, the vicinal distribution created by sampling along the cross-cluster matches will almost entirely lie in the voids between the clusters. If the clusters correspond to different classes, this will directly encourage the learned model to smoothly interpolate between the class labels as one transitions across the void between clusters. This is in contrast to the random matches of 1-mixup, which create vicinal distributions that can span a line crossing any part of the space without regard for intervening clusters. The behaviors noted by Theorems \ref{thm:clusters} and \ref{thm:closest} are visualized in Figures \ref{fig:toyEgs}, \ref{fig:coupling}, and \ref{fig:local_distribution}, showing that $k$-mixup provides smooth interpolation between clusters, and strengthens label preservation within clusters.


\section{Regularization Expansions} \label{sec:regTheory}

Two recent works analyze the efficacy of 1-mixup perturbatively \citep{zhang_how_2020,carratino_mixup_2020}. Both consider quadratic Taylor series expansions about the training set or a simple transformation of it, and they characterize the regularization terms that arise in terms of label and Lipschitz smoothing. We adapt these expansions to $k$-mixup and show that the resulting regularization is more locally informed via the optimal transport coupling.

In both works, perturbations are sampled from a globally informed distribution, based upon all other samples in the data distribution. In $k$-mixup, these distributions are defined by the optimal transport couplings. Given a training point $x_i$, we consider all $k$-samplings $\gamma$ that might contain it, and all possible $k$-samplings $\zeta$ that it may couple to. A locally-informed distribution is the following:
\begin{equation} \label{eq:local_matching_distrib}
\mathcal{D}_i := \frac{1}{\binom{N-1}{k-1}\binom{N}{k}} \sum_{\gamma = 1}^{\binom{N-1}{k-1}} \sum_{\zeta = 1}^{\binom{N}{k}} \delta_{ \sigma_{\gamma \zeta} (x_i)}, 
\end{equation}
where $\sigma_{\gamma \zeta}$ denotes the optimal coupling between $k$-samplings $\gamma$ and $\zeta$. This distribution will be more heavily weighted on points that $x_i$ is often matched with. An illustration of this distribution for a randomly selected point in our synthetic examples is visible in Figure \ref{fig:local_distribution}. We use ``locally-informed'' in the sense of upweighting of points closer to the point of interest that are likely to be matched to it by $k$-mixup. 

\cite{zhang_how_2020} expand about the features in the training dataset $\mathcal{D}_X := \{x_1, \ldots, x_n\}$, and the perturbations in the regularization terms are sampled from $\mathcal{D}$. We generalize their characterization to $k$-mixup, with $\mathcal{D}$ replaced by $\mathcal{D}_i$. Focusing on the binary classification problem for simplicity, we assume a loss of the form $\ell(f(x), y) = h(f(x)) - y \cdot f(x)$ for some twice differentiable $h$ and $f$. This broad class of losses includes the cross-entropy for neural networks and all losses arising from generalized linear models. 

\begin{theorem} \label{thm:featurePerturb}
Assuming a loss $\ell$ as above, the $k$-mixup loss can be written as:
\[ \kmixup{k} (f) = \mathcal{E}^{std} + \sum_{j=1}^3 \mathcal{R}_j + \mathbb{E}_{\lambda \sim \beta(\alpha+1,\alpha)}[(1-\lambda)^2 \phi(1 - \lambda)], \]
where $\lim_{a\rightarrow 0} \phi(a) = 0$, $\mathcal{E}^{std}$ denotes the standard ERM loss, and the three $\mathcal{R}_i$ regularization terms are:
\begin{align*}
&\mathcal{R}_1 = \frac{\mathbb{E}_{\lambda \sim \beta(\alpha+1,\alpha)} [1 - \lambda]}{n} \times \sum\nolimits_{i=1}^N (h'(f(x_i)) - y_i) \nabla f(x_i)^T \mathbb{E}_{r\sim \mathcal{D}_i}[r - x_i]\\
&\mathcal{R}_2 = \frac{\mathbb{E}_{\lambda \sim \beta(\alpha+1,\alpha)} [(1 - \lambda)^2]}{2n} \times \sum_{i=1}^N h''(f(x_i))  \nabla f(x_i)^T \mathbb{E}_{r\sim \mathcal{D}_i}[(r - x_i)(r - x_i)^T] \nabla f(x_i)\\
&\mathcal{R}_3 = \frac{\mathbb{E}_{\lambda \sim \beta(\alpha+1,\alpha)} [(1 - \lambda)^2]}{2n} \times \sum_{i=1}^N (h'(f(x_i)) - y_i)   \mathbb{E}_{r\sim \mathcal{D}_i}[(r - x_i)\nabla^2 f(x_i) (r - x_i)^T].
\end{align*}
\end{theorem}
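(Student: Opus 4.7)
The plan is to follow the Taylor-expansion argument of \cite{zhang_how_2020} for $1$-mixup, with the key adaptation being a reindexing step that converts the outer expectation over optimally-coupled $k$-batches into an average over training points with the localized perturbation distribution $\mathcal{D}_i$. I would first rewrite the $k$-mixup loss as
\[
\kmixup{k}(f) = \frac{1}{N}\sum_{i=1}^N \expect_{r \sim \mathcal{D}_i,\,\lambda}\bigl[\ell\bigl(f(\lambda x_i + (1-\lambda) r),\,\lambda y_i + (1-\lambda) y_r\bigr)\bigr],
\]
which follows from a counting argument: each training point $x_i$ appears in $\binom{N-1}{k-1}$ of the $\binom{N}{k}$ possible batches $\gamma$, and for every such $\gamma$ and every partner batch $\xi$ the Hungarian permutation $\sigma_{\gamma\xi}$ deterministically assigns $x_i$ to a partner whose empirical distribution over all such $(\gamma,\xi)$ pairs is exactly $\mathcal{D}_i$; the prefactor $1/N$ arises from the identity $\binom{N-1}{k-1}/(\binom{N}{k}\cdot k) = 1/N$.

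Next I would exploit the fact that $\ell(f,y) = h(f) - y^T f$ is affine in $y$, so the mixed loss splits as $\ell(f(\tilde x),\tilde y) = \lambda\,\ell(f(\tilde x), y_i) + (1-\lambda)\,\ell(f(\tilde x), y_r)$. Applying the bijection $(\gamma,\xi,\lambda,i) \leftrightarrow (\xi,\gamma,1-\lambda,\sigma(i))$---which leaves $\tilde x$ invariant while swapping $y_i \leftrightarrow y_r$ and $\lambda \leftrightarrow 1-\lambda$---shows that both summands contribute equally under the outer expectation, so the label mixing collapses. After this collapse the effective density on $\lambda$ is $2\lambda \cdot \beta(\lambda;\alpha,\alpha) = \beta(\lambda;\alpha+1,\alpha)$ (a one-line beta-function calculation using $B(\alpha,\alpha) = 2\,B(\alpha+1,\alpha)$), producing exactly the $\beta(\alpha+1,\alpha)$ expectations appearing in the theorem statement.

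With the loss now in the form $\expect[\ell(f(x_i + \delta u), y_i)]$ for $\delta = 1-\lambda$ and $u = r - x_i$, I would Taylor expand $f$ to second order in $\delta u$, substitute into $\ell$, and sort by powers of $\delta$. The zeroth-order contribution is $\mathcal{E}^{std}$. The first-order term $(h'(f(x_i)) - y_i)\nabla f(x_i)^T u$, after taking expectation over $r\sim\mathcal{D}_i$ and integrating $\delta$ against $\beta(\alpha+1,\alpha)$, yields $\mathcal{R}_1$. The second-order terms decompose cleanly into the $h''$-contribution from the curvature of the outer loss ($\mathcal{R}_2$) and the $(h'(f(x_i))-y_i)\nabla^2 f(x_i)$-contribution from the curvature of the network ($\mathcal{R}_3$). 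The remaining $O(\delta^3)$ Taylor remainder, bounded using $C^2$ smoothness of $f$ and $h$, packages into $\expect[(1-\lambda)^2 \phi(1-\lambda)]$ with $\phi(a) \to 0$ as $a \to 0$.

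The only non-routine step is the reindexing: one must verify carefully that after summing over all $(\gamma,\xi)$ pairs and within-batch positions, the marginal distribution of the partner point at any fixed $x_i$ is exactly $\mathcal{D}_i$, and that the combinatorial prefactors collapse to $1/N$ independently of $k$. Once this identification is in place, the remaining steps closely mirror the 1-mixup analysis of \cite{zhang_how_2020}, with $\mathcal{D}_i$ playing the role their uniform training-point distribution played; all new information about the optimal transport coupling is packaged into the moment statistics $\expect_{r\sim\mathcal{D}_i}[r - x_i]$ and $\expect_{r\sim\mathcal{D}_i}[(r-x_i)(r-x_i)^T]$ that appear in the three regularizers.
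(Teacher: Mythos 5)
Your proposal is correct and follows essentially the same route as the paper's proof: reduce the batch-pair average to a per-point average against the localized distribution $\mathcal{D}_i$ (with the same $\binom{N-1}{k-1}/(k\binom{N}{k}) = 1/N$ counting), collapse the label mixing by the $\lambda \leftrightarrow 1-\lambda$ symmetry to obtain the $\beta(\alpha+1,\alpha)$ expectation, and then invoke the Taylor expansion of \cite{zhang_how_2020} in $1-\lambda$. The only cosmetic difference is that you perform the reindexing before the label collapse and obtain the tilted beta density by a direct computation, whereas the paper uses the Bernoulli--beta conjugacy swap first; the two are equivalent.
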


A proof is given in Section \ref{sec:featurePerturb} of the supplement and follows from some algebraic rearrangement and a Taylor expansion in terms of $1-\lambda$. The higher-order terms are captured by $\mathbb{E}_{\lambda \sim \beta(\alpha+1,\alpha)}[(1-\lambda)^2 \phi(1 - \lambda)]$. $\mathcal{E}^{std}$ represents the constant term in this expansion, while the regularization terms $\mathcal{R}_i$ represent the linear and quadratic terms. These effectively regularize $\nabla f(x_i)$ and $\nabla^2 f(x_i)$ with respect to local perturbations $r - x_i$ sampled from $\mathcal{D}_i$, ensuring that our regularizations are locally-informed. In other words, the regularization terms vary over the support of the dataset, at each point penalizing the characteristics of the locally-informed distribution rather than those of the global distribution. This allows the regularization to adapt better to local data (e.g. manifold) structure. For example, $\mathcal{R}_2$ and $\mathcal{R}_3$ penalize having large gradients and Hessians respectively along the directions of significant variance of the distribution $\mathcal{D}_i$ of points $x_i$ is likely to be matched to. When $k=1$, this $\mathcal{D}_i$ will not be locally informed, and will instead effectively be a global variance measure. As $k$ increases, the $\mathcal{D}_i$ will instead be dominated by matches to nearby clusters, better capturing the smoothing needs in the immediate vicinity of $x_i$.
%
Notably, the expansion is in the feature space alone, yielding theoretical results in the case of 1-mixup on generalization and adversarial robustness.

An alternative approach by \cite{carratino_mixup_2020} characterizes mixup as a combination of a reversion to mean followed by random perturbation. In supplement Section \ref{sec:quadPerturb}, we generalize their result to $k$-mixup via a locally-informed mean and covariance. 

\section{Implementation and Experiments} \label{sec:results}
\subsection{$k$-mixup Implementation \& Computational Cost} 
\label{supp:pseudocode}

Figure \ref{alg:kmixup} shows pseudocode for our implementation of one epoch of $k$-mixup training, in the style of Figure 1 of \cite{zhang_mixup_2018}.\footnote{Python code for applying $k$-mixup to CIFAR10 can be found at \url{https://github.com/AnmingGu/kmixup-cifar10}.} 
Also provided in Algorithm \ref{alg:alphaK} is the procedure used in the experiments to adjust $\alpha$ as $k$ increases (denoted $\alpha_k$). While $\xi$ cannot be evaluated in closed form as a function of $\alpha$, since $\xi$ monotonically increases with $\alpha$ it is sufficient to iteratively increase $\alpha_k$ until the desired $\xi$ is reached. The while loop involves only computing the empirical expectation of a simple function of a scalar beta-distributed random variable and is therefore fast to compute.

\definecolor{green}{rgb}{0,0.25,0}
\begin{figure}[h]
\begin{Verbatim}[commandchars=\\\{\}]
# y1, y2 should be one-hot vectors
\textcolor{green}{\textbf{for}} (x1, y1), (x2, y2) \textcolor{green}{\textbf{in zip}}(loader1, loader2):
    idx = numpy.zeros_like(y1)
    \textcolor{green}{\textbf{for}} i \textcolor{green}{\textbf{in range}}(x1.shape[0] // k): 
        cost = scipy.spatial.
            distance_matrix(x1[i * k:(i+1) * k], x2[i * k:(i+1) * k])
        _, ix = scipy.optimize.linear_sum_assignment(cost)
        idx[i * k:(i+1) * k] = ix + i * k
    x2 = x2[idx]
    y2 = y2[idx]
    lam = numpy.random.beta(alpha, alpha)
    x = Variable(lam * x1 + (1 - lam) * x2)
    y = Variable(lam * y1 + (1 - lam) * y2)
    optimizer.zero_grad()
    loss(net(x), y).backward()
    optimizer.step()
\end{Verbatim}
\caption{$k$-mixup implementation}
\label{alg:kmixup}
\end{figure}


\begin{algorithm}[h]
\begin{algorithmic}[1]
\REQUIRE{Chosen $\alpha_1$, desired $k$, trials $N$, constant $c > 1$, threshold $\gamma$, dataset of interest.}
\STATE{Using $N$ trials, compute $\bar{\xi}_1$ as the empirical average of the squared distance between two random points in the dataset.}
\STATE{Using $N/k$ trials, compute $\bar{\xi}_k$ as the empirical average of the squared Wasserstein-2 distance between two random $k$-samples drawn from the dataset.}
\STATE{Using $N$ trials, compute empirical average $\bar{\lambda}_1$ of $\min(\lambda_1^2, (1-\lambda_1)^2)$ where $\lambda_1 \sim \beta(\alpha_1, \alpha_1)$.}
\STATE{$\xi \gets \bar{\xi}_1 \bar{\lambda}_1$}
\STATE{Initialize $\alpha_k = \alpha$, $\bar{\lambda}_k = \bar{\lambda}_1$.}
\WHILE{$\bar{\lambda}_k\bar{\xi}_k < \xi$ and $\alpha_k < \gamma$}
\STATE{$\alpha_k \gets \alpha_k  c$}
\STATE{Using $N$ trials, compute empirical average $\bar{\lambda}_k$ of $\min(\lambda_k^2, (1-\lambda_k)^2)$ where $\lambda_k \sim \beta(\alpha_k, \alpha_k)$.}
\ENDWHILE
\STATE{Output $\alpha_k$.}
\end{algorithmic}
\caption{Choosing $\alpha_k$ to maintain constant $\xi$}
\label{alg:alphaK}
\end{algorithm}

\textbf{Computational cost.} 
 While the cost of 
the Hungarian algorithm is $O(k^3)$, it provides $k$ data points for regularization, yielding an amortized $O(k^2)$ complexity per data point. For the smaller values of $k$ that we empirically consider, approximate Sinkhorn-based methods \citep{cuturi2013sinkhorn} are slower in practice, and the Hungarian cost remains small relative to that of gradient computation (e.g. for CIFAR-10 and $k=32$, the Hungarian algorithm costs 0.69 seconds per epoch in total). Computing the distance matrix input to the OT matching costs $O(k^2 d)$ where $d$ is dimension, yielding an amortized $O(k d) $ complexity per data point. With the high dimensionality of CIFAR, a naive GPU implementation of this step of the computation adds about 0.5 seconds per epoch. Note that, on our hardware, the overall cost of an epoch is greater than 30 seconds. Moreover, training convergence speed is unaffected in terms of epoch count, unlike in manifold mixup \citep{verma_manifold_2018}, which in our experience converges slower and has larger computational overhead. Overall, therefore, there is little computational downside to generalizing 1-mixup to $k$-mixup, and, as we will see, the potential for performance gains.

\subsection{Empirical Results}
The simplicity of our method allows us to test the efficacy of $k$-mixup over a wide range of datasets and domains: 5 standard benchmark image datasets, 5 UCI tabular datasets, and a speech dataset; employing a variety of fully connected and convolutional neural network architectures. Across these experiments we find that $k$-mixup for $k>1$ consistently improves upon 1-mixup after hyperparameter optimization. The gains are generally on par with the improvements of 1-mixup over no-mixup (ERM). Interestingly, when the mean perturbation distances (squared) are constrained to be at a fixed level $\xi^2$, $k$-mixup for $k>1$ still usually improves upon 1-mixup, with the improvement especially stark for larger perturbation sizes. In the tables below, $\xi$ is used to denote the root mean square of the perturbation distances, and $\alpha$ denotes the $\alpha$ used in the $k=1$ case to achieve this $\xi$. We show here sweeps over $k$ and $\xi$, choosing $k$ and $\xi$ in practice is discussed in Supplement Section \ref{app:hyper}.

Unless otherwise stated, our training is done over 200 epochs via a standard SGD optimizer, with learning rate 0.1 decreased at epochs 100 and 150, momentum 0.9, and weight decay $10^{-4}$. Note that throughout, we focus on comparing ERM, 1-mixup, and $k$-mixup on different architectures, rather than attempting to achieve a new state of the art on these benchmarks.  

\textbf{Image datasets.} 
Our most extensive testing was done on image datasets, given their availability and the efficacy of neural networks in this application domain. In Table \ref{fig:master}, we show our summarized error rates across various benchmark datasets and network architectures (all results are averaged over 20 random trials).\footnote{As a result, Table \ref{fig:master} summarizes results from training 3680 neural network models in total.} For each combination of dataset and architecture, we report the improvement of $k$-mixup over 1-mixup, allowing for optimization over relevant parameters. In this table, hyperparameter optimization refers to optimizing over a discrete set of the designated hyperparameter values listed in the detailed hyperparameter result tables shown below for each dataset and setting. As can be seen, the improvement is on par with that of 1-mixup over ERM. We also show the performance of $k=16$ (a heuristic choice for $k$), which performed well across all experiments, outperforming 1-mixup in nearly all instances. This $k=16$-mixup performance is notable as it only requires optimizing a single hyperparameter $\alpha$ / $\xi$, the same hyperparameter used for 1-mixup. 

\begin{table}[h]
\caption{Summary of image dataset error rate results (in percent, all results averaged over 20 trials). Note that for each setting, $k$-mixup with hyperparameter optimization outperforms both ERM and optimized 1-mixup. We also show 16-mixup as a heuristic that only involves optimizing $\alpha$ / $\xi$, i.e. the hyperparameter space is the same as 1-mixup. This heuristic outperforms ERM and 1-mixup in all but one instance, where it matches the performance of 1-mixup. Note that, on average, the improvement of $k$-mixup over 1-mixup tends to be similar in magnitude to the improvement of 1-mixup over ERM. } 
\centering\footnotesize\setlength\tabcolsep{2pt}
\begin{tabular}{ |l|c|c|c|c||c| } 
 \hline
 \textbf{Dataset (Confidence)} &\textbf{Architecture} & \textbf{ERM} & \textbf{1-mixup ($\xi$ opt.)} & \textbf{$k$-Mixup ($k, \xi$ opt.)} & \textbf{$16$-Mixup (heuristic, $\xi$ opt.)} \\ \hline
MNIST $(\pm .02)$& LeNet & 0.95 & 0.76 & \textbf{0.66} & 0.74 \\\hline
CIFAR-10 $(\pm .03)$& ResNet18 &5.6 &4.18 & \textbf{4.02} & \textbf{4.02}\\\hline
$\qquad \qquad \quad \! (\pm .03)$ & DenseNet-BC-190 &3.70&3.29&\textbf{2.85}& \textbf{2.85}\\\hline
$\qquad \qquad \quad \! (\pm .09)$ & WideResNet-101 &11.6&11.53&\textbf{11.25}& \textbf{11.25} \\\hline 
CIFAR-100 $(\pm .05)$ & DenseNet-BC-190 & 18.91& {18.91}\tablefootnote{Achieved with $\alpha = 0$, i.e. equivalent to ERM.} & \textbf{18.31} & \textbf{18.31}\\\hline
SVHN $(\pm .02)$ & ResNet18 & 3.37 &2.93 & \textbf{2.78}& 2.93\\\hline
Tiny ImageNet $(\pm .06)$& ResNet18 & 38.50 & 37.58 & \textbf{35.67} & \textbf{35.67}\\\hline
\end{tabular}
\label{fig:master}
\end{table}

\begin{wraptable}{r}{0.5\textwidth}
\vspace{-0.2in}
\caption{Results for MNIST with a LeNet architecture (no mixup (ERM) error: 0.95\%), averaged over 20 trials ($\pm .02$ confidence on test error). Note the best $k$-mixup beats the best 1-mixup by 0.1\%; on the same order as the 0.19\% improvement of 1-mixup over ERM. }
\vspace{-0.15in}
\centering\begin{center}\footnotesize\setlength\tabcolsep{2pt}
\begin{tabular}{ |l|c|c|c|c|c|c|c| } 
 \hline
 & $\alpha\!=\!.05$ & $\alpha\!=\!.1$ & $\alpha\!=\!.2$ & $\alpha\!=\!.5$ & $\alpha\!=\!1$ & $\alpha\!=\!10$ & $\alpha\!=\!100$  \\ \hline
 $k$ & $\xi = 0.5$ & $\xi=0.6$ & $\xi=0.8$ & $\xi=1.2$ & $\xi=1.4$ & $\xi =2.1$ & $\xi=2.2$\\\hline
1 & 0.79 & 0.79 & 0.76 & 0.86 & 0.83 & 1.05 & 1.26\\ \hline
2 & 0.85 & 0.79 & 0.90 & 0.76 & 0.83 & 0.94 & 1.14\\ \hline
4 & 0.91 & 0.80 & 0.83 & 0.81 & 0.85 & 0.89 & 0.95\\ \hline
8 & 0.80 & 0.81 & 0.78 & 0.79 & 0.78 & 0.81 & 0.83\\ \hline
16 & 0.77 & 0.82 & 0.80 & 0.75 & 0.80 & 0.74 & 0.75\\ \hline
32 & 0.79 & 0.75 & 0.77 & 0.79 & 0.80 & 0.76 & 0.71\\ \hline
64 & 0.80 & 0.82 & 0.77 & 0.78 & 0.78 & \textbf{0.66} & 0.71\\ \hline
\end{tabular}
\end{center}
\vspace{-.2in}
\label{fig:MNISTTiny}
\end{wraptable}
  
Full parameter sweeps for all settings shown in Table \ref{fig:master} are in Appendix \ref{sec:sweeps}, with two presented below that are indicative of the general pattern that we see as we vary $\alpha$ / $\xi$ and $k$. In Table \ref{fig:MNISTTiny}, we show results for MNIST \citep{lecun-mnisthandwrittendigit-2010} with a slightly modified LeNet architecture to accommodate grayscale images, and for Tiny ImageNet with a PreAct ResNet-18 architecture as in \cite{zhang_mixup_2018}. Each table entry is averaged over 20 Monte Carlo trials. Here and throughout the remainder of the paper, error bars are reported for the performance results.\footnote{These error bars are the one standard deviation of the Monte Carlo average, where for brevity we report only the worst such variation over the elements of the corresponding results table.}  
In the MNIST results, for each $\xi$ the best generalization performance is for some $k > 1$, i.e. $k$-mixup outperforms 1-mixup for all perturbation sizes. The lowest error rate overall is achieved with $k=64$ and $\xi=2.1$: $0.66\%$, an improvement of 0.1\% over the best $1$-mixup and an improvement of 0.29\% over ERM.

For Tiny ImageNet, we again see that for each $\alpha$ / $\xi$, the best generalization performance is for some $k > 1$ (note that the $\xi$ are larger for Tiny ImageNet than MNIST due to differences in normalization and size of the images). The lowest error rate overall is achieved with $k=16$ and $\xi = 1000$: $35.67$\%, and improvement of 1.91\% over the best 1-mixup and an improvement of 2.83\% over ERM. In both datasets, we see that for each perturbation size value $\xi$, the best generalization performance is for some $k > 1$, but that the positive effect of increasing $k$ is seen most clearly for larger $\xi$. This strongly supports the notion that $k$-mixup is significantly more effective than 1-mixup at designing an effective vicinal distribution for larger perturbation budgets. The lowest overall error rates are achieved for relatively high $\xi$ and high $k$.

\begin{figure}[t]
\centering
\begin{subfigure}[b]{.225\textwidth}
\begin{center}
\includegraphics[width=1.3in]{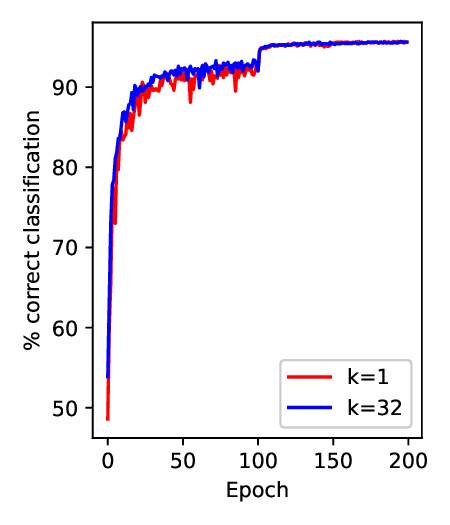}\vspace{-.2in}
\end{center}
\caption{Test acc., $\alpha = 1$.}
\end{subfigure}
\hspace{2mm}
\begin{subfigure}[b]{.225\textwidth}
\begin{center}
\includegraphics[width=1.3in]{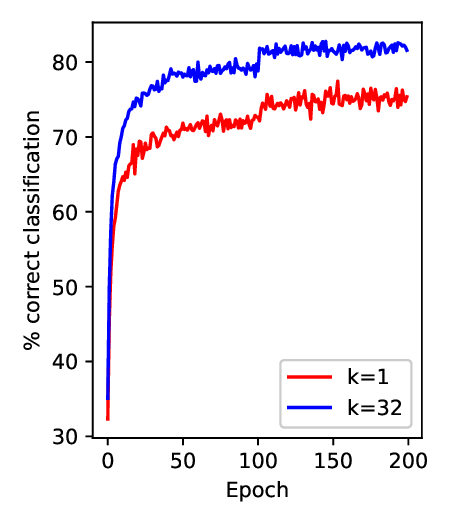}\vspace{-.2in}
\end{center}
\caption{Train acc., $\alpha = 1$.}
\end{subfigure}
\hspace{2mm}
\begin{subfigure}[b]{.225\textwidth}
\begin{center}
\includegraphics[width=1.3in]{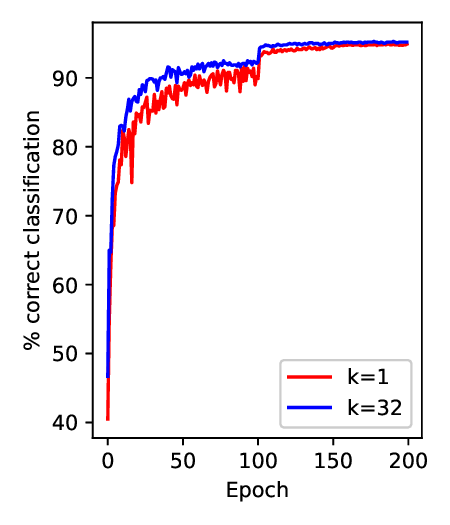}\vspace{-.2in}
\end{center}
\caption{Test acc., $\alpha = 10$.}
\end{subfigure}
\hspace{2mm}
\begin{subfigure}[b]{.225\textwidth}
\begin{center}
\includegraphics[width=1.3in]{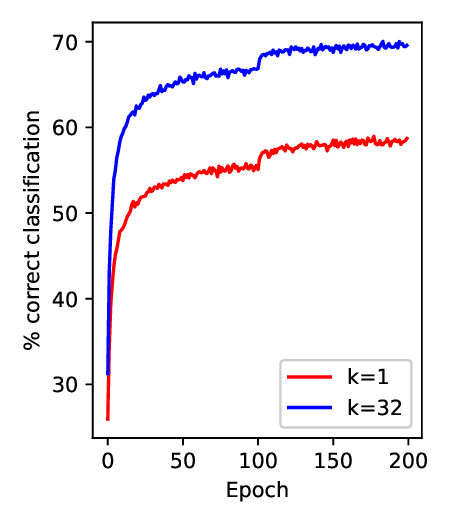}\vspace{-.2in}
\end{center}
\caption{Train acc., $\alpha = 10$.}
\end{subfigure}
\caption{Training convergence of $k=1$ and $k=32$-mixup on CIFAR-10, averaged over 20 random trials. Both train at roughly the same rate ($k=32$ slightly faster), the train accuracy discrepancy is due to the more class-accurate vicinal training distribution created by higher $k$-mixup.
}
\label{fig:Conv}
\end{figure}

\begin{wraptable}{l}{0.5\textwidth}
\vspace{-0.2in}
\caption{Results for Tiny ImageNet with a ResNet18 architecture (no mixup (ERM) error: 38.50\%), averaged over 20 trials ($\pm .06$ confidence on test error). Note the best $k$-mixup beats the best 1-mixup by 1.91\%; better than the 0.92\% improvement of 1-mixup over ERM. }
\centering\begin{center}\footnotesize\setlength\tabcolsep{2pt}
\vspace{-.15in}
\begin{tabular}{ |l|c|c|c|c|c|c|c| } 
 \hline
& $\alpha\!=\!.1$ & $\alpha\!=\!.2$ & $\alpha\!=\!.5$ & $\alpha\!=\!1$ & $\alpha\!=\!10$  \\ \hline
 $k$ & $\xi = 9.7$ & $\xi=13$ & $\xi=18$ & $\xi=22$ & $\xi=32$ \\\hline
1 & 38.47 & 38.26 & 37.79 & 37.61 & 37.58\\ \hline
2 & 38.48 & 38.06 & 37.89 & 37.43 & 37.00\\ \hline
4 & 38.51 & 38.13 & 37.67 & 37.44 & 36.21\\ \hline
8 & 38.42 & 38.05 & 37.67 & 37.19 & 35.86\\ \hline
16 & 38.47 & 38.10 & 37.60 & 37.28 & \textbf{35.67}\\ \hline
32 & 38.45 & 38.04 & 37.60 & 37.29 & 35.88\\ \hline
\end{tabular}
\end{center}
\vspace{-.1in}
\label{fig:Tiny}
\end{wraptable}
Additionally, we show training curves (performance as a function of epoch) demonstrating that the use of $k$-mixup does not affect training convergence. These are shown for ResNet-18 on CIFAR-10 in Figure \ref{fig:Conv}, which is indicative of what is seen across all our experiments. The training speeds (test accuracy in terms of epoch) for 1-mixup and 32-mixup show no loss of convergence speed with $k=32$-mixup, with (if anything) $k=32$ showing a slight edge. The discontinuity at epoch 100 is due to our reduction of the learning rate at epochs 100 and 150 to aid convergence (used throughout our image experiments). The train accuracy shows a similar convergence profile between 1- and 32-mixup; the difference in absolute accuracy here (and the reason it is less than the test accuracy) is because the training distribution is the mixup-modified vicinal distribution. The curve for $k=32$ is higher, especially for $\alpha = 10$, because the induced vicinal distribution and labels are more consistent with the true distribution, due to the better matches from optimal transport. The large improvement in train accuracy is remarkable given the high dimension of the CIFAR-10 data space, since it indicates that $k=32$-mixup is able to find significantly more consistent matches than $k=1$-mixup.

\textbf{UCI datasets.} 
To demonstrate efficacy on non-image data, we tried $k$-mixup on UCI datasets \citep{Dua:2019} of varying size and dimension: Iris (150 instances, dim. 4), Breast Cancer Wisconsin-Diagnostic (569 instances, dim. 30), Abalone (4177 instances, dim. 8), Arrhythmia (452 instances, dim. 279), and Phishing (11055 instances, dim. 30). For Iris, we used a 3-layer network with 120 and 84 hidden units; for Breast Cancer, Abalone, and Phishing, we used a 4-layer network with 120, 120, and 84 hidden units; and lastly, for Arrhythmia we used a 5-layer network with 120, 120, 36, and 84 hidden units. 
For these datasets we used a learning rate of 0.005 instead of 0.1. Each entry is averaged over 20 Monte Carlo trials. Test error rate is shown in Figure \ref{fig:UCI}. 
$k$-mixup improves over 1-mixup in each case, although for Arrhythmia no mixup outperforms both. In these small datasets, the best performance seems to be achieved with relatively small $\alpha$ ($0.05, 0.1$) and larger $k$ (4 or greater). 













\begin{table}[h]
\caption{Test error on UCI datasets using fully connected networks, averaged over 20 random trials.}
\centering\footnotesize\setlength\tabcolsep{1.3pt}
\begin{center}
\begin{tabular}{cc}
\begin{tabular}{|l|c|c||c|c|c|c|c|c|}
\hline
Dataset & $\alpha$ & $\xi$ & $k=1$ & $k=2$ & $k=4$ & $k=8$ & $k=16$ \\
\hline
{Abalone} &0.05& .07 & 71.37 & 71.32 & {71.23} & 71.54 & 71.47\\ 
\cline{2-8}
(ERM: 72.32)&  0.1 & .09 & 71.32 & 71.41 & 71.82 & {71.05} & 71.07\\
    \cline{2-8}
&  1.0 & .20 & 71.56 & 71.85 & 71.26 & 71.29 & \textbf{70.90}\\
    \hline
\hline
{Arrhythmia} &0.05& 13.97 & 35.15 & 34.84 & 33.88 & 35.16 & 34.97\\
\cline{2-8}
(ERM: \textbf{32.06})& 0.1 & 17.83 & 34.14 & 36.13 & 34.29 & 35.64 & \textbf{33.39}\\
    \cline{2-8}
& 1.0 & 42.28 & 34.25 & 35.51 & 33.78 & 33.55 & 33.64\\
    \hline
\hline
{Cancer} &0.05& 19 & 10.32 & 9.08 & 9.52 & \textbf{8.63} & 9.35\\ 
\cline{2-8}
(ERM: 11.25)& 0.1 & .26 & 10.42 & 9.75 & 9.92 & 9.14 & 10.25\\ 
    \cline{2-8}
& 1.0 & .59 & 12.31 & 10.83 & 10.23 & 9.85 & 9.15\\
    \hline
\end{tabular}
\begin{tabular}{|l|c|c||c|c|c|c|c|c|}
\hline
Dataset & $\alpha$ & $\xi$ & $k=1$ & $k=2$ & $k=4$ & $k=8$ & $k=16$ \\
    \hline
{Iris} &0.05& .07 & 6.70 & 4.20 & 3.30 & \textbf{2.50} & 3.70\\
\cline{2-8}
(ERM: 4.10)& 0.1 & .10 &  6.00 & 2.90 & {2.70} & 3.50 & 3.60\\
    \cline{2-8}
& 1.0 & .22 & 7.00 & 3.90 & 2.90 & 3.80 & 2.90\\
    \hline
        \hline
{Phishing} &0.05& .28 & 3.30 & 3.14 & \textbf{3.05} & 3.17 & 3.06\\ 
\cline{2-8}
(ERM: 3.43)& 0.1 & .38 & 3.30 & 3.36 & 3.35 & 3.34 & {3.26}\\
    \cline{2-8}
& 1.0 & .85 & 4.69 & 4.55 & 4.27 & {4.18} & 4.20\\
    \hline
\end{tabular}
\end{tabular}
\end{center}
\label{fig:UCI}
\end{table}

\textbf{Speech dataset}. Performance is also tested on a speech dataset: Google Speech Commands \citep{warden2018speech}. Results with a simple LeNet architecture are in Table \ref{fig:Speech}. Each table entry is averaged over 20 Monte Carlo trials ($\pm .014$ confidence on test performance). We augmented the data in the same way as \cite{zhang_mixup_2018}, i.e. we sample the spectrograms from the data using a sampling rate of 16 kHz and equalize their sizes at $160 \times 101$. We also use similar training parameters: we train for 30 epochs using a learning rate of $3\times 10^{-3}$ that is divided by 10 every 10 epochs. The improvement of the best $k$-mixup over the best 1-mixup is 1.11\%, with the best $k$-mixup performance being for $k=16$ and large $\alpha$ / $\xi$. Note this improvement is greater than the 0.83\% improvement of 1-mixup over ERM. 

\begin{wraptable}{r}{0.35\textwidth}
\vspace{-.2in}
\caption{Google Speech Commands test error using LeNet architecture (no mixup (ERM) error: 12.26\%), averaged over 20 Monte Carlo trials. Note the best $k$-mixup beats the best 1-mixup by 1.11\%, greater than the 0.83\% improvement of 1-mixup over ERM. 
}
\centering\begin{center}\footnotesize\setlength\tabcolsep{2pt}
\vspace{-.15in}
\begin{tabular}{ |l|c|c|c|c|c| } 
 \hline
 & $\alpha=.1$ & $\alpha=.2$ & $\alpha = .5$ & $\alpha=1$ & $\alpha=10$\\\hline
$k$ & $\xi=14$ & $\xi=19$ & $\xi=26$ & $\xi=31$ & $\xi=45$\\\hline
1 & 11.89 & 11.55 & 11.43 & 11.55 & 12.16\\ \hline 
2 & 11.76 & 11.48 & 11.32 & 11.38 & 11.62\\ \hline 
4 & 11.68 & 11.36 & 11.18 & 11.19 & 11.20\\ \hline 
8 & 11.83 & 11.36 & 11.14 & 11.14 & 10.84\\ \hline 
16 & 11.72 & 11.36 & 11.11 & 11.04 & \textbf{10.32}\\ \hline 
\end{tabular}
\end{center}
\vspace{-.15in}
\label{fig:Speech}
\end{wraptable}

\textbf{Toy datasets.}
Note that for completeness, in Appendix \ref{app:toy} we provide quantitative results for the toy datasets of Figures \ref{fig:toyEgs}, \ref{fig:coupling}, and \ref{fig:local_distribution}, confirming the qualitative analysis therein.

\textbf{Distribution shift.} 
As mentioned above, a key benefit of mixup is that it (approximately) smoothly linearly interpolates between classes and thereby should provide a degree of robustness to distribution shifts that involve small perturbation of the features \cite{zhang_mixup_2018}. Here we test two such forms of distribution shift: additive Gaussian noise and FGSM white box adversarial attacks (a full exploration of the infinite set of possible distribution shifts is beyond the scope of this paper). Additive Gaussian noise can arise from unexpected noise in the imaging sensor, and testing adversarial FGSM attacks on mixup was introduced in the original mixup paper \cite{zhang_mixup_2018} as a much more difficult perturbation test than i.i.d. additive noise. As in \cite{zhang_mixup_2018}, we limit the experiment to basic FGSM attacks, since iterative PGD attacks are too strong, making any performance improvements seem less relevant in practice and calling into question the realism of using PGD as a proxy for distribution shifts. 

Additive Gaussian noise results for CIFAR-10 and various levels $\epsilon$ of noise and mixup parameter $\alpha$ are shown in Figure \ref{fig:noise}.\footnote{Smaller values of $\alpha$ were tried, but this decreased performance for all $k$ values.} Note that the $k=16$ outperforms $k=1$ for all noise levels by as much as 4.29\%, and ERM by as much as 16\%. 

\begin{table}[h]
\caption{Additive noise: Error for CIFAR-10 ResNet-18 with additive Gaussian noise of standard deviation $\epsilon/255$ (results averaged over 30 trials, $\pm 0.6$ confidence). $k=16$-mixup outperforms 1-mixup and ERM in each instance (best performance for each $\epsilon$ in bold). }
\centering\footnotesize\setlength\tabcolsep{3pt}
\begin{tabular}{ |c||c||c|c|c||c|c|c||c|c|c|} 
 \hline
   &  & \multicolumn{3}{|c||}{$\alpha=2$} & \multicolumn{3}{|c||}{$\alpha=5$} & \multicolumn{3}{|c|}{$\alpha=10$}\\\hline
Noise & ERM& $k\!=\!1$& $k\!=\!4$&  $k\!=\!16$& $k\!=\!1$&  $k\!=\!4$&  $k\!=\!16$& $k\!=\!1$&  $k\!=\!4$&  $k\!=\!16$ \\\hline\hline
$\epsilon\!=\!8$ & 15.72 & 11.10 & 11.65 & \textbf{11.00} & 11.39 & 11.19 & 11.28 & 11.88 & 11.28 & 11.53 \\ \hline
$\epsilon\!=\!10$ & 22.21 & 15.47 & 16.06 & 14.34 & 15.78 & 14.69 & \textbf{14.20} & 15.43 & 14.56 & 14.37\\ \hline
$\epsilon\!=\!12$ & 29.55 & 21.69 & 21.75 & 18.75 & 21.19 & 19.06 & 17.71 & 20.56 & 18.45 & \textbf{17.69}\\ \hline
$\epsilon\!=\!14$ & 37.93 & 29.69 & 28.34 & 23.86 & 27.43 & 23.72 & 21.62 & 25.90 & 22.62 & \textbf{21.61}\\ \hline
\end{tabular}
\label{fig:noise}
\end{table}





Following the experimental setup of \cite{zhang_mixup_2018}, Figure \ref{fig:adv} shows results on white-box adversarial attacks generated by the FGSM method (implementation of \cite{kim2020torchattacks}\footnote{Software has MIT License}) for various values of maximum adversarial perturbation. As in \cite{zhang_mixup_2018}, the goal of this test is not to achieve results comparable to the much more computationally intensive methods of adversarial training. This would be impossible for a non-adversarial regularization approach. 
%
We show CIFAR-10 accuracy on white-box FGSM adversarial data (10000 points), where the maximum adversarial perturbation is set to $\epsilon/255$; performance is averaged over 30 Monte Carlo trials ($\pm 0.6$ confidence).
Note that the highest $k$ outperforms $k=1$ uniformly by as much as 6.12\%, and all $k>1$-mixups outperform or statistically match $k=1$-mixup for all attack sizes. 
Similar results for MNIST are in Table \ref{fig:adv}(b), with the FGSM attacks being somewhat less effective. Here again, the highest $k$ performs the best for all attack sizes.

The improved robustness shown by $k$-mixup speaks to a key goal of mixup, that of smoothing the predictions in the parts of the data space where no/few labels are available. This smoothness should make adversarial attacks require greater magnitude to successfully ``break'' the model.

\begin{table}[h]
\caption{Adversarial shifts: error on white-box FGSM attacks. Parameter $\alpha$ chosen to maximize $k=1$ performance. Large $k$-mixup outperforms 1-mixup in each instance. }
\centering\footnotesize\setlength\tabcolsep{3pt}
\begin{tabular}{cc}
\begin{tabular}{ |c|c|c|c|c|c|c|} 
 \hline
$k$&  $\epsilon\!=\!.5$ & $\epsilon\!=\!1$ & $\epsilon\!=\!2$ & $\epsilon\!=\!4$ & $\epsilon\!=\!8$  & $\epsilon=16$\\ \hline
$1$ & 20.63 & 26.25 & 31.07 & 36.72 & 48.55 & 74.78\\ \hline
$2$ & 20.40 & 25.70 & 29.89 & 34.34 & 43.78 & 72.22\\ \hline
$4$ & 20.73 & 26.29 & 30.59 & 34.94 & 43.94 & 71.20\\ \hline
$8$ & 20.58 & 26.06 & 30.31 & 34.50 & 43.51 & 70.18\\ \hline
$16$ & \textbf{20.21} & \textbf{25.50} & \textbf{29.57} & \textbf{33.80} & \textbf{43.32} & \textbf{68.66}\\ \hline
\end{tabular}
&
\begin{tabular}{ |c|c|c|c|c|c|c|} 
 \hline
 $k$&  $\epsilon\!=\!.5$ & $\epsilon\!=\!1$ & $\epsilon\!=\!2$ & $\epsilon\!=\!4$ & $\epsilon\!=\!8$  & $\epsilon=16$\\ \hline
$1$ & 1.23 & 1.66 & 2.37 & 3.88 & 7.70 & 18.10\\ \hline
$2$ & 1.12 & 1.50 & 2.11 & 3.58 & 7.44 & 19.81\\ \hline
$4$ & 1.07 & 1.37 & 1.94 & 3.25 & 7.30 & 22.02\\ \hline
$8$ & 1.08 & 1.40 & 1.88 & 3.12 & 6.92 & 21.00\\ \hline
$16$ & 0.88 & 1.11 & 1.50 & 2.42 & 5.28 & 16.44\\ \hline
$32$ & \textbf{0.85} & \textbf{1.07} & \textbf{1.39} & \textbf{2.27} & \textbf{4.91} & \textbf{14.95}\\ \hline
\end{tabular}\\
(a) CIFAR-10 ($\pm 0.6$ confidence). $\alpha=0.5$. &
 (b) MNIST ($\pm 0.1$ confidence). $\alpha=10$.
\end{tabular}
\label{fig:adv}
\end{table}



\textbf{Manifold mixup.} 
We have also compared to manifold mixup \citep{verma_manifold_2018}, which aims to interpolate data points in deeper layers of the network to get more meaningful interpolations. This leads to the natural idea of doing $k$-mixup in these deeper layers. 
We use the settings  in \cite{verma_manifold_2018}, i.e., for ResNet18, the mixup layer is randomized (coin flip) between the input space and the output of the first residual block.\footnote{See \url{https://github.com/vikasverma1077/manifold\_mixup}} 

\begin{wraptable}{r}{.4\linewidth}
\caption{Comparison to manifold mixup approach. ``$k$-manifold mixup'' test error on CIFAR-10 using ResNet18 architecture, averaged over 20 Monte Carlo trials ($\pm .03$ confidence). Compare Table \ref{fig:master} and observe that this ``manifold $k$-mixup'' matches the standard manifold mixup but \textbf{both underperform our proposed $k$-mixup, which achieves a 4.02\% error rate with $k = 16$}.  }
\centering\scriptsize\setlength\tabcolsep{2pt}
\vspace{-.1in}
\begin{tabular}{ |c|c|c|c|c|c|c|c|} 
 \hline
$k$ &  $\alpha = .1$ & $\alpha = .2$ & $\alpha = .5$ & $\alpha = 1$ & $\alpha = 10$  \\ \hline
$1$ & 5.00 & 4.73 & 4.41 & 4.26 & 4.81\\ \hline
$2$ & 5.09 & 4.78 & 4.44 & 4.31 & 4.71\\ \hline
$4$ & 5.01 & 4.78 & 4.45 & \textbf{4.25} & 5.02\\ \hline
$8$ & 5.08 & 4.78 & 4.42 & 4.28 & 4.71\\ \hline
$16$ & 5.14 & 4.82 & 4.42 & 4.31 & 4.53\\ \hline
$32$ & 5.14 & 4.91 & 4.58 & 4.47 & 4.54\\ \hline
\end{tabular}
\label{fig:manifold}
\end{wraptable}
Results for CIFAR-10 with a ResNet18 architecture are in Table \ref{fig:manifold}. Note that in this experiment, the $\alpha$'s shown are used for all $k$ without adjustment since the matches across $k$-samples happen at random layers and cannot be standardized as simply. Numbers in this experiment are averaged over 20 Monte Carlo trials ($\pm .03$ confidence on test performance). Manifold 1-mixup is matched by manifold $k$-mixup with $k=4$, but both are outperformed by standard $k$-mixup (Table \ref{fig:master}). \textbf{We therefore do not find a benefit to using ``$k$-manifold-mixup'' over our proposed $k$-mixup.} We also tried randomizing over (a) the outputs of all residual blocks and (b) the outputs of (lower-dimensional) deep residual blocks only, but found that performance of both manifold 1-mixup and manifold $k$-mixup degrades in these cases. This latter observation underscores that mixup in hidden layer manifolds is not guaranteed to be effective and can require tuning.

\section{Conclusions and Future Work} \label{sec:conclusions}

The experiments above demonstrate that $k$-mixup improves the generalization and robustness gains achieved by 1-mixup.  This is seen across a diverse range of datasets and network architectures. It is simple to implement, adds little computational overhead to conventional 1-mixup training, and may also be combined with related mixup variants. As seen in the theory presented in Sections \ref{sec:manCluster} and \ref{sec:regTheory}, as $k$ increases, the induced regularization more accurately reflects the local structure of the training data, especially in the manifold support and clustered settings. Empirical results show that performance is relatively robust to variations in $k$, especially when normalizing for similar perturbation distances (squared), ensuring that extensive tuning is not necessary. 



With the notable exception of the larger improvement on Tiny ImageNet, our experiments show the improvement on high-dimensional datasets is sometimes smaller than on lower dimensional datasets (recall that classic mixup also has somewhat small gains over ERM in these settings). This difference may be influenced by the diminishing value of Euclidean distance for characterizing dataset geometry in high dimensions \citep{aggarwal2001surprising}, but intriguingly this effect was not remedied by doing OT in the lower-dimensional manifolds created by the higher layers in our manifold mixup experiments. In future work we will consider alternative metric learning strategies, with the goal of identifying alternative high-dimensional metrics for displacement interpolation of data points. 

\section*{Acknowledgements}
The MIT Geometric Data Processing group acknowledges the generous support of Army Research Office grants W911NF2010168 and W911NF2110293, of Air Force Office of Scientific Research award FA9550-19-1-031, of National Science Foundation grants IIS-1838071 and CHS-1955697, from the CSAIL Systems that Learn program, from the MIT–IBM Watson AI Laboratory, from the Toyota–CSAIL Joint Research Center, from a gift from Adobe Systems, and from a Google Research Scholar award.

\bibliography{references,references_manual}
\bibliographystyle{tmlr}

\appendix
\section{Hyperparameter tuning}\label{app:hyper}
While as noted in the main text, the large value of $k=16$ with $\alpha$ optimized consistently performs well, increasing $k$ does not always improve performance monotonically. This, however, is to be expected in any real data scenario. Hence in practice, it is often appropriate to search over $k$. This is not too difficult as in our experiments we found it sufficient to only try powers of 2, and performance generally is smoothly varying over $\alpha$. Several search approaches work well: 
\begin{enumerate}
\item \textbf{Very simple}: set $\alpha=1$ and search over $k$ (powers of 2). It can be seen from our experiments that except for one UCI dataset this approach outperforms or matches the 1-mixup performance at any $\alpha$. 
\item \textbf{More complex but still relatively simple}: search over $\xi$ for 1-mixup, fix the best such $\xi$, and then search over $k$ (in powers of 2 up to $k=16$ or 32). This approach always outperforms 1-mixup and does not add much hyperparameter search overhead. 
\item \textbf{Full hyperparameter grid search}: Not too expensive for many neural networks, for instance a full grid search (with $k$ powers of 2) for ResNet-18 on CIFAR-10 requires only a few hours of time on our medium-sized cluster. If the model will be deployed in high-stakes or high-volume applications, full search would be feasible even for larger networks.
\end{enumerate}

\section{Test error on toy datasets.}
\label{app:toy}
\begin{table}[t]
\caption{Test error on toy datasets, averaged over 5 Monte Carlo trials.}
\centering\footnotesize\setlength\tabcolsep{1pt}
\begin{center}
\begin{tabular}{ccc}
\begin{tabular}{ |l|c|c|c|c|c| } 
 \hline
  &$\alpha\!=\!.25$ & $\alpha\!=\!1$ & $\alpha\!=\!4$ & $\alpha\!=\!16$ & $\alpha\!=\!64$   \\ \hline
  $k$ & $\xi = .10$ & $\xi = .15$ & $\xi = .20$ & $\xi = .23$ & $\xi = .25$ \\ \hline
$1$ & 5.01 & 7.88 & 9.85 & 10.28 & 9.75 \\ \hline
$2$ & 4.28 & 5.19 & 5.87 & 6.19 & 5.79 \\ \hline
$4$ & 2.93 & 3.34 & 4.10 & 3.86 & 3.92 \\ \hline
$8$ & 2.50 & 2.77 & 3.24 & 3.15 & 2.78 \\ \hline
$16$ & \textbf{2.42} & \textbf{2.26} & \textbf{2.54} & \textbf{2.43} & \textbf{2.30} \\ \hline
\end{tabular}
&
\begin{tabular}{ |l|c|c|c|c|c| }
 \hline
 & $\alpha\!=\!.25$ & $\alpha\!=\!1$ & $\alpha\!=\!4$ & $\alpha\!=\!16$ & $\alpha\!=\!64$   \\ \hline
 $k$ & $\xi = .50$ & $\xi = .78$ & $\xi = 1.0$ & $\xi = 1.1$ & $\xi = 1.2$ \\ \hline
1 & 0.00 & 0.00 & 6.79 & 49.60 & 49.80 \\ \hline
2 & 0.01 & 0.00 & 1.53 & 47.82 & 49.74 \\ \hline
4 & 0.00 & 0.00 & 1.41 & 8.25 & 9.58 \\ \hline
8 & 0.00 & 0.00 & 0.38 & 0.62 & 0.63 \\ \hline
16 & 0.00 & 0.00 & \textbf{0.08} & \textbf{0.07} & \textbf{0.22} \\ \hline
\end{tabular}
&
\begin{tabular}{ |l|c|c|c|c|c| } 
 \hline
 & $\alpha\!=\!.25$ & $\alpha\!=\!1$ & $\alpha\!=\!4$ & $\alpha\!=\!16$ & $\alpha\!=\!64$ \\ \hline
 $k$ & $\xi = 1.2$ & $\xi = 1.9$ & $\xi = 2.4$ & $\xi = 2.8$ & $\xi = 3.0$ \\ \hline
1 & 0.83 & 9.52 & 31.84 & 37.38 & 36.18 \\ \hline
2 & 0.49 & 3.30 & 28.41 & 32.21 & 32.75 \\ \hline
4 & \textbf{0.32} & 1.34 & 15.49 & 27.53 & 29.23 \\ \hline
8 & 0.35 & 0.45 & 2.67 & 8.14 & 2.76 \\ \hline
16 & 0.40 & \textbf{0.33} & \textbf{0.45} & \textbf{0.27} & \textbf{0.27} \\ \hline
\end{tabular}\\
(a) One Ring &
(b) Four Bars &
(c) Swiss Roll
\end{tabular}
\end{center}
\label{fig:toy}
\vspace{-.2in}
\end{table}

%
%
For completeness, we provide quantitative results for the toy datasets of Figures \ref{fig:toyEgs}, \ref{fig:coupling}, and \ref{fig:local_distribution} (denoted ``One Ring,'' ``Four Bars,'' and ``Swiss Roll'') in Table \ref{fig:toy}, continuing the intuition-building discussion in each of those figures. We used a fully-connected 3-layer neural network (130 and 120 hidden units). As the datasets are well-clustered and have no noise, smoothing is not needed for generalization and performance without mixup is typically 100\%. Rather than beating ERM, applying mixup to these datasets instead aims to build intuition, providing a view to the propensity of each variant to oversmooth, damaging performance. For each dataset and each perturbation size $\xi$, higher $k$-mixup outperforms 1-mixup, with $k=16$ providing the best performance in all but one instance. 
These results quantitatively confirm the intuition built in Figures \ref{fig:toyEgs}, \ref{fig:coupling}, and \ref{fig:local_distribution} that $k$-mixup regularization more effectively preserves these structures in data, limiting losses from oversmoothing at a fixed mean perturbation size.

\section{Additional Image Dataset Parameter Sweeps}
\label{sec:sweeps}
In this section, we show the parameter sweep tables for the remaining rows of Table \ref{fig:master}.

Table \ref{fig:CIFAR} below shows results for CIFAR-10, using the PreAct ResNet-18 architecture as in \cite{zhang_mixup_2018}. Again, increasing $k$ for fixed $\xi$ tends to improve generalization performance, especially in the high-$\xi$ regime. The best performance overall is achieved at $k=16$ with $\xi = 16.7$.
While the best $k$-mixup performance exceeds that of the best 1-mixup by 0.16\%, recall that in this setting, 1-mixup outperforms ERM by only 1.4\% \citep{zhang_mixup_2018}, so when combined with the low overall error rate, small gains are not surprising. Results for DenseNet and WideResNet architectures can be found in Table \ref{fig:others}, with the best $k$-mixup outperforming the best 1-mixup by 0.44\% and 0.28\% respectively. Note that in the case of DenseNet, $k=16$ outperforms or (statistically) matches $k=1$ for all values of $\xi$. 



\begin{table}[h]
\caption{Results for CIFAR-10 with ResNet18 architecture (no mixup (ERM) error: 5.6\%), averaged over 20 Monte Carlo trials ($\pm .03$ confidence on test performance). Difference between best $k$-mixup and best 1-mixup is 0.16\%; for fixed high $\alpha$ ($\alpha=100$), the improvement increases to 1.19\%.}
\centering\begin{center}\footnotesize\setlength\tabcolsep{2pt}
\begin{tabular}{ |l|c|c|c|c|c|c|c| } 
 \hline
 & $\alpha\!=\!.05$ & $\alpha\!=\!.1$ & $\alpha\!=\!.2$ & $\alpha\!=\!.5$ & $\alpha\!=\!1$ & $\alpha\!=\!10$ & $\alpha\!=\!100$  \\ \hline
 $k$ & $\xi=5.6$ & $\xi =7.4$ & $\xi = 10.0$ & $\xi = 13.8$ & $\xi =16.7$ & $\xi = 24.0$ & $\xi =27.2$\\ \hline
1 & 5.01 & 4.68 & 4.41 & 4.24 & 4.18 & 4.95 & 5.67\\ \hline
2 & 4.92 & 4.69 & 4.46 & 4.13 & 4.03 & 4.58 & 5.46\\ \hline
4 & 4.88 & 4.68 & 4.52 & 4.13 & 4.03 & 4.48 & 5.19\\ \hline
8 & 4.91 & 4.77 & 4.51 & 4.21 & 4.08 & 4.42 & 4.92\\ \hline
16 & 4.92 & 4.77 & 4.48 & 4.23 & \textbf{4.02} & 4.40 & 4.75\\ \hline
32 & 4.98 & 4.78 & 4.58 & 4.24 & 4.16 & 4.36 & 4.48\\ \hline
\end{tabular}
\end{center}
\label{fig:CIFAR}
\end{table}

\begin{table}[h]
\caption{CIFAR-10 test error for DenseNet-BC-190 and WideResNet-101 architectures. For DenseNet, the difference between best $k$-mixup and best 1-mixup is 0.44\%, for fixed high $\xi$ ($\alpha=20.2$), the improvement increases to 0.65\%. For WideResNet, the difference between best $k$-mixup and best 1-mixup is 0.28\%, for fixed high $\xi$ ($\xi=20.2$), the improvement increases to 1.25\%.} 
\centering\footnotesize\setlength\tabcolsep{2pt}
\begin{tabular}{c}
\begin{tabular}{ |l|c|c|c|c| } 
 \hline
 & $\alpha=.5$ & $\alpha=1$ & $\alpha = 2$ & $\alpha=4$\\\hline
 $k$ & $\xi=13.8$ & $\xi=16.7$ & $\xi = 19.0$ & $\xi=20.2$\\\hline
$1$ & 3.35 & 3.29 & 3.42 & 3.57\\ \hline
$16$ & 3.38 & 2.98 & \textbf{2.85} & 2.92\\ \hline
\end{tabular}\\
\pbox{20cm}{\vspace{1mm}\relax\ifvmode\centering\fi   (a) DenseNet-BC-190 architecture($\pm .03$ confidence, no mixup (ERM) error: 3.7\%)} \\\\
\begin{tabular}{ |l|c|c|c|c|c|c| } 
 \hline
 &$\alpha=.05$ & $\alpha=.2$ & $\alpha=.5$ & $\alpha=1$ & $\alpha=2$ & $\alpha=4$\\ \hline
$k$ &$\xi=5.6$ & $\xi=10.0$ & $\xi=13.8$ & $\xi=16.7$ & $\xi = 19.0$ & $\xi=20.2$\\\hline
$1$ & 11.54 & 11.53 & 11.62 & 11.78 & 12.25 & 12.99\\ \hline
$4$ & 11.36 & 11.47 & 11.27 & 11.41 & 11.59 & 12.16\\ \hline
$16$ & 11.53 & 11.59 & \textbf{11.25} & 11.34 & 11.38 & 11.74\\ \hline
\end{tabular}\\

\pbox{20cm}{\vspace{1mm}\relax\ifvmode\centering\fi(b) WideResNet-101 architecture ($\pm .09$ confidence, no mixup (ERM) error: 11.6\%)}
\end{tabular}\vspace{-.05in}
\label{fig:others}
\end{table}


Table \ref{fig:CIFAR100} shows results for CIFAR-100 and SVHN, with DenseNet-BC-190 and ResNet-18 architectures respectively. As before, for fixed $\xi$, the best performance is achieved for some $k > 1$. The improvement of the best $k$-mixup over the best 1-mixup is 2.14\% for CIFAR-100 and 0.15\% for SVHN. For fixed high $\alpha$, the $k$-mixup improvement over 1-mixup rises to 2.85\% for CIFAR-100 and 0.52\% for SVHN, possibly indicating that the OT matches yield better interpolation between classes, aiding generalization. 

\begin{table}[h]
\caption{CIFAR-100 (DenseNet-BC-190) and SVHN (ResNet18), averaged over 20 trials. }
\centering\begin{center}\footnotesize\setlength\tabcolsep{4pt}
\begin{tabular}{cc}
\begin{tabular}{ |c|c|c|c|c|c|c|c|} 
 \hline
 &  $\alpha\!=\!.5$ & $\alpha\!=\!1$ & $\alpha\!=\!2$ & $\alpha\!=\!4$   \\ \hline
 $k$ & $\xi=3.4$ & $\xi=4.2$ & $\xi=4.8$ & $\xi=5.4$ \\\hline
 1 & 28.52 & 27.76 & 20.45 & 21.53\\ \hline
8 & 18.35 & 18.78 & 19.16 & 19.71\\ \hline
16 & 18.33 & \textbf{18.31} & 18.53 & 18.85\\ \hline
\end{tabular}&
\begin{tabular}{ |c|c|c|c|c|c|c|c|} 
 \hline
 &  $\alpha\!=\!.1$ & $\alpha\!=\!.2$ & $\alpha\!=\!.5$ & $\alpha\!=\!1$ & $\alpha\!=\!10$  \\ \hline
$k$ & $\xi=3.8$ & $\xi=5.1$ & $\xi=6.9$ & $\xi=9.5$ & $\xi=11.6$  \\\hline
1 &  3.29 & 3.22 & 2.99 & 2.93 & 3.89 \\ \hline
2 &  3.32 & 3.19 & 2.96 & 2.79 & 3.65 \\ \hline
4 &  3.30 & 3.25 & 2.94 & \textbf{2.78} & 3.55 \\ \hline
8 &  3.28 & 3.20 & 3.01 & 2.86 & 3.44 \\ \hline
16 &  3.26 & 3.24 & 3.04 & 2.93 & 3.37 \\ \hline
\end{tabular}
\\
(a) CIFAR-100 error ($\pm .05$ confidence, \\ no mixup (ERM) error: 18.91\%)
&
(b) SVHN error ($\pm .02$ confidence, \\ 
& no mixup (ERM) error: 3.37\%)
\end{tabular}
\end{center}\vspace{-.1in}
\label{fig:CIFAR100}
\vspace{-.1in}
\end{table}




\section{Proof of Proposition \ref{prop:manifold}} 
\label{supp:prop:manifold}

Finite-sample convergence results for empirical measures (Theorem 9.1 of \cite{solomon2020k}, \cite{weed2019sharp}) imply that for an arbitrary sampling of $k$ points $\hat{\mu}_k$, we have
\[
W_2^2(\mu, \hat{\mu}_k) \leq O(k^{-2/d})
\]
with $1- 1/k^2$ probability. The triangle inequality then implies that the Wasserstein-2 distance between our batches of $k$ samples will tend to 0 at the same asymptotic rate, specifically
\[
W_2(\hat{\mu}^\gamma_k, \hat{\mu}^\zeta_k) \leq W_2(\mu, \hat{\mu}^\zeta_k) + W_2(\hat{\mu}^\gamma_k, \mu) \leq O(k^{-1/d})
\]
again with $1- 1/k^2$ probability. 

Now, 
recalling the definition of the optimal coupling permutation $\sigma(i)$, we have \[
W_2^2(\hat{\mu}^\gamma_k, \hat{\mu}^\zeta_k) = \frac{1}{k} \sum_{i=1}^k \|x_i^\gamma - x^\zeta_{\sigma(i)}\|_2^2.
\]
Hence,
\[
\frac{1}{k} \sum_{i=1}^k \|x_i^\gamma - x^\zeta_{\sigma(i)}\|_2^2 \leq O(k^{-2/d})
\]
 with $1- 1/k^2$ probability. Thus, for any $\mathcal{I} \subseteq [1,k]$ with $\|x_i^\gamma - x^\zeta_{\sigma(i)}\|_2^2 > k^{-1/d}$ for all $i \in \mathcal{I}$, 
 \[
O(k^{-2/d}) \geq \frac{1}{k} \sum_{i=1}^k \|x_i^\gamma - x^\zeta_{\sigma(i)}\|_2^2 > \frac{|\mathcal{I}|}{k} k^{-1/d},
 \]
 implying
 \[
 |\mathcal{I}| < O(k^{1 - 1/d}) = k \cdot O(k^{-1/d}) \leq k \delta,
 \]
where the last inequality holds for any $\delta \in (0,1]$ given $k$ large enough.

In essence, the fraction of matches that are long-distance (i.e. those in $\mathcal{I}$) is bounded by $\delta$ for large enough $k$. Under these conditions, the set of short-distance matches (i.e. those in $\bar{\mathcal I}$ the complement of $\mathcal{I}$) satisfy
\[
|\bar{\mathcal{I}}| \geq (1-\delta)k,
\]
where by definition, $\|x_i^\gamma - x^\zeta_{\sigma(i)}\|_2^2 \leq k^{-1/d}$ for all $i \in \bar{\mathcal{I}}$. Crucially, for any chosen $\epsilon$, we have $k^{-1/d} < \epsilon$ for $k$ large enough, so for $k$ large enough
\[
\|x_i^\gamma - x^\zeta_{\sigma(i)}\|_2 < \epsilon,\quad \forall i \in \bar{\mathcal{I}}.
\]
By definition of $k$-mixup, for all $i \in \bar{\mathcal{I}}$, the corresponding mixup interpolated point will be an interpolation between $x_i^\gamma$ and $x^\zeta_{\sigma(i)}$, i.e. 
\[
\lambda x_i^\gamma + (1 - \lambda) x^\zeta_{\sigma(i)}
\]
for $\lambda \in [0,1]$.
Since all $x_i^\gamma$ lie in $\mansupp$ and for all $i \in \bar{\mathcal{I}}$, $\|x_i^\gamma - x^\zeta_{\sigma(i)}\|_2 < \epsilon$, the mixup interpolated point will lie in $B_\epsilon (\mathcal{S})$. The proposition results.


\section{Proof of Lemma \ref{lem:balls}}
\label{supp:lem:balls}
Firstly, observe that the maximum number of within-cluster matches is 
\[
\sum_i \min(r_i, s_i)
\]
by definition, and the total number of matches overall must equal $\sum_i s_i = \sum_i r_i$. Hence
the number of cross-cluster matchings must be larger than or equal to
\[
\sum_i r_i - \sum_i \min(r_i, s_i) = \sum_i \max(0,r_i - s_i) ,
\]
equivalently,
\[
\sum_i s_i - \sum_i \min(r_i, s_i) = \sum_i \max( s_i - r_i,0) .
\]
Averaging these implies the number of cross-cluster matchings cannot be smaller than
\[
\frac{1}{2} \sum_i \left(\max(r_i - s_i, 0) + \max(0, s_i - r_i)\right) = 
\frac{1}{2}\sum_i \abs{r_i - s_i}.
\]

It remains to show that the number of cross-cluster matchings cannot exceed $\frac{1}{2}\sum_i \abs{r_i - s_i}$. We argue by contradiction and prove the result for $m=2$ first.  Suppose that the number of cross-cluster matchings exceeds $\abs{r_1 - s_1}$. Then by the pigeonhole principle (i.e. via the fact that all points must have exactly one other matched point), there must be at least two such matchings. WLOG, let us say these cross-cluster matches are between $p_i$ and $q_i$, and $p_{i+1}$ and $q_{i+1}$, where $p_i$ and $q_{i+1}$ must be in the same cluster, as are $q_i$ and $p_{i+1}$ since there are only two clusters. By our assumption on the spacing of the clusters, the cost of matching $p_i$ to $q_i$ and $p_{i+1}$ to $q_{i+1}$ at least
\[
> 2(2\Delta)^2.
\]
However, consider the alternative matching of $p_i$ to $q_{i+1}$ and $p_{i+1}$ to $q_i$. These are both intra-cluster matchings, which by our assumption on the radius of the clusters must have total cost smaller than or equal to 
\[
\leq 2 (2\Delta)^2.
\]
This matching has smaller cost than the inter-cluster matching strategy, so 
this contradicts optimality of the inter-cluster pairing.

In the scenario with $m$ clusters, an analogous argument works. As above, $\abs{r_i-s_i}$ is the number of cluster $i$ elements that must be matched in a cross-cluster fashion. If there are more than the minimum $\frac{1}{2}\sum_i \abs{r_i - s_i}$, by the pigeonhole principle, there must be additional cross-cluster matches that form a cycle in the graph over clusters. As above, the cost would be reduced by matching points within the same clusters, so this contradicts optimality. 

Since the number of cross-cluster matchings cannot be less than or exceed $\frac{1}{2}\sum_i \abs{r_i - s_i}$, it must equal this number and the lemma results.

\section{Proof of Theorem \ref{thm:clusters}}
\label{supp:thm:clusters}


By Lemma \ref{lem:balls}, if $r_i$ and $s_i$ denote the number of points in cluster $i$ from batch 1 and 2, the resulting number of cross-cluster matches is $\frac{1}{2}\sum_i \abs{r_i - s_i}$. As the samples for our batches are i.i.d., these random variables ($r_i$, $s_i$, where $r_i$ is independent of $s_i$) each (marginally) follow a simple binomial distribution $B(k,p_i)$. We can bound the expectation of this quantity with Jensen's inequality:
\[ \left(\E[\abs{r_i - s_i}]\right)^2 \leq \E[\abs{r_i-s_i}^2] = 2\Var(r_i) = 2kp_i (p_i-1) .
\]
This implies that
\[
\frac{\frac{1}{2}\sum_i \mathbb{E}\abs{r_i - s_i}}{k} \leq \frac{\sum_i \sqrt{2k p_i (1-p_i)}}{2k} = (2k)^{-\frac{1}{2}} \sum_{i=1}^m \sqrt{p_i(1 - p_i)},
\]
yielding the bound in the theorem.
It is also possible to get an exact rate with some hypergeometric identities \citep{katti_moments_1960}, but these simply differ by a constant factor, so we omit the exact expressions here.


\section{Proof of Theorem \ref{thm:closest}}
\label{supp:thm:closest}

By the smooth boundary and positive density assumptions, we know that $P(A_\delta) > 0$ and $P(B_\delta) > 0$ for any $
\delta >0$. Hence, for fixed $\delta$ and $k$ large enough, we know that with high probability the sets $A_\delta$ and $B_\delta$ each contain more points than the number of cross-cluster identifications. 

Now consider $A_\epsilon$ and $B_\epsilon$ for $\epsilon = 2\delta + (\max(R_A,R_B)^2)/(2D^2)$. 
All cross-cluster matches need to be assigned. The cost of assigning a cross-cluster match to a point in $A_\delta$ and a point in $B_\delta$ is at most $(1 + 2\delta)^2D^2$ (since we are using $W_2$). Furthermore, the cost of assigning a cross-cluster match that contains a point in $A$ outside $A_\epsilon$ and an arbitrary point in $B$ is at least $(1 + \epsilon)^2D^2$. Consider the difference between these two costs:
\[
(1 + \epsilon)^2D^2 - (1 + 2\delta)^2D^2 =  (2(\epsilon -2\delta) + \epsilon^2 - 4\delta^2)D^2 > 2D^2 \frac{\max(R_A,R_B)^2}{2D^2} \geq R_A^2.
\]
Since this difference $>0$ and we have shown $A_\delta$ contains sufficient points for handling all assignments, this assignment outside of $A_\epsilon$ will only occur if there is a within-cluster pair which benefits from using the available point in $A_\epsilon$ more than is lost by not giving it to the cross-cluster pair ($> R_A^2$). The maximum possible benefit gained by the within-cluster pair is the squared radius of $A$, i.e. $R_A^2$. Since we have shown that the lost cost for the cross-cluster pair is bigger than $R_A^2$, we have arrived at a contradiction. The proof is similar for the $B$ side.

We have thus shown that for $k$ large enough (depending on $\delta$), with high probability all cross-cluster matches have an endpoint each in $A_\epsilon$ and $B_\epsilon$ where $\epsilon = 2\delta + (\max(R_A,R_B)^2)/(2D^2)$. Setting $\delta = (\max(R_A,R_B)^2)/(4D^2)$ completes the proof.

\section{Proof of Theorem \ref{thm:featurePerturb}}
\label{sec:featurePerturb}
We mostly follow the notation and argument of \cite{zhang_how_2020} (c.f. Lemma 3.1), modifying it for our setting. There they consider sampling $\lambda \sim Beta(\alpha, \beta)$ from an asymmetric Dirichlet distribution. Here, we assume a symmetric Dirichlet distribution, such that $\alpha = \beta$, simplifying most of the expressions. The analogous results hold in the asymmetric case with simple modifications.

Consider the probability distribution $\tilde{\mathcal{D}}_\lambda$ with probability distribution: $\beta(\alpha + 1, \alpha)$. Note that this distribution is more heavily weighted towards 1 across all $\alpha$, and for $\alpha < 1$, there is an asymptote as you approach $1$. 

Let us adopt the shorthand notation $\tilde{x}_{i,\sigma_{\gamma \zeta}(i)}(\lambda) := \lambda x_i^\gamma + (1-\lambda) x_{\sigma{\gamma \zeta}}^\zeta$ for an interpolated feature point. The manipulations below are abbreviated, as they do not differ much for our generalization.
\begin{align*} 
\kmixup{k}(f) &=\frac{1}{k\binom{N}{k}^2} \E_{\lambda \sim \beta(\alpha,\alpha)}  \sum_{\gamma, \zeta = 1}^{\binom{N}{k}} \sum_{i = 1}^k h(f(\tilde{x}_{i,\sigma_{\gamma \zeta}(i)}(\lambda))) - (\lambda y_i^\gamma + (1-\lambda) y_{\sigma_{\gamma \zeta}(i)}^\zeta) f(\tilde{x}_{i,\sigma_{\gamma \zeta}(i)}(\lambda)) \\
&= \frac{1}{k\binom{N}{k}^2} \E_{\lambda \sim \beta(\alpha,\alpha)} \E_{B \sim Bern(\lambda)}\sum_{\gamma, \zeta = 1}^{\binom{N}{k}} \sum_{i = 1}^k B[h(f(\tilde{x}_{i,\sigma_{\gamma \zeta}(i)}(\lambda))) - y_i^\gamma  f(\tilde{x}_{i,\sigma_{\gamma \zeta}(i)}(\lambda))] \\ &\qquad + (1-B)[h(f(\tilde{x}_{i,\sigma_{\gamma \zeta}(i)}(\lambda))) - y_{\sigma{\gamma \zeta}(i)}^\zeta f(\tilde{x}_{i,\sigma_{\gamma \zeta}(i)}(\lambda))] \\
&= \frac{1}{k\binom{N}{k}^2}  \sum_{\gamma, \zeta = 1}^{\binom{N}{k}} \sum_{i = 1}^k \E_{\lambda \sim \beta(\alpha + 1 ,\alpha)} h(f(\tilde{x}_{i,\sigma_{\gamma \zeta}(i)}(\lambda))) - y_i^\gamma  f(\tilde{x}_{i,\sigma_{\gamma \zeta}(i)}(\lambda))
\end{align*}
For the third equality above, the ordering of sampling for $\lambda$ and $B$ has been swapped via conjugacy: $\lambda \sim \beta(\alpha,\alpha)$, $B|\lambda \sim Bern(\lambda)$ is equivalent to $B \sim \mathcal{U}\{0,1\}$, $\lambda | B \sim \beta(\alpha + B, \alpha + 1 - B)$. This is combined with the fact that $\tilde{x}_{i,\sigma_{\gamma \zeta}(i)}(1-\lambda) = \tilde{x}_{\sigma_{\gamma \zeta}(i), i}(\lambda)$ to get the last line above. 

Now we can swap the sums, grouping over the initial point to express this as the following: 
\begin{align*}
\kmixup{k}(f) = \frac{1}{N}  \sum_{i=1}^N \E_{\lambda \sim \beta(\alpha + 1 ,\alpha)} \E_{r \sim \mathcal{D}_i} h(f(\lambda x_i + (1-\lambda)r)) - y_i^\gamma  f(\lambda x_i + (1-\lambda)r), 
\end{align*}
where the probability distribution $\mathcal{D}_i$ is as described in the text.

The remainder of the argument performs a Taylor expansion of the loss term $h(f(\lambda x_i + (1-\lambda)r)) - y_i^\gamma  f(\lambda x_i + (1-\lambda)r)$ in terms of $1-\lambda$, and is not specific to our setting, so we refer the reader to Appendix A.1 of \citep{zhang_how_2020}.  for the argument.

\section{$k$-mixup as Mean Reversion followed by Regularization}
\label{sec:quadPerturb}

\begin{theorem} \label{thm:quadPerturb}
Define $(\tilde x_i^\gamma, \tilde y_i^\gamma)$ as
\begin{align*}
    \tilde x_i^\gamma &= \bar{x}_i^\gamma + \bar{\theta}(x_i^\gamma - \bar{x}_i^\gamma)\\
    \tilde y_i^\gamma &= \bar{y}_i^\gamma + \bar{\theta}(y_i^\gamma - \bar{y}_i^\gamma),
\end{align*}
where $\bar{x}_i^\gamma = \frac{1}{\binom{N}{k}} \sum_{\zeta = 1}^{\binom{N}{k}} x_{\sigma_{\gamma \zeta}(i)}^\zeta$ and $\bar{y}_i^\gamma = \frac{1}{\binom{N}{k}} \sum_{\zeta = 1}^{\binom{N}{k}} y_{\sigma_{\gamma \zeta}(i)}^\zeta$ are expectations under the matchings and $\theta \sim \beta_{[1/2,1]}(\alpha,\alpha)$. Further, denote the zero mean perturbations
\begin{align*}
    \tilde \delta^\gamma_i &= (\theta - \bar \theta)x_i^\gamma + (1- \theta) x_{\sigma_{\gamma\zeta}(i)}^\zeta - (1- \bar{\theta})\bar x_i^\gamma\\
    \tilde \epsilon^\gamma_i&= (\theta - \bar \theta)y_i^\gamma + (1- \theta) y^\zeta_{\sigma_{\gamma\zeta}(i)} - (1- \bar{\theta})\bar y_i^\gamma.
\end{align*}
Then the $k$-mixup loss can be written as
\[
\mathcal{E}^{OTmixup}_k(f) = \frac{1}{\binom{N}{k}} \sum_{\gamma = 1}^{\binom{N}{k}} \mathbb{E}_{\theta,\zeta} \left[ \frac{1}{k} \sum_{i=1}^k \ell(\tilde y_i^\gamma + \tilde \epsilon_i^\gamma, f(\tilde x_i^\gamma + \tilde \delta_i^\gamma))\right].
\]
\end{theorem}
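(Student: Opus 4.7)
The plan is to extend the decomposition of \cite{carratino_mixup_2020} to the $k$-mixup setting in three steps: symmetrize the beta weight to restrict it to $[1/2,1]$, perform an algebraic split of each interpolated sample around its matched mean, and verify that the residual perturbations are zero-mean in expectation. Starting from the definition of $\mathcal{E}^{mix}_k$, each inner summand is $\mathbb{E}_{\lambda \sim \beta(\alpha,\alpha)}$ of the loss evaluated at $\lambda x_i^\gamma + (1-\lambda)x_{\sigma_{\gamma\xi}(i)}^\xi$ and the analogous label interpolant. Using the symmetry of $\beta(\alpha,\alpha)$ about $1/2$, I rewrite this expectation as $\mathbb{E}_{\theta \sim \beta_{[1/2,1]}(\alpha,\alpha)}[\tfrac{1}{2}(g(\theta)+g(1-\theta))]$. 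Inside the outer sum over ordered pairs $(\gamma,\xi)$ and indices $i$, the $g(1-\theta)$ contribution at $(\gamma,\xi,i)$ equals the $g(\theta)$ contribution at $(\xi,\gamma,\sigma_{\gamma\xi}(i))$; since $\sigma_{\xi\gamma}=\sigma_{\gamma\xi}^{-1}$ (the $W_2$ cost is symmetric), this reindexing is a bijection on the summation domain and the two halves merge into a single expectation over $\theta\in[1/2,1]$ of the loss at $\theta x_i^\gamma + (1-\theta)x_{\sigma_{\gamma\xi}(i)}^\xi$.

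Next I perform the algebraic split. Inserting $\bar x_i^\gamma$ and $\bar\theta\bar x_i^\gamma$ into the interpolant yields
\[
\theta x_i^\gamma + (1-\theta) x_{\sigma_{\gamma\xi}(i)}^\xi = \bigl[\bar x_i^\gamma + \bar\theta(x_i^\gamma-\bar x_i^\gamma)\bigr] + \bigl[(\theta-\bar\theta)x_i^\gamma + (1-\theta)x_{\sigma_{\gamma\xi}(i)}^\xi - (1-\bar\theta)\bar x_i^\gamma\bigr],
\]
i.e., exactly $\tilde x_i^\gamma + \tilde\delta_i^\gamma$; the identity is verified by expanding and collecting coefficients of $x_i^\gamma$, $x_{\sigma_{\gamma\xi}(i)}^\xi$, and $\bar x_i^\gamma$. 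The identical manipulation in label space produces $\tilde y_i^\gamma + \tilde\epsilon_i^\gamma$, and substituting these expressions into the symmetrized loss gives the claimed form.

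Finally, to justify the ``zero-mean perturbation'' interpretation, I compute $\mathbb{E}_{\theta,\xi}[\tilde\delta_i^\gamma]$: the $(\theta-\bar\theta)x_i^\gamma$ term vanishes because $\bar\theta = \mathbb{E}_\theta[\theta]$, the $\xi$-average of $x_{\sigma_{\gamma\xi}(i)}^\xi$ equals $\bar x_i^\gamma$ by the definition of the latter, and the remaining scalar factors combine to $(1-\bar\theta)\bar x_i^\gamma - (1-\bar\theta)\bar x_i^\gamma = 0$; the same calculation yields $\mathbb{E}[\tilde\epsilon_i^\gamma]=0$. The main obstacle is the symmetrization step: one must carefully verify that inverting the optimal matching under the swap $(\gamma,\xi)\mapsto(\xi,\gamma)$ acts as a permutation of the inner summation index $i$ consistently for both features and labels, so that the two halves of the $\lambda$ integral genuinely merge into the $\theta\in[1/2,1]$ form. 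This relies on $\sigma_{\xi\gamma}=\sigma_{\gamma\xi}^{-1}$, a consequence of the symmetry of the squared Euclidean cost defining the $W_2$ linear assignment; once this is in place, the remainder of the argument is routine rearrangement.
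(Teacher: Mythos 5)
Your proposal is correct and follows essentially the same route as the paper's proof: symmetrize $\lambda\sim\beta(\alpha,\alpha)$ into $\theta\sim\beta_{[1/2,1]}(\alpha,\alpha)$, split each interpolant into the mean-reverted point $\tilde x_i^\gamma$ plus a perturbation $\tilde\delta_i^\gamma$, and use the independence of $\theta$ and $\xi$ to verify the perturbations are zero-mean. In fact you are more explicit than the paper on the one delicate step — that the reindexing under $(\gamma,\xi)\mapsto(\xi,\gamma)$ requires $\sigma_{\xi\gamma}=\sigma_{\gamma\xi}^{-1}$, which the paper leaves implicit when it invokes ``the symmetry of the sampling function.''
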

The mean $\bar{x}_i^\gamma$ being shifted toward is exactly the mean of the locally-informed distribution $\mathcal{D}_i$. Moreover, the covariance structure of the perturbations is detailed in the proof (simplified in Section \ref{supp:11.1}) and is now also derived from the local structure of the distribution, inferred from the optimal transport matchings. 

\begin{proof}

This argument is modelled on a proof of \cite{carratino_mixup_2020}, so we adopt analogous notation and highlight the differences in our setting and refer the reader to Appendix B.1 of that paper for any omitted details. First, let us use shorthand notation for the interpolated loss function:
\[
m_i^{\gamma\zeta}(\lambda)= \ell(f(\lambda x^\gamma_i + (1-\lambda) x^{\zeta}_{\sigma_{\gamma\zeta}(i)}) ,\lambda y^\gamma_i + (1-\lambda) y^{\zeta}_{\sigma_{\gamma\zeta}(i)}).
\]
Then the mixup objective may be written as:
\[
\kmixup{k}(f) = \frac{1}{k \binom{N}{k}^2} \sum_{\gamma, \zeta = 1}^{\binom{N}{k}} \sum_{i=1}^k \E_\lambda m_i^{\gamma \zeta}(\lambda).
\]
As $\lambda \sim \beta(\alpha, \alpha)$, we may leverage the symmetry of the sampling function and use a parameter $\theta \sim \beta_{[1/2,1]}(\alpha,\alpha)$ to write the objective as:
\[
\kmixup{k}(f) = \frac{1}{\binom{N}{k}} \sum_{\gamma = 1}^{\binom{N}{k}} \ell_i, \qquad \text{where} \, \, \ell_i = \frac{1}{k\binom{N}{k}} \sum_{\zeta=1}^{\binom{N}{k}} \sum_{i=1}^k \mathbb{E}_{\theta} m_i^{\gamma\zeta}(\theta).
\]
To obtain the form of the theorem in the text, we introduce auxiliary variables $\tilde{x}_i^\gamma, \tilde{y}_i^\gamma$ to represent the mean-reverted training points: 
\begin{align*}
    \tilde x_i^\gamma &= \E_{\theta,\zeta} \left[ \theta x_i^\gamma + (1-\theta)x_{\sigma_{\gamma\zeta}(i)}^\zeta \right]\\
    \tilde y_i^\gamma &= \E_{\theta,\zeta} \left[ \theta y_i^\gamma + (1-\theta)y_{\sigma_{\gamma\zeta}(i)}^\zeta \right],
\end{align*}
and $\tilde{\delta}_i^\gamma, \tilde{\epsilon}_i^\gamma$ to denote the zero mean perturbations about these points:
\begin{align*}
    \tilde \delta_i^\gamma &= \theta x_i^\gamma + (1-\theta)x_{\sigma_{\gamma \zeta} (i)}^\zeta - \E_{\theta,\zeta} \left[ \theta x_i^\gamma + (1-\theta)x_{\sigma_{\gamma\zeta}(i)}^\zeta \right]\\
    \tilde \epsilon_i^\gamma &= \theta y_i^\gamma + (1-\theta)y_{\sigma_{\gamma \zeta} (i)}^\zeta - \E_{\theta,\zeta} \left[ \theta y_i^\gamma + (1-\theta)y_{\sigma_{\gamma\zeta}(i)}^\zeta \right].
\end{align*}
These reduce to the simplified expressions given in the theorem if we recall that $\theta$ and $\zeta$ are independent random variables. Note that both the mean-reverted points and the perturbations are informed by the local distribution $\mathcal{D}_i$.
\end{proof}

\subsection{Covariance structure}\label{supp:11.1}

As in \citep{carratino_mixup_2020}, it is possible to come up with some simple expressions for the covariance structure of the local perturbations, hence we write out the analogous result below. As the argument is very similar to that in \citep{carratino_mixup_2020}, we omit it.

\begin{lemma}
Let $\sigma^2$ denote the variance of $\beta_{[1/2,1]}(\alpha,\alpha)$, and $\nu^2 := \sigma^2 + (1-\bar{\theta})^2$. Then the following expressions hold for the covariance of the zero mean perturbations:
\begin{align*}
    \E_{\theta,\zeta} \tilde{\delta}_i^\gamma (\tilde{\delta}_i^\gamma)^\top &= \frac{ \sigma^2(\tilde{x}_i^\gamma - \bar{x}_i^\gamma)(\tilde{x}_i^\gamma - \bar{x}_i^\gamma)^\top + \nu^2 \Sigma_{\tilde{x}_i^\gamma \tilde{x}_i^\gamma}}{\bar{\theta}^2} \\
    \E_{\theta,\zeta} \tilde{\epsilon}_i^\gamma (\tilde{\epsilon}_i^\gamma)^\top &= \frac{ \sigma^2(\tilde{y}_i^\gamma - \bar{y}_i^\gamma)(\tilde{y}_i^\gamma - \bar{y}_i^\gamma)^\top + \nu^2 \Sigma_{\tilde{y}_i^\gamma \tilde{y}_i^\gamma}}{\bar{\theta}^2} \\
    \E_{\theta,\zeta} \tilde{\delta}_i^\gamma (\tilde{\epsilon}_i^\gamma)^\top &= \frac{ \sigma^2(\tilde{x}_i^\gamma - \bar{x}_i^\gamma)(\tilde{y}_i^\gamma - \bar{y}_i^\gamma)^\top + \nu^2 \Sigma_{\tilde{x}_i^\gamma \tilde{y}_i^\gamma}}{\bar{\theta}^2}, 
\end{align*}
where $\Sigma_{\tilde{x}_i^\gamma \tilde{x}_i^\gamma}, \Sigma_{\tilde{y}_i^\gamma \tilde{y}_i^\gamma}, \Sigma_{\tilde{x}_i^\gamma \tilde{y}_i^\gamma}$ denote empirical covariance matrices. 
\end{lemma}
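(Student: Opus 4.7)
The plan is to reduce the three covariance formulas to a single routine bilinear computation after recentering $\tilde\delta_i^\gamma$ and $\tilde\epsilon_i^\gamma$ around the local means $\bar x_i^\gamma$ and $\bar y_i^\gamma$. Concretely, I would first algebraically rewrite
\[
\tilde\delta_i^\gamma = (\theta - \bar\theta)(x_i^\gamma - \bar x_i^\gamma) + (1-\theta)\bigl(x_{\sigma_{\gamma\xi}(i)}^\xi - \bar x_i^\gamma\bigr),
\]
and analogously for $\tilde\epsilon_i^\gamma$. To check the rewrite, just distribute the two parenthetical subtractions of $\bar x_i^\gamma$; the coefficient of $\bar x_i^\gamma$ collapses to $-(1-\bar\theta)$, recovering the definition given in Theorem~\ref{thm:quadPerturb}. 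This form is useful because each summand has a clean split between a $\theta$-dependent scalar and a $\xi$-dependent vector centered so that its $\xi$-expectation vanishes (by definition of $\bar x_i^\gamma$).

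Next, I would expand the outer product $\tilde\delta_i^\gamma(\tilde\delta_i^\gamma)^\top$ into four terms and take $\expect_{\theta,\xi}$ using that $\theta$ and $\xi$ are independent. The two cross terms each carry a factor $\expect_\xi[x_{\sigma_{\gamma\xi}(i)}^\xi - \bar x_i^\gamma]=0$ and therefore vanish. The remaining two terms reduce to
\[
\expect_\theta[(\theta-\bar\theta)^2]\,(x_i^\gamma-\bar x_i^\gamma)(x_i^\gamma-\bar x_i^\gamma)^\top + \expect_\theta[(1-\theta)^2]\,\expect_\xi\bigl[(x_{\sigma_{\gamma\xi}(i)}^\xi - \bar x_i^\gamma)(x_{\sigma_{\gamma\xi}(i)}^\xi - \bar x_i^\gamma)^\top\bigr].
\]
Identifying $\expect_\theta[(\theta-\bar\theta)^2]=\sigma^2$ and $\expect_\theta[(1-\theta)^2] = \sigma^2 + (1-\bar\theta)^2 = \nu^2$ handles the scalar factors. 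To reach the form stated in the lemma, I substitute $x_i^\gamma - \bar x_i^\gamma = (\tilde x_i^\gamma - \bar x_i^\gamma)/\bar\theta$, which pulls out the overall factor of $1/\bar\theta^2$; the second matrix is then, by definition, the empirical covariance $\Sigma_{\tilde x_i^\gamma\tilde x_i^\gamma}$ (rescaled by $\bar\theta^2$ in the same convention used in \cite{carratino_mixup_2020}, so that $\Sigma_{\tilde x_i^\gamma\tilde x_i^\gamma}/\bar\theta^2$ equals $\expect_\xi[(x_{\sigma_{\gamma\xi}(i)}^\xi - \bar x_i^\gamma)(x_{\sigma_{\gamma\xi}(i)}^\xi - \bar x_i^\gamma)^\top]$).

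The formula for $\expect_{\theta,\xi}\tilde\epsilon_i^\gamma(\tilde\epsilon_i^\gamma)^\top$ is obtained by replacing $x$ by $y$ throughout; nothing in the argument uses any property that distinguishes features from labels. For the cross-covariance $\expect_{\theta,\xi}\tilde\delta_i^\gamma(\tilde\epsilon_i^\gamma)^\top$, I would expand the bilinear form analogously: the two cross terms again drop because $\expect_\xi$ of each recentered vector is zero, and the surviving two terms produce $\sigma^2(x_i^\gamma-\bar x_i^\gamma)(y_i^\gamma-\bar y_i^\gamma)^\top$ and $\nu^2\,\expect_\xi[(x_{\sigma_{\gamma\xi}(i)}^\xi - \bar x_i^\gamma)(y_{\sigma_{\gamma\xi}(i)}^\xi - \bar y_i^\gamma)^\top]$, which reassemble into the claimed ratio with $\Sigma_{\tilde x_i^\gamma\tilde y_i^\gamma}$ in the numerator.

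The computation itself is fully routine; the only real obstacle is bookkeeping of the notation. In particular one must carefully unpack what the empirical covariances $\Sigma_{\tilde x_i^\gamma\tilde x_i^\gamma}$, $\Sigma_{\tilde y_i^\gamma\tilde y_i^\gamma}$, $\Sigma_{\tilde x_i^\gamma\tilde y_i^\gamma}$ mean here --- following \cite{carratino_mixup_2020}, these are the $\bar\theta^2$-rescaled second moments of the matched counterparts $x_{\sigma_{\gamma\xi}(i)}^\xi$ and $y_{\sigma_{\gamma\xi}(i)}^\xi$ around the local means $\bar x_i^\gamma$ and $\bar y_i^\gamma$, and it is this specific normalization that makes the factor $1/\bar\theta^2$ in the lemma apply uniformly to both summands rather than only the first.
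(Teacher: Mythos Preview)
Your proposal is correct and is precisely the computation one would carry out by adapting the argument of \cite{carratino_mixup_2020} to the locally-informed setting. The paper itself does not give a proof of this lemma: it simply states that ``the argument is very similar [to \cite{carratino_mixup_2020}], so we omit it.'' Your recentering of $\tilde\delta_i^\gamma$ around $\bar x_i^\gamma$, the use of independence of $\theta$ and $\xi$ to kill the cross terms, and the substitution $x_i^\gamma-\bar x_i^\gamma=(\tilde x_i^\gamma-\bar x_i^\gamma)/\bar\theta$ are exactly the steps that argument requires, and your care in flagging the normalization convention on $\Sigma_{\tilde x_i^\gamma\tilde x_i^\gamma}$ is appropriate since the paper leaves that definition implicit.
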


Note again, that the covariances above are locally-informed, rather than globally determined. Lastly, there is also a quadratic expansion performed about the mean-reverted points $\tilde{x}_i^\gamma, \tilde{y}_i^\gamma$ with terms that regularize $f$, but we omit this result as the regularization of Theorem \ref{thm:featurePerturb} is more intuitive (c.f. Theorem 2 of \citep{carratino_mixup_2020}).




\section{Vicinal Distribution of $k$ Nearest-Neighbors}\label{sec:knnImage}
We provide this section for explicit illustration of why $k$ nearest-neighbors is not a sensible alternative for $k$-mixup. In this setting, we consider drawing two sets of $k$ samples $\{(x_{i}^\gamma, y_{i}^\gamma)\}_{i=1}^k$ and $\{(x_{i}^\zeta, y_{i}^\zeta)\}_{i=1}^k$, match each $x_{i}^\gamma$ with its nearest-neighbor in $\{x_{i}^\zeta\}_{i=1}^k$, and then interpolate these matches to obtain the vicinal distribution. In Figure \ref{fig:kNN_distribution}, we provide example images of the generated matching distributions. Note that in general, these matching distributions have very limited cross-cluster matchings. The ``Swiss Roll'' dataset is an exception due to the interwoven arms of the spirals. Additionally, note that the intra-cluster matchings are much more concentrated around the points in question, and do not perform as much smoothing within clusters.

\begin{figure}[t]
    \begin{center}
    \begin{tabular}{c|c|c}
    \includegraphics[width=.25\linewidth]{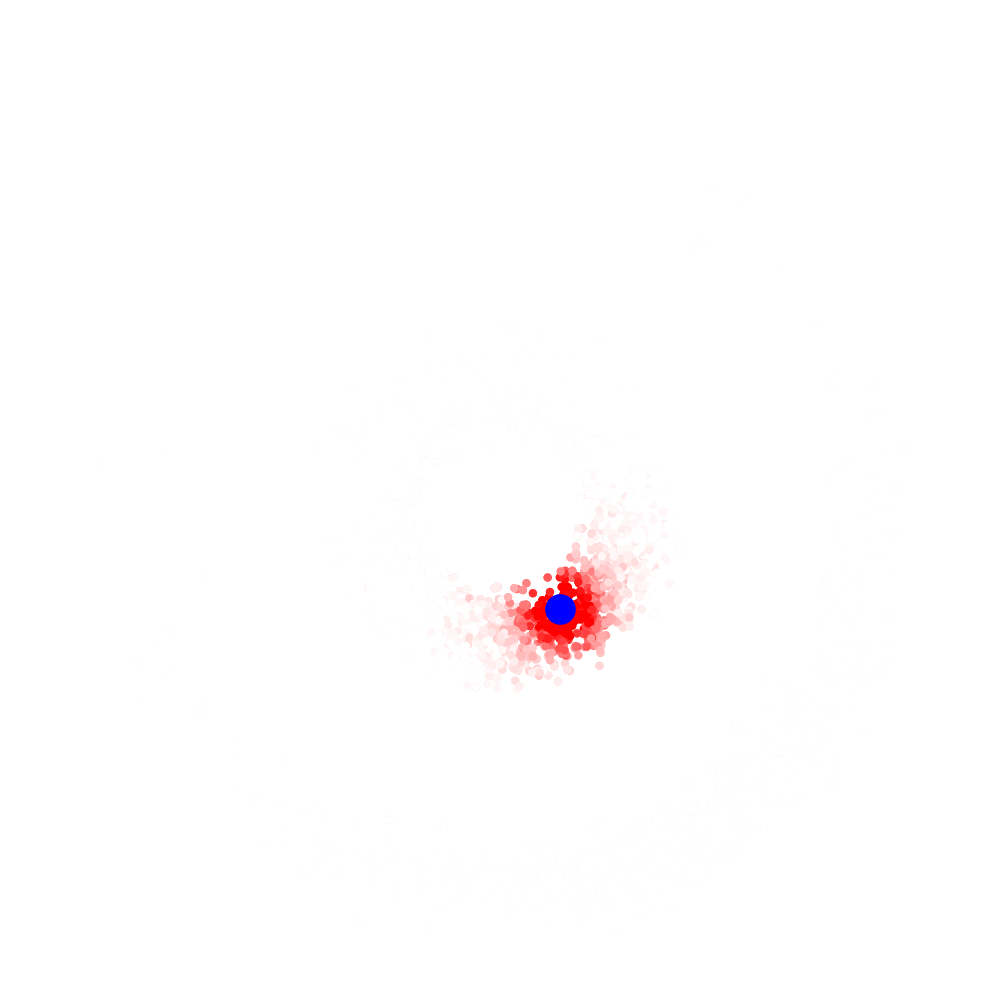} &
    \includegraphics[width=.25\linewidth]{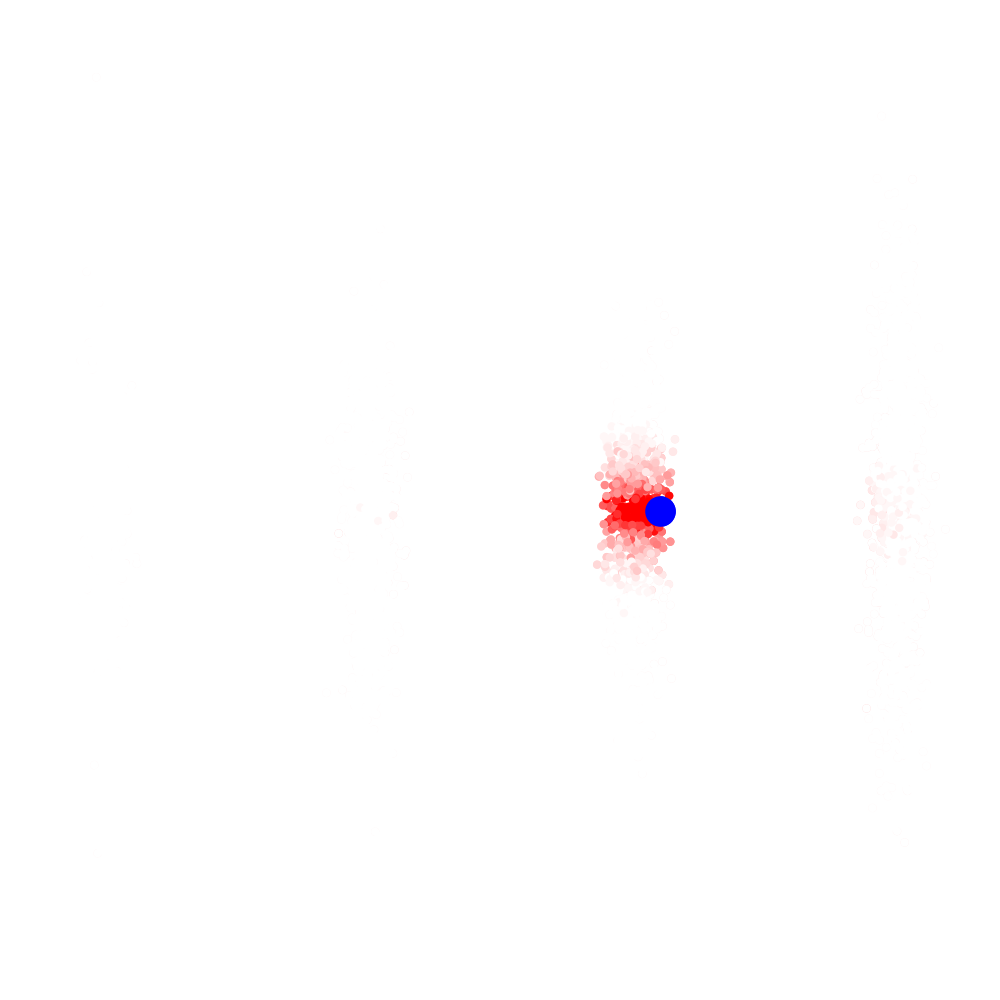} &
    \includegraphics[width=.25\linewidth]{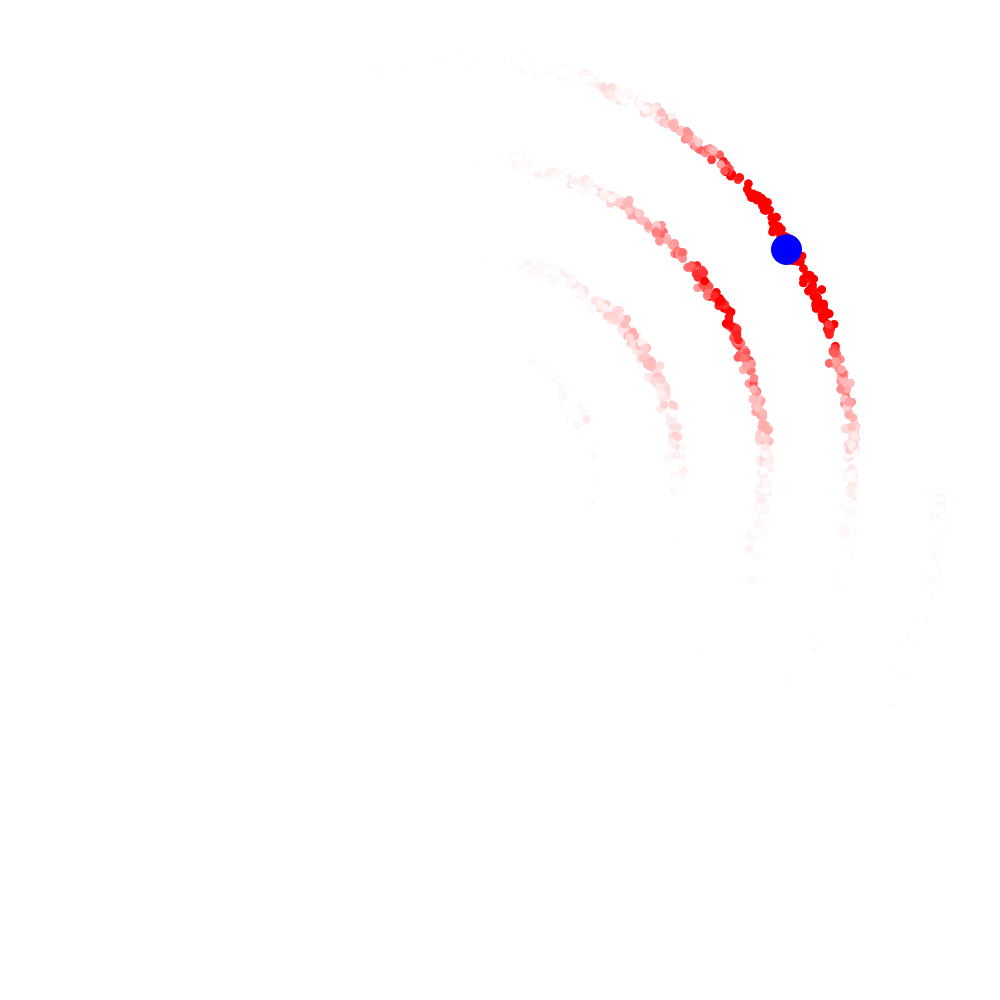}
    \end{tabular}
    \end{center}\vspace{-.1in}
    \caption{Example of matching distributions generated by $k=32$-nearest-neighbors, using the same point as in Figure \ref{fig:local_distribution}.} \label{fig:kNN_distribution}
\end{figure}

\end{document}